\declaretheorem[name=Lemma]{lemma}
\declaretheorem[name=Definition]{definition}
\declaretheorem[name=Corollary]{corollary}
\newcommand\numberthis{\addtocounter{equation}{1}\tag{\theequation}}
\DeclarePairedDelimiterX{\norm}[1]{\lVert}{\rVert}{#1}
\DeclareMathOperator*{\argmax}{\arg\!\max}
\title{Towards Robust Bisimulation Metric Learning}
\author{%
  Mete Kemertas\thanks{Equal contribution.} \\
  Department of Computer Science\\
  University of Toronto\\
  \texttt{kemertas@cs.toronto.edu} \\
  \And
  Tristan Aumentado-Armstrong$^*$ \\
  Department of Computer Science\\
  University of Toronto\\
  \texttt{taumen@cs.toronto.edu} \\
}
\begin{document}

\maketitle

\begin{abstract}
  Learned representations in deep reinforcement learning (DRL) have to extract task-relevant information from complex observations, balancing between robustness to distraction and informativeness to the policy. Such stable and rich representations, often learned via modern function approximation techniques, can enable practical application of the policy improvement theorem, even in high-dimensional continuous state-action spaces. Bisimulation metrics offer one solution to this representation learning problem, by collapsing functionally similar states together in representation space, which promotes invariance to noise and distractors. In this work, we generalize value function approximation bounds for on-policy bisimulation metrics to non-optimal policies and approximate environment dynamics. Our theoretical results help us identify embedding pathologies that may occur in practical use. In particular, we find that these issues stem from an underconstrained dynamics model and an unstable dependence of the embedding norm on the reward signal in environments with sparse rewards. Further, we propose a set of practical remedies: (i) a norm constraint on the representation space, and (ii) an extension of prior approaches with intrinsic rewards and latent space regularization. Finally, we provide evidence that the resulting method is not only more robust to sparse reward functions, but also able to solve challenging continuous control tasks with observational distractions, where prior methods fail.
\end{abstract}

\section{Introduction}
Complex reinforcement learning (RL) problems require the agent to infer a useful representation of the world state from observations.
The utility of this representation can be measured by how readily it can be used to learn and enact a policy that solves a specific task.
As an example, consider a robot using visual perception to pour a cup of coffee.
The clutter on the counter and the colour of the walls have little effect on the correct action;
even more pertinent aspects, such as two mugs with slightly different patterns and shapes,  should be treated nearly the same by the policy.
In other words, many states are  \textit{equivalent} for a given task, with their differences being task-irrelevant distractors.
Thus, a natural approach to generalization across environmental changes is constructing a representation that is invariant to such nuisance variables -- effectively ``grouping'' such equivalent states together.

One method of obtaining such a representation
uses the notion of a \textit{bisimulation metric} (BSM) \cite{Ferns2004Metrics,Ferns2011Bisimulation}.
The goal is to abstract the states into a new metric space, which groups ``behaviourally'' similar states in a task-dependent manner.
In particular, bisimilar states (i.e., in this case, those close under the BSM) should yield similar stochastic reward sequences, given the same action sequence.
In a recursive sense, this requires bisimilar states to have similar
(i) immediate rewards and (ii) transition distributions in the BSM space (e.g., see Eq.\ \ref{eq:ferns-bisim-metric}).
Thus, ideally, the resulting representation space contains the information necessary to help the policy maximize return and little else.

Recently, the Deep Bisimulation for Control (DBC) algorithm \cite{zhang2021invariant} showed how to map observations into a learned space that follows a policy-dependent (or on-policy) BSM (PBSM) for a given task \cite{castro2020scalable}, resulting in a powerful ability to ignore distractors within high-dimensional observation spaces.
Using a learned dynamics model, DBC trains an encoder to abstract states to follow the estimated PBSM, using the aforementioned recursive formulation for metric learning.

However, this method can suffer from issues of robustness under certain circumstances.
Conceptually, differentiating state encodings requires trajectories with different rewards.
In other words, two states that only differ in rewards far in the future should still not be bisimilar.
Yet, in practice, the encoder and policy are learned simultaneously as the agent explores;
hence, in the case of \textit{uninformative} (e.g., sparse or near-constant) rewards, it may incorrectly surmise that two states are bisimilar, as most trajectories will look very similar.
This leads to a premature \textit{collapse} of the embedding space, grouping states that should not be grouped.
On the other hand, we can show that the formulation of the metric learning loss is susceptible to embedding \textit{explosion} if the representation space is left unconstrained\footnote{Embedding collapse and explosion issues also appear in the computer vision literature \cite{hermans2017defense, schroff2015facenet, wu2017sampling}.}.
In our work, we build upon the DBC model in an attempt to tackle both problems:
(1) we address embedding explosion by stabilizing the state representation space via a norm constraint
and
(2) we prevent embedding collapse by altering the encoder training method.

\textbf{Contributions}\hspace{0.25em}
We
(i) generalize theoretical value function approximation bounds to a broader family of BSMs, with learned dynamics; (ii) analyze the BSM loss formulation and identify sources of potential embedding pathologies (explosion and collapse);
(iii) show how to constrain representation space to obtain better 
 embedding quality guarantees;
(iv) devise theoretically-motivated training modifications based on intrinsic motivation and inverse dynamics regularization;
and
(v) show on a variety of tasks that our method is more robust to both sparsity and distractors in the environment.

\section{Related Work}
Our work builds on two major aspects of DRL:
representation learning, 
    particularly task-specific encodings following the BSM,
and 
methods for sparse reward environments, 
    (e.g., intrinsic motivation).

\textbf{Representation Learning for Control}\hspace{0.25em}
In the context of RL, there is a spectrum of representations 
	from task-agnostic to task-specialized.
One task-agnostic approach to learning latent representation spaces 
	uses a reconstruction loss.
In particular, some methods utilize this approach for model-based planning 
	\cite{kurutach2018learning,zhang2019solar,hafner2019learning}, 
	while others do so for model-free RL 
	\cite{nair2018visual,gelada2019deepmdp,lee2019stochastic}.  
Such approaches often utilize generative models, 
	which have shown use for learning via simulation, 
	in addition to representation learning \cite{ha2018world,watter2015embed}.
Instead of fully reconstructing an input, other approaches use self-supervised representation learning methods, especially contrastive learning algorithms 
	\cite{okada2020dreaming,laskin2020curl,stooke2020decoupling}.

While task-agnostic reconstruction is reusable and transferable, it can store unnecessary or distracting details. In contrast, \textit{state abstraction methods}  \cite{Li06towardsa} can obtain compact state representations that ignore irrelevant information, by finding (approximately) equivalent aggregated MDPs that have (nearly) identical optimal policies. To learn such MDP homomorphisms \cite{ravindran2004thesis, ravindran2004approximate}, previous efforts have exploited various structural regularities (e.g., equal rewards and transition probabilities \cite{givan2003equivalence, taylor2008bounding}, state-action symmetries and equivariances \cite{taylor2008bounding, van2020plannable, vanderpol2020symmetries}) and assumptions \cite{biza2019online}. 
\textit{State similarity metric learning} \cite{le2021metrics} is closely related, as one can aggregate $\epsilon$-close states to produce a new MDP. 
For example, inter-state distances can be assigned based on the difference between policy distributions conditioned on respective states \cite{agarwal2021pse, ghosh2018learning}. 
Inverse dynamics prediction can also help encourage holding only information relevant to what the agent can control \cite{agrawal2016learning,pathak2017curiosity}. 
For model-based RL, prior work constructed equivalence classes of \textit{models} \cite{grimm2020value}, as well as other abstractions \cite{corneil2018efficient,franccois2019combined}, to save computation and memory.

In our work, we build on the use of task-specific \textit{bisimulation relations} \cite{givan2003equivalence},
which define equivalence classes of states based on rewards and transition probabilities.
However, obtaining equivalence classes is brittle, as our estimates of these quantities may be inexact.
Hence, Ferns et al. \cite{Ferns2004Metrics,Ferns2011Bisimulation, ferns2014bisimulation} instead turn to a bisimulation \textit{metric} between states, 
    which can vary smoothly as the rewards and transition probabilities change.
More recently, Zhang et al.\ \cite{zhang2021invariant} devised Deep Bisimulation for Control (DBC), which applies a metric learning approach to enforce its representation to approximately follow bisimulation-derived state aggregation.
While DeepMDP \cite{gelada2019deepmdp} proves that its representation provides an upper bound on the bisimulation distance, DBC \cite{zhang2021invariant} directly imposes the bisimulation metric structure on the latent space.
Herein, 
    we extend prior analysis and methods for representation learning 
    using PBSMs,
    with the goal of improving robustness and general performance.
In particular, 
    we focus on preventing embedding pathologies,
    in the context of uninformative reward signals and distraction.
    
\textbf{Reinforcement Learning under Uninformative Rewards}\hspace{0.25em}
Many real-world tasks suffer from uninformative rewards, 
	meaning that the agent receives sparse (mostly zero) or static (largely constant) rewards throughout its trajectories. 
For example, games like Montezuma's Revenge only provide reward signals after a large number of steps -- and the agent is likely not to receive one at all, in most cases.
Thus, while common, this situation significantly increases the difficulty of the RL task, as the agent receives little signal about its progress.
To deal with such situations, a common tactic is to encourage the agent to explore in the absence of an extrinsic reward signal. 
This is often done by providing \textit{intrinsic motivation} via a self-derived reward, resulting in ``curiosity-driven'' behaviour \cite{burda2018large,aubret2019survey}. 
Such approaches include
surprise \cite{achiam2017surprise,bellemare2016unifying} 
	(where experiencing unexpected dynamics is rewarded),
and 
empowerment \cite{salge2014empowerment,gregor2016variational} 
	(where the agent prefers states in which it has more control).

In this work, we specifically target the case of uninformative rewards, 
	showing that the DBC model is especially susceptible to instability or collapse for such tasks, and consider ways to ameliorate this issue.
In particular, we utilize the forward prediction error as an intrinsic reward \cite{pathak2017curiosity,burda2018large,stadie2015incentivizing}, thus augmenting the sparse extrinsic rewards and encouraging exploration.
We also regularize the latent space by learning an inverse dynamics model on top of it, which does not rely on the extrinsic reward signal and has previously been used for distraction-robust representation learning in DRL \cite{agrawal2016learning,pathak2017curiosity}.

\section{Technical Approach}
In this section, we first describe our notation and problem setting, and review prior definitions of bisimulation metrics and relevant metric learning objectives. Then, we consider theoretical connections to value functions and convergence properties, and show that these formulations may be susceptible to (i) instabilities in optimization due to embedding explosion (i.e., large norms), and (ii) convergence to trivial solutions where all states are represented as a single point (embedding collapse). Based on our analysis, we propose an extension of deep bisimulation metric learning, 
with theoretically-motivated constraints on the optimization objective, including alterations to the forward dynamics model, intrinsic motivation, and inverse dynamics regularization.
\subsection{Preliminaries}
\label{sec:prelim} 
In this work, we consider a discounted Markov Decision Process (MDP) given by a tuple, $\langle \mathcal{S}, \mathcal{A}, \mathcal{P}, R, \rho_0 \rangle$. At the beginning of each episode, an initial state, $\mathbf{s}_0 \in \mathcal{S}$, is sampled from the initial-state distribution $\rho_0$ over the state space $\mathcal{S}$. Then, at each discrete time-step $t\geq0$, an agent takes an action, $\mathbf{a}_t \in \mathcal{A}$, according to a policy $\pi(\mathbf{a}_t|\mathbf{s}_t)$. As a result, the MDP transitions to the next state according to a transition distribution $\mathcal{P}(\mathbf{s}_{t+1}|\mathbf{s}_{t}, \mathbf{a}_{t})$. The agent collects a scalar reward, $r_t=R(\mathbf{s}_t, \mathbf{a}_t)$, from the environment according to a bounded reward function, $R: \mathcal{S} \times \mathcal{A} \rightarrow [R_{\mathrm{min}}, R_{\mathrm{max}}]$. In infinite- and long-horizon settings, a discount factor, $\gamma \in [0, 1)$, is used to calculate the agent's discounted return in a given episode, $G = \sum_{t \geq 0}\gamma^tr_t$. RL algorithms aim to find an optimal policy, $\pi^* \coloneqq \argmax_{\pi \in \Pi}\mathbb{E}[G]$, for a class of stationary policies $\Pi$. In high-dimensional, continuous state (or observation) spaces, this learning problem is rendered tractable via a state encoder (e.g., a neural network), $\phi: \mathcal{S} \rightarrow \mathbb{R}^n$, 
which is used to
learn a policy of the form $\pi(\mathbf{a}|\phi(\mathbf{s}))$. 

The following (pseudo\footnote{Pseudo-metrics are a generalization of metrics that allow distinct points to have zero distance. For simplicity, we broadly use the term "metric" in the remainder of this paper at the cost of imprecision.})-metric, based on the Wasserstein metric (see Appendix \ref{sec:wasserstein} for a review), is of particular relevance to this work. Distances are assigned to state pairs, $(\mathbf{s}_i, \mathbf{s}_j) \in \mathcal{S} \times \mathcal{S}$, according to a pessimistic measure \cite{castro2010using} of how much the rewards collected in each state and the respective transition distributions differ. A distance of zero for a pair implies state aggregation, or \textit{bisimilarity}.
\begin{definition}[Bisimulation metric for continuous MDPs, Thm. 3.12 of \cite{Ferns2011Bisimulation}]
\label{def:bisim2}
The following metric exists and is unique, given $R: \mathcal{S} \times \mathcal{A} \rightarrow [0, 1]$ and $c \in (0, 1)$ for continuous MDPs:
\begin{align}
\label{eq:ferns-bisim-metric}
    d(\mathbf{s}_i, \mathbf{s}_j) = \max_{\mathbf{a} \in \mathcal{A}} (1-c)|R(\mathbf{s}_i, \mathbf{a}) - R(\mathbf{s}_j, \mathbf{a})| ~+~ c W_1(d)(\mathcal{P}(\cdot|\mathbf{s}_i, \mathbf{a}), \mathcal{P}(\cdot|\mathbf{s}_j, \mathbf{a})).
\end{align}
\end{definition}
An earlier version of this metric for finite MDPs used separate weighting constants $c_R, c_T \geq 0$ for the first and second terms respectively, and required that $c_R + c_T \leq 1$ \cite{Ferns2004Metrics}. Here, when the weighting constant $c_T$ of the $W_1(d)$ term is in $[0, 1)$, the RHS is a contraction mapping, $\mathcal{F}(d): \mathfrak{met} \rightarrow \mathfrak{met}$, in the space of metrics. Then, the Banach fixed-point theorem can be applied to ensure the existence of a unique metric, which also ensures convergence via fixed-point iteration for finite MDPs. For more details, we refer the reader to \cite{Ferns2004Metrics, Ferns2011Bisimulation} and the proof of Remark \ref{remark:rem1} in Appendix \ref{sec:proofs}. 
Notice that $c$ (or $c_T$) determines a timescale for the bisimulation metric, weighting the importance of current versus future rewards, analogously to the discount factor $\gamma$. 
More recently, an \textit{on-policy} bisimulation metric (also called $\pi$-bisimulation) was proposed to circumvent the intractibility introduced by taking the $\max$ operation over high-dimensional action spaces (e.g., continuous control), as well as the inherent pessimism of the policy-independent form \cite{castro2020scalable}.
\begin{definition}[On-policy bisimulation metric \cite{castro2020scalable}]
Given a fixed policy $\pi$, the following on-policy bisimulation metric exists and is unique:
\label{def:on-policy}
\begin{align} 
\label{eq:on-policy-bisim-metric}
d_{\pi}(\mathbf{s}_i, \mathbf{s}_j) \coloneqq |r^{\pi}_i - r^{\pi}_j| ~+~ \gamma W_1(d_\pi)(\mathcal{P}^\pi(\cdot| \mathbf{s}_i), \mathcal{P}^\pi(\cdot| \mathbf{s}_j)),
\end{align}
where $r_i^\pi \coloneqq \mathbb{E}_{\mathbf{a} \sim \pi}[R(\mathbf{s}_i, \mathbf{a})]$ and $\mathcal{P}^\pi(\cdot|\mathbf{s}_i) \coloneqq \mathbb{E}_{\mathbf{a} \sim \pi}[\mathcal{P}(\cdot|\mathbf{s}_i, \mathbf{a})]$.
\end{definition}
Zhang et al. \cite{zhang2021invariant} proposed to learn a similar on-policy bisimulation metric directly in the embedding space via an MSE objective. They proposed an algorithm for jointly learning a policy $\pi(\mathbf{a}|\phi(\mathbf{s}))$ with an on-policy bisimulation metric. Below, we define a generalized variant of their objective:
\begin{align}
J(\phi) &\coloneqq \frac{1}{2}\mathbb{E}\left[\left(\widehat{d}_{\pi, \phi}(\mathbf{s}_i, \mathbf{s}_j) - |r^{\pi}_i - r^{\pi}_j| ~-~ \gamma
W_2(\norm{\cdot}_{q_1})(\widehat{\mathcal{P}}^\pi(\cdot| \phi(\mathbf{s}_i)), \widehat{\mathcal{P}}^\pi(\cdot| \phi(\mathbf{s}_j)))\right)^2\right], \label{eq:general-bisim-loss}
\end{align}
where $\widehat{d}_{\pi, \phi}(\mathbf{s}_i, \mathbf{s}_j) \coloneqq \norm{\phi(\mathbf{s}_i)-\phi(\mathbf{s}_j)}_{q_2}$ and they used $(q_1=2, q_2=1)$. Notice that the recursion induced by this objective is different from prior metrics in three ways; (i) a $W_2$ metric was used instead of a $W_1$ metric since $W_2$ has a convenient closed form for Gaussian distributions when $q_1=2$, (ii) the distance function used for the bisimulation metric and the Wasserstein metric are different ($L_1$ and $L_2$ respectively), and (iii) a forward dynamics model is used instead of the ground truth dynamics. While they may introduce practical benefits, these differences violate the conditions under which the existence of a unique bisimulation metric has been proven \cite{castro2020scalable, Ferns2004Metrics, Ferns2011Bisimulation, ferns2014bisimulation}. Thus, we will (i) assume a unique metric exists for all Wasserstein metrics $W_p$ when necessary (see Assumption \ref{assum:p-wass}), (ii) study losses that use a matching metric, i.e., $q_1=q_2=q$, %
and (iii) introduce a constraint on forward models in Sec. \ref{sec:lipschitz-forward}. 

Next, we note that they recommend the use of stop gradients for the $W_2$ term. The resulting gradient updates may be considered as approximate fixed-point iteration in the space of metrics:
\begin{align}
    \widehat{\mathcal{F}}(\widehat{d}_{\pi}(\phi_n, \widehat{\mathcal{P}}))(\mathbf{s}_i, \mathbf{s}_j)  \coloneqq 
    \widehat{d}_{\pi}(\mathbf{s}_i, \mathbf{s}_j; \phi_n + \alpha_n \nabla_{\phi}J(\phi_n), \widehat{\mathcal{P}}),
\end{align}
where $\alpha_n$ is a learning rate. However, in practice, $\pi$ and $\widehat{\mathcal{P}}$ may also be  updated in training, which is of particular relevance when they have form $\pi(\mathbf{a}|\phi(\mathbf{s}))$ and $\widehat{\mathcal{P}}(\phi(\mathbf{s}^\prime)|\phi(\mathbf{s}), \mathbf{a})$ as in \cite{zhang2021invariant}. In the next section, we will discuss conditions under which joint updates to a policy $\pi$ and a metric defined by state encodings $\phi$ may or may not converge in practical settings:
\begin{align}
    \lim_{n \rightarrow \infty}\mathcal{F}^{(n)}(\pi, \widehat{d}_{\pi, \phi}) \stackrel{?}{=} d_{\pi^*},
\end{align}
where $d_{\pi^*}$ is the on-policy bisimulation metric for the optimal policy $\pi^*$.

\subsection{Theoretical Analysis}
\label{sec:theoretical}
In this section, we generalize prior value function bounds for bisimulation metrics to cases where arbitrary weighting constants $c_R, c_T$ are used, and further derive a value function approximation (VFA) bound for $V^\pi$ rather than $V^*$ unlike prior work \cite{Ferns2004Metrics, Ferns2011Bisimulation, zhang2021invariant}. We then describe constraints on forward dynamics models for convergence to a unique metric in joint training, and derive VFA bounds as a function of forward dynamics modelling error. Then, we discuss potential pitfalls in on-policy bisimulation metric learning, which lead us to connecting its weaknesses with sparse rewards. Our findings motivate our proposed remedies to the issues we identify in on-policy bisimulation; we recommend a particular norm constraint on the latent space, and motivate the use of inverse dynamics and intrinsic motivation outlined in Sec. \ref{sec:intrinsic}. All proofs  are provided in Appendix \ref{sec:proofs}.
\subsubsection{Value Function Bounds}
\label{sec:vfa}
An important feature of bisimulation metrics is their relation to value functions since a provably tight connection implies guarantees in VFA. Similarly to previous bounds for policy-independent bisimulation metrics \cite{Ferns2004Metrics, Ferns2011Bisimulation}, Castro \cite{castro2020scalable} showed that given any two states $(\mathbf{s}_i, \mathbf{s}_j)$, the following bound holds for on-policy bisimulation metrics (see Definition \ref{def:on-policy}): $|V^\pi(\mathbf{s}_i) - V^\pi(\mathbf{s}_j)| \leq d_\pi(\mathbf{s}_i, \mathbf{s}_j)$. As discussed earlier, differently from previous approaches, Zhang et al. \cite{zhang2021invariant} used a 2-Wasserstein metric due to practical reasons. 
Here, in order to generalize previous value function bounds, we assume the existence of a unique bisimulation metric for $p$-Wasserstein metrics. %

\begin{restatable}[A1, $p$-Wasserstein bisimulation metric]{assumption}{existence}
\label{assum:p-wass}
For a given $c_T \in [0, 1)$, $c_R \in [0, \infty)$ and $p \geq 1$, the following bisimulation metric exists and is unique:
\begin{align}
\label{eq:p-Wass-bisim-metric}
    d_{\pi}(\mathbf{s}_i, \mathbf{s}_j) &\coloneqq c_R|r^{\pi}_i - r^{\pi}_j| ~+~ c_T
W_{p}(d_{\pi})(\mathcal{P}^\pi(\cdot| \mathbf{s}_i), \mathcal{P}^\pi(\cdot| \mathbf{s}_j)).
\end{align}
\end{restatable}
\begin{restatable}{remark}{existenceremark}
\label{remark:rem1}
If $p=1$, or both the environment and policy are deterministic, \hyperref[assum:p-wass]{A1} holds.
\end{restatable}
\begin{restatable}[Generalized value difference bound]{theorem}{generalizedvdbound}
\label{thm:generalized-ct}
Let the reward function be bounded as $R \in [0, 1]$. For an on-policy bisimulation metric given by Eq. \eqref{eq:p-Wass-bisim-metric}, for any $c_T \in [0, 1)$ and $p \geq 1$, define $\overline{\gamma} = \min(c_T, \gamma)$. Given \hyperref[assum:p-wass]{A1}, the bisimulation distance between a pair of states upper-bounds the difference in their values:
\begin{align}
    c_R|V^\pi(\mathbf{s}_i) - V^\pi(\mathbf{s}_j)| ~\leq~ d_\pi(\mathbf{s}_i, \mathbf{s}_j) +   \frac{2c_R(\gamma - \overline{\gamma})}{(1-\gamma)(1-c_T)}, ~\forall(\mathbf{s}_i, \mathbf{s}_j) \in \mathcal{S} \times \mathcal{S}.
\end{align}
\end{restatable}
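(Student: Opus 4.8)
The plan is to bound the value difference by routing it through an auxiliary value function whose discount factor is small enough to be absorbed by the bisimulation operator, and to pay separately for the mismatch between this auxiliary value and $V^\pi$. Set $\overline{\gamma} = \min(c_T, \gamma)$ and define the $\overline{\gamma}$-discounted on-policy value function $V^\pi_{\overline{\gamma}}(\mathbf{s}) \coloneqq \mathbb{E}_\pi\!\left[\sum_{t \geq 0}\overline{\gamma}^t r_t \mid \mathbf{s}_0 = \mathbf{s}\right]$, so that $V^\pi = V^\pi_\gamma$. I would then assemble the result from three pieces: (i) a ``clean'' bound $c_R|V^\pi_{\overline{\gamma}}(\mathbf{s}_i) - V^\pi_{\overline{\gamma}}(\mathbf{s}_j)| \leq d_\pi(\mathbf{s}_i, \mathbf{s}_j)$; (ii) a uniform estimate of $|V^\pi(\mathbf{s}) - V^\pi_{\overline{\gamma}}(\mathbf{s})|$; and (iii) a triangle inequality linking the two, which is where the factor of two appears, since the mismatch from (ii) is paid at both $\mathbf{s}_i$ and $\mathbf{s}_j$.

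For piece (i), I would generalize Castro's argument \cite{castro2020scalable}, which handles $(c_R, c_T, p) = (1, \gamma, 1)$. Let $\mathcal{F}^\pi$ denote the operator on the right-hand side of Eq.\ \eqref{eq:p-Wass-bisim-metric}; by \hyperref[assum:p-wass]{A1} (and the contraction/monotonicity facts used in the proof of Remark \ref{remark:rem1}) its unique fixed point is $d_\pi$, it is monotone in its argument, and $(\mathcal{F}^\pi)^{(n)}(e') \to d_\pi$ for any bounded pseudometric $e'$. Take $e(\mathbf{s}, \mathbf{s}') \coloneqq c_R|V^\pi_{\overline{\gamma}}(\mathbf{s}) - V^\pi_{\overline{\gamma}}(\mathbf{s}')|$, a bounded pseudometric with respect to which $c_R V^\pi_{\overline{\gamma}}$ is $1$-Lipschitz by construction. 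Expanding $V^\pi_{\overline{\gamma}}$ through its Bellman equation, applying the triangle inequality, then Kantorovich--Rubinstein duality for $W_1(e)$, then $W_1(e) \leq W_p(e)$ (Jensen applied to the optimal $W_p$-coupling), and finally $\overline{\gamma} \leq c_T$, should give $e(\mathbf{s}_i, \mathbf{s}_j) \leq c_R|r^\pi_i - r^\pi_j| + c_T W_p(e)(\mathcal{P}^\pi(\cdot|\mathbf{s}_i), \mathcal{P}^\pi(\cdot|\mathbf{s}_j)) = \mathcal{F}^\pi(e)(\mathbf{s}_i, \mathbf{s}_j)$. Monotonicity then yields $e \leq \mathcal{F}^\pi(e) \leq (\mathcal{F}^\pi)^{(2)}(e) \leq \cdots \to d_\pi$, i.e.\ $e \leq d_\pi$, which is (i).

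For piece (ii), since $r_t \in [0,1]$ and $0 \leq \overline{\gamma} \leq \gamma$, each $(\gamma^t - \overline{\gamma}^t) r_t$ is nonnegative and at most $\gamma^t - \overline{\gamma}^t$, so
\begin{align}
0 \;\leq\; V^\pi(\mathbf{s}) - V^\pi_{\overline{\gamma}}(\mathbf{s}) \;=\; \mathbb{E}_\pi\!\left[\sum_{t \geq 0}(\gamma^t - \overline{\gamma}^t) r_t \,\Big|\, \mathbf{s}_0 = \mathbf{s}\right] \;\leq\; \sum_{t \geq 0}(\gamma^t - \overline{\gamma}^t) \;=\; \frac{\gamma - \overline{\gamma}}{(1-\gamma)(1-\overline{\gamma})} \;\leq\; \frac{\gamma - \overline{\gamma}}{(1-\gamma)(1-c_T)},
\end{align}
using $1 - \overline{\gamma} \geq 1 - c_T > 0$ in the last step. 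Combining (i)--(iii) via $c_R|V^\pi(\mathbf{s}_i) - V^\pi(\mathbf{s}_j)| \leq c_R|V^\pi_{\overline{\gamma}}(\mathbf{s}_i) - V^\pi_{\overline{\gamma}}(\mathbf{s}_j)| + c_R|V^\pi(\mathbf{s}_i) - V^\pi_{\overline{\gamma}}(\mathbf{s}_i)| + c_R|V^\pi(\mathbf{s}_j) - V^\pi_{\overline{\gamma}}(\mathbf{s}_j)|$ would then deliver the stated bound; note that when $c_T \geq \gamma$ we have $\overline{\gamma} = \gamma$, $V^\pi_{\overline{\gamma}} = V^\pi$, and the correction term vanishes, recovering $c_R|V^\pi(\mathbf{s}_i) - V^\pi(\mathbf{s}_j)| \leq d_\pi(\mathbf{s}_i, \mathbf{s}_j)$.

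I expect the main obstacle to be piece (i): making rigorous that $e \leq \mathcal{F}^\pi(e)$ under the nonstandard setup — a $p$-Wasserstein term rather than $W_1$, an arbitrary reward weight $c_R$, and $c_T$ decoupled from $\gamma$ — and that iterating the monotone operator $\mathcal{F}^\pi$ from $e$ converges to $d_\pi$. The enabling facts are $W_1(m) \leq W_p(m)$ for $p \geq 1$ and any ground metric $m$, the monotonicity of $W_p$ in the ground cost, Kantorovich--Rubinstein duality, and the contraction property supplied by \hyperref[assum:p-wass]{A1}/Remark \ref{remark:rem1}; once these are in place the remaining manipulations are routine. The factor of $2$ is simply the cost of the triangle-inequality split in (iii).
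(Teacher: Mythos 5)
Your proposal is correct and takes essentially the same route as the paper: the paper also passes through the auxiliary value function $V^\pi_{c_T}$ (equivalently your $V^\pi_{\overline{\gamma}}$ when $c_T < \gamma$), proves your piece (i) as its Lemma \ref{lemma:p-wasserstein} via a Castro-style argument using Kantorovich--Rubinstein duality, $W_1 \leq W_p$, and the $1$-Lipschitzness of the rescaled value function, and then combines it with the discount-mismatch bound and the same triangle-inequality split that produces the factor of $2$. The only cosmetic differences are that the paper cites Lemma \ref{lemma:diff-gamma} (Petrik's bound) where you rederive it directly, and proves piece (i) by induction on parallel fixed-point iterates of $V$ and $d_\pi$ rather than your $e \leq \mathcal{F}^\pi(e)$ monotone-iteration argument.
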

Note that Thm. 3 of \cite{castro2020scalable} is a special case with $c_R=p=1$ and $c_T=\gamma$. 
Suppose $c_T \geq \gamma$; we observe that for a degenerate metric $d_\pi=0$, the corresponding value function is a constant $V^\pi(\mathbf{s})= c$. We will discuss the dangers posed by this relation in Section \ref{sec:dangers}.
\begin{restatable}[Generalized VFA bound]{theorem}{generalizedvfabound}
\label{thm:generalized-ct-vfa}
Let rewards be bounded as $R \in [0, 1]$ and $\Phi: \mathcal{S} \rightarrow \widetilde{\mathcal{S}}$ be a function mapping states to a finite partitioning $\widetilde{\mathcal{S}}$ such that $\Phi(\mathbf{s}_i) = \Phi(\mathbf{s}_j) \Rightarrow d_\pi(\mathbf{s}_i, \mathbf{s}_j) \leq 2\epsilon$, which produces an aggregated MDP $\langle \widetilde{\mathcal{S}}, \mathcal{A}, \widetilde{\mathcal{P}}, \widetilde{R}, \widetilde{\rho_0} \rangle$. For any $c_T \in [0, 1)$, let $\overline{\gamma} = \min(c_T, \gamma)$. Given \hyperref[assum:p-wass]{A1},
\begin{align}
    |V^\pi(\mathbf{s}) - \widetilde{V}^\pi(\Phi(\mathbf{s}))| ~\leq~ \frac{2\epsilon}{c_R(1-\overline{\gamma})} + \frac{2(\gamma - \overline{\gamma})}{(1-\gamma)(1-c_T)}, ~\forall\mathbf{s} \in \mathcal{S}.
\end{align}
\end{restatable}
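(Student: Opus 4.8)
The plan is to reduce the statement to Theorem~\ref{thm:generalized-ct} applied to a single enlarged on-policy chain, with the twist that the auxiliary bisimulation metric is run at the \emph{shorter} of the two horizons, $\overline{\gamma}=\min(c_T,\gamma)$. First I would pin down the aggregated MDP: for each class $\widetilde{\mathbf{s}}\in\widetilde{\mathcal{S}}$ fix a representative $\mathbf{s}_{\widetilde{\mathbf{s}}}\in\Phi^{-1}(\widetilde{\mathbf{s}})$ and set $\widetilde{R}(\widetilde{\mathbf{s}},\mathbf{a})=R(\mathbf{s}_{\widetilde{\mathbf{s}}},\mathbf{a})$ and $\widetilde{\mathcal{P}}(\cdot|\widetilde{\mathbf{s}},\mathbf{a})=\Phi_{*}\mathcal{P}(\cdot|\mathbf{s}_{\widetilde{\mathbf{s}}},\mathbf{a})$ (the pushforward under $\Phi$), so that $\widetilde{r}^{\pi}(\widetilde{\mathbf{s}})=r^{\pi}(\mathbf{s}_{\widetilde{\mathbf{s}}})$, $\widetilde{\mathcal{P}}^{\pi}(\cdot|\widetilde{\mathbf{s}})=\Phi_{*}\mathcal{P}^{\pi}(\cdot|\mathbf{s}_{\widetilde{\mathbf{s}}})$, and $\widetilde{R}\in[0,1]$. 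Let $d^{\overline{\gamma}}$ denote the metric of Eq.~\eqref{eq:p-Wass-bisim-metric} with $c_T$ replaced by $\overline{\gamma}$; it exists and is unique by \hyperref[assum:p-wass]{A1} (invoked with ``$c_T$''$=\overline{\gamma}\in[0,1)$), and since $\overline{\gamma}\le c_T$ and the bisimulation operator is monotone in both the metric and its contraction constant, $d^{\overline{\gamma}}\le d_\pi$ pointwise; hence within any class, $d^{\overline{\gamma}}(\mathbf{s}_i,\mathbf{s}_j)\le d_\pi(\mathbf{s}_i,\mathbf{s}_j)\le 2\epsilon$.

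Next I would form $\widehat{\mathcal{M}}$, the disjoint union of the original and aggregated on-policy chains (common $\gamma$ and $c_R$), with $\overline{\gamma}$-bisimulation metric $\widehat{d}^{\,\overline{\gamma}}$ (\hyperref[assum:p-wass]{A1}). Because the two copies never communicate, $\widehat{d}^{\,\overline{\gamma}}$ restricts to $d^{\overline{\gamma}}$ on the original copy, and the value function $\widehat{V}^{\pi}$ restricts to $V^\pi$ on the original copy and to $\widetilde{V}^\pi$ on the aggregated copy. Applying Theorem~\ref{thm:generalized-ct} to $\widehat{\mathcal{M}}$ with ``$c_T$''$=\overline{\gamma}$ (so that its ``$\overline{\gamma}$'' equals $\min(\overline{\gamma},\gamma)=\overline{\gamma}$), at the pair $(\mathbf{s},\Phi(\mathbf{s}))$, gives
\begin{align}
c_R\,\bigl|V^\pi(\mathbf{s})-\widetilde{V}^\pi(\Phi(\mathbf{s}))\bigr|\;\le\;\widehat{d}^{\,\overline{\gamma}}\bigl(\mathbf{s},\Phi(\mathbf{s})\bigr)\;+\;\frac{2c_R(\gamma-\overline{\gamma})}{(1-\gamma)(1-\overline{\gamma})},\qquad\forall\,\mathbf{s}\in\mathcal{S}.\label{eq:planbound}
\end{align}
It then remains to show $D:=\sup_{\mathbf{s}\in\mathcal{S}}\widehat{d}^{\,\overline{\gamma}}(\mathbf{s},\Phi(\mathbf{s}))\le 2\epsilon/(1-\overline{\gamma})$. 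For this I would unfold the fixed-point equation for $\widehat{d}^{\,\overline{\gamma}}$ at $(\mathbf{s},\Phi(\mathbf{s}))$, writing $\mathbf{s}_0:=\mathbf{s}_{\Phi(\mathbf{s})}$: the reward contribution is $c_R|r^\pi(\mathbf{s})-r^\pi(\mathbf{s}_0)|$, and for the transport term I would take an optimal $W_p(d^{\overline{\gamma}})$-coupling $\lambda$ of $\mathcal{P}^\pi(\cdot|\mathbf{s})$ and $\mathcal{P}^\pi(\cdot|\mathbf{s}_0)$, push its second marginal through $\Phi$ to obtain a coupling of $\mathcal{P}^\pi(\cdot|\mathbf{s})$ with $\widetilde{\mathcal{P}}^\pi(\cdot|\Phi(\mathbf{s}))$, and bound $\widehat{d}^{\,\overline{\gamma}}(\mathbf{x},\Phi(\mathbf{y}))\le d^{\overline{\gamma}}(\mathbf{x},\mathbf{y})+\widehat{d}^{\,\overline{\gamma}}(\mathbf{y},\Phi(\mathbf{y}))$ under $\lambda$. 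Minkowski's inequality in $L^p(\lambda)$ then yields $W_p(\widehat{d}^{\,\overline{\gamma}})(\mathcal{P}^\pi(\cdot|\mathbf{s}),\widetilde{\mathcal{P}}^\pi(\cdot|\Phi(\mathbf{s})))\le W_p(d^{\overline{\gamma}})(\mathcal{P}^\pi(\cdot|\mathbf{s}),\mathcal{P}^\pi(\cdot|\mathbf{s}_0))+D$, so that the reward contribution plus $\overline{\gamma}$ times the first transport term collapses back to $d^{\overline{\gamma}}(\mathbf{s},\mathbf{s}_0)\le 2\epsilon$, leaving $\widehat{d}^{\,\overline{\gamma}}(\mathbf{s},\Phi(\mathbf{s}))\le 2\epsilon+\overline{\gamma}D$; taking a supremum over $\mathbf{s}$ gives $D\le 2\epsilon/(1-\overline{\gamma})$. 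Substituting into \eqref{eq:planbound}, dividing by $c_R$, and observing that $2(\gamma-\overline{\gamma})/\bigl((1-\gamma)(1-\overline{\gamma})\bigr)$ equals $2(\gamma-\overline{\gamma})/\bigl((1-\gamma)(1-c_T)\bigr)$ in both regimes (either $\overline{\gamma}=c_T$, or $\overline{\gamma}=\gamma$ and the numerator vanishes) recovers the stated bound.

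The step I expect to be the main obstacle is the transport estimate inside the bound on $D$: it compares a $p$-Wasserstein distance between a measure on $\mathcal{S}$ and one on the quotient $\widetilde{\mathcal{S}}$, and the clean collapse — reward $+\,\overline{\gamma}W_p(d^{\overline{\gamma}})$ recombining into the single distance $d^{\overline{\gamma}}(\mathbf{s},\mathbf{s}_0)$, with only an $\overline{\gamma}D$ tail — hinges on coupling through $\mathrm{id}\times\Phi$ together with Minkowski's inequality, and extracting the factor $2\epsilon$ (rather than a weaker multiple of $\epsilon$) requires using the fixed-point identity for $\widehat{d}^{\,\overline{\gamma}}$ itself rather than naive triangle inequalities on $d_\pi$. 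The genuinely new ingredient relative to Theorem~\ref{thm:generalized-ct} is running everything at the shortened timescale $\overline{\gamma}$: this is precisely what contracts the recursion for $D$ at rate $\overline{\gamma}$ (producing the $(1-\overline{\gamma})^{-1}$ factor) while leaving the Theorem~\ref{thm:generalized-ct} residual in the reported form. The remaining pieces — the monotonicity $d^{\overline{\gamma}}\le d_\pi$ (via value iteration from the zero metric, using \hyperref[assum:p-wass]{A1} for uniqueness), the non-communication facts about $\widehat{\mathcal{M}}$, and the estimates carried over from Theorem~\ref{thm:generalized-ct} — are routine.
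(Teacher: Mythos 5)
Your proposal reaches the stated bound by a genuinely different route than the paper. The paper first proves an ``aligned-timescale'' aggregation bound (Lemma~\ref{lemma:on-policy-vfa}) directly: it expands the Bellman equations for $V^\pi$ and $\widetilde{V}^\pi$ over the $\xi$-average aggregated MDP of Ferns et al., uses the $1$-Lipschitzness of $c_R V^\pi$ with respect to $d_\pi$ (from Lemma~\ref{lemma:p-wasserstein}) together with the Kantorovich dual of $W_1$ to absorb the reward and transition discrepancies into $d_\pi(\mathbf{s},\mathbf{z})\le 2\epsilon$, and closes a recursion in $\norm{V^\pi-\widetilde{V}^\pi}_\infty$; the case $c_T<\gamma$ is then handled by comparing $V^\pi_\gamma$ with $V^\pi_{c_T}$ via the discount-mismatch bound (Lemma~\ref{lemma:diff-gamma}). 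You instead form the disjoint union of the original and aggregated on-policy chains, apply the already-proved state-pair bound (Theorem~\ref{thm:generalized-ct}) with the metric run at the shortened timescale $\overline{\gamma}$ to the cross pair $(\mathbf{s},\Phi(\mathbf{s}))$, and separately bound the cross bisimulation distance by $2\epsilon/(1-\overline{\gamma})$ through an $(\mathrm{id}\times\Phi)$-coupling, Minkowski's inequality, and the fixed-point recursion $D\le 2\epsilon+\overline{\gamma}D$; your constants collapse to the paper's because $\gamma-\overline{\gamma}=0$ whenever $\overline{\gamma}\neq c_T$. Your route buys modularity (the discount mismatch is inherited from Theorem~\ref{thm:generalized-ct} rather than re-derived, and the aggregation error is isolated as a clean geometric statement about $\widehat{d}^{\,\overline{\gamma}}(\mathbf{s},\Phi(\mathbf{s}))$), whereas the paper's route buys economy of hypotheses: it never needs a bisimulation metric on the aggregated or union space, only $d_\pi$ on $\mathcal{S}$.

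Two caveats, neither fatal but worth recording. First, you invoke Assumption~\hyperref[assum:p-wass]{A1} (and implicitly the contraction/fixed-point-iteration machinery behind it, for the pseudometric property, the monotonicity $d^{\overline{\gamma}}\le d_\pi$, and the restriction identity on the union) for auxiliary constructions --- the $\overline{\gamma}$-metric and the disjoint-union chain --- which does not literally follow from A1 as stated for the original MDP; it does hold under the sufficient conditions of Remark~\ref{remark:rem1} ($p=1$ or determinism, which your representative-based pushforward preserves), and finiteness of $D$ needs Lemma~\ref{lemma:diam} on the union, but the paper's proof avoids needing any of this. Second, your aggregated MDP is defined by per-cell representatives, while the paper's proof (following Thm.~3.21 of Ferns et al.) uses the $\xi$-average construction; since the theorem statement leaves the aggregation unspecified this is acceptable, but strictly you are proving the bound for a different aggregated MDP than the one the paper intends, and porting your coupling step to the averaged transitions would require an extra convexity argument for $W_p$ with $p>1$.
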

This result generalizes previous performance bounds for $V^*$ to non-optimal policies. Previous bounds also assumed $c_T \geq \gamma$, while we generalize to arbitrary $c_T \in [0, 1)$. Here, the second term of the upper bound characterizes the penalty paid for ``myopic" bisimulation (i.e., $c_T < \gamma$) in VFA error guarantees.
We further show in the next section that $c_T < \gamma$ may be desirable when approximate forward models are used (see Thm. \ref{theorem:valboundmodelerror}). Indeed,  in Appendix C, we not only connect large $c_T$ to high variance in embedding norms, but also find surprisingly strong results when empirically investigating the use of $c_T < \gamma$. We speculate that with sufficient modelling capacity and a critic $V(\phi)$ being trained based on $\gamma$, the metric space may hold information about the environment regarding both timescales $\gamma$ and $c_T$.

\subsubsection{Bisimulation Metrics with Approximate Forward Dynamics}
\label{sec:lipschitz-forward}
\begin{figure}
\centering
\begin{minipage}{.42\textwidth}
  \centering
    \includegraphics[width=1.\linewidth]{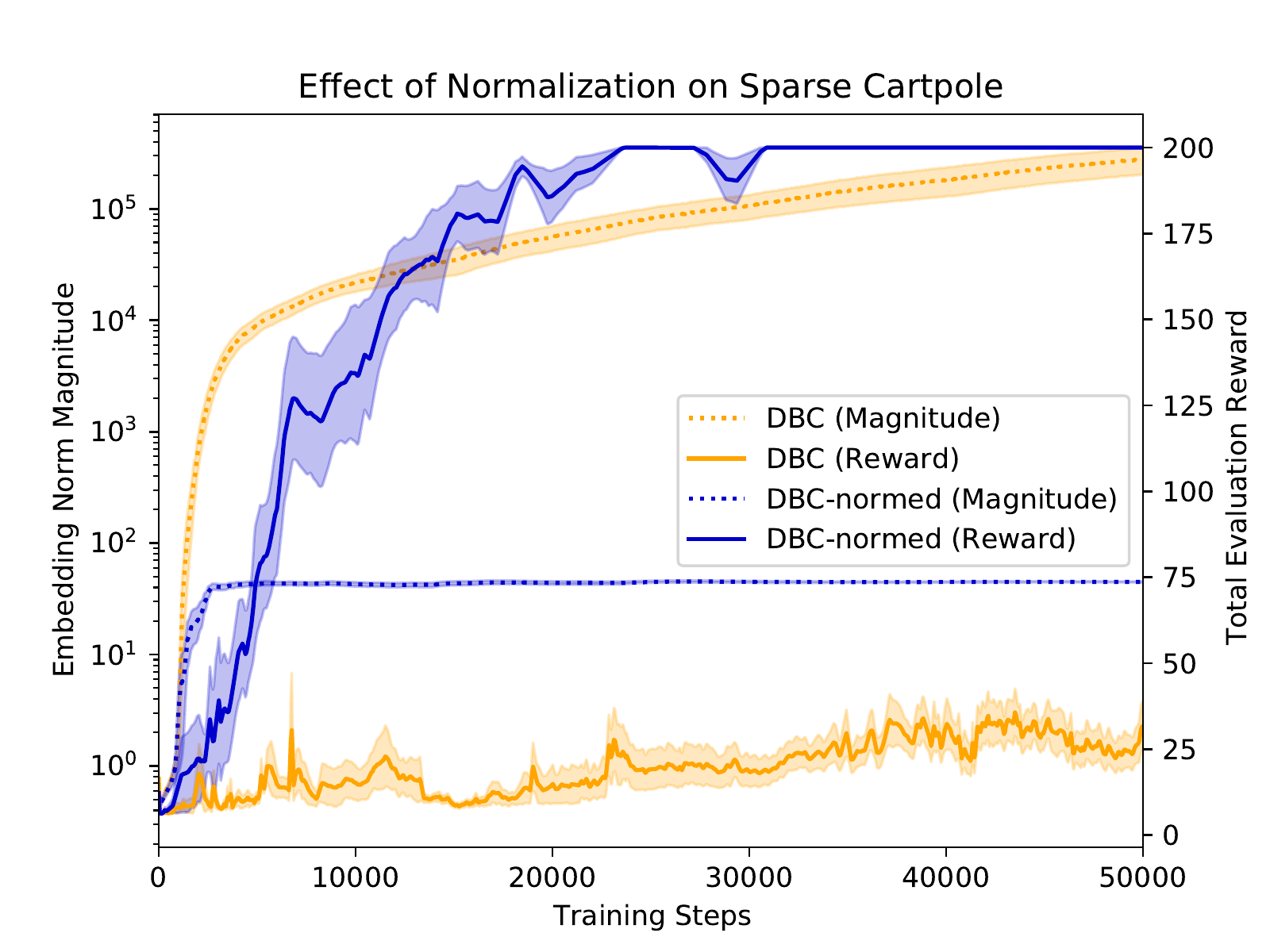}
\end{minipage}
\vspace{-10pt} %
\begin{minipage}{.42
\textwidth}
  \centering
  \includegraphics[width=1.\linewidth]{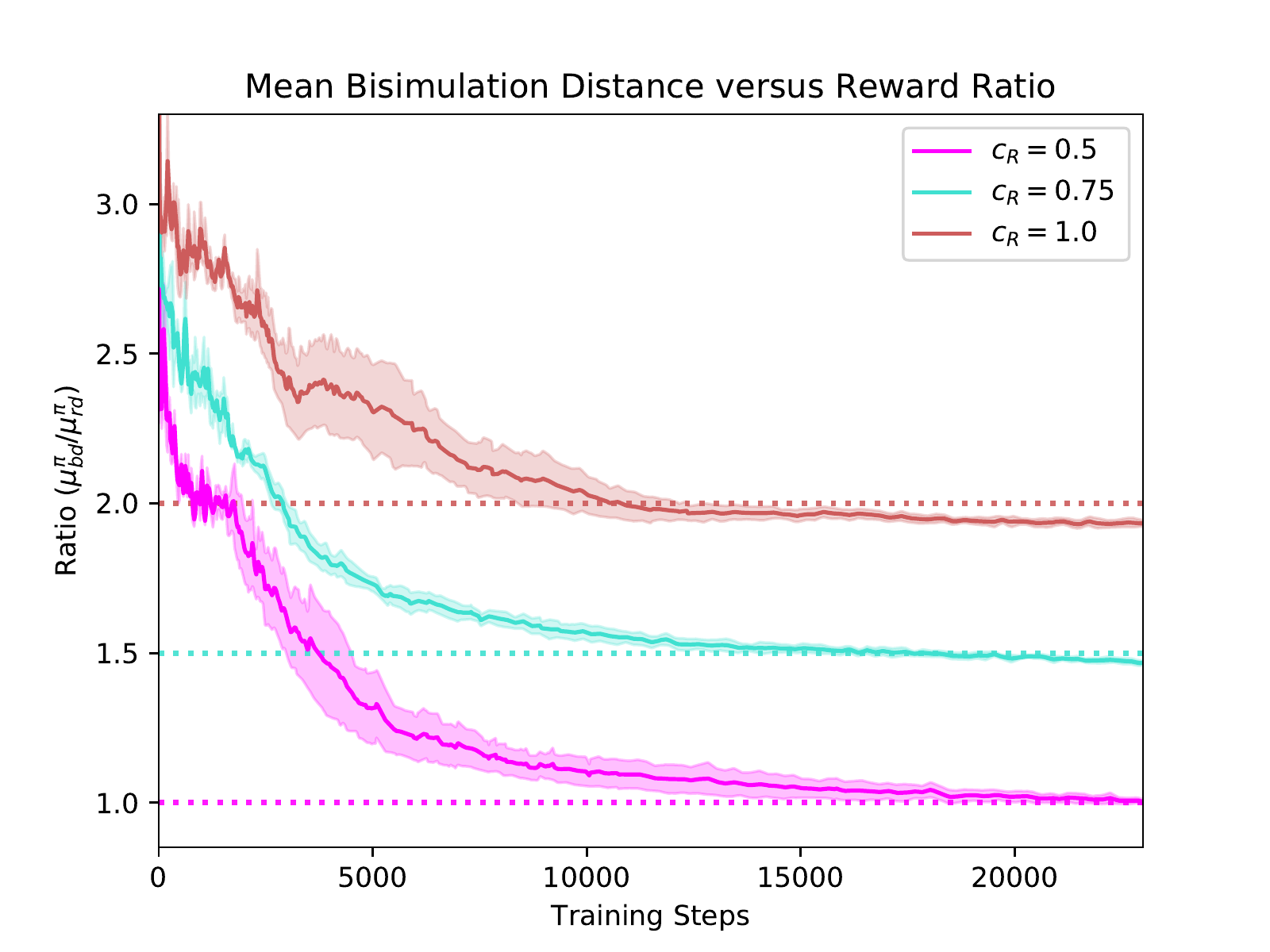}
\end{minipage}
\caption{Theoretical properties under the SparseCartpole task. (\textbf{Left}) Embedding explosion occurs for DBC \cite{zhang2021invariant} for the sparse 2D-Cartpole task, while our constraints discussed in Sec. \ref{sec:lipschitz-forward} keep the model stable. Our approach achieves maximum returns at evaluation, while DBC diverges. (\textbf{Right}) Given $c_T=0.5$, Eq. \eqref{eq:reward-dependence2} correctly predicts the ratio $ \mu_{bd}^\pi / \mu_{rd}^\pi $ between bisimulation distance and difference of rewards at convergence. Dashed lines indicate analytically calculated targets, while solid lines correspond to mini-batch estimates of the ratio during training.}
\label{fig:theoretical}
\end{figure}

We next examine theoretical properties of the on-policy bisimulation distance, which turn out to constrain properties of the approximate transition model. This motivates an additional architectural constraint, resulting in empirical improvements over prior work.

Metrics prior to \cite{zhang2021invariant} were defined based on ground-truth dynamics, which enabled the application of the Banach fixed-point theorem to show the existence of a unique metric. However, the Banach fixed-point theorem requires that a contraction mapping is applied on complete metric spaces. In this case, the metric space on which the Banach fixed-point theorem is applied is itself a space of metrics $\mathfrak{met}$ over states. For that reason, when bisimulation metrics were generalized from finite to continuous MDPs, a compact\footnote{A metric space is compact if and only if it is totally bounded and complete \cite{Ferns2011Bisimulation}.} state space was assumed to ensure completeness of $\mathfrak{met}$ \cite{Ferns2011Bisimulation}. In practice, when using an approximate forward dynamics model $\widehat{\mathcal{P}}: \mathcal{S} \times \mathcal{A} \rightarrow \mathcal{M}(\mathcal{S}^\prime)$\footnote{$\mathcal{M}(\mathcal{X})$ denotes the space of all probability distributions over $\mathcal{X}$.}, if compactness is not guaranteed, the same convergence guarantees do not apply. Thus, ideally, $\mathcal{S}^\prime$ is a compact subset of $\mathcal{S}$. Here, we formalize these restrictions on latent representations and the forward dynamics model to guarantee convergence to a unique metric. The following states a constraint on the BSM solution space given approximate dynamics.
\begin{restatable}[Diameter of $\mathcal{S}$ is bounded]{lemma}{maxdiam}
\label{lemma:diam}
Let $d: \mathcal{S} \times \mathcal{S} \rightarrow [0, \infty)$ be any bisimulation metric:
\begin{align}
\mathrm{diam}(\mathcal{S}; d) \coloneqq \sup_{\mathbf{s}_i, \mathbf{s}_j \in \mathcal{S} \times \mathcal{S}}d(\mathbf{s}_i, \mathbf{s}_j) \leq \frac{c_R}{1 - c_T}(R_{\mathrm{max}} - R_{\mathrm{min}}).
\end{align}
\end{restatable}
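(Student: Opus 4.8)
The plan is to exploit the self-consistency (fixed-point) equation \eqref{eq:p-Wass-bisim-metric} defining the bisimulation metric, together with two elementary bounds: one on reward differences and one on the Wasserstein term. Write $D \coloneqq \mathrm{diam}(\mathcal{S}; d)$. Since $r^{\pi}_i = \mathbb{E}_{\mathbf{a}\sim\pi}[R(\mathbf{s}_i,\mathbf{a})] \in [R_{\mathrm{min}}, R_{\mathrm{max}}]$ for every state (as $R$ takes values in $[R_{\mathrm{min}}, R_{\mathrm{max}}]$), we immediately get $|r^{\pi}_i - r^{\pi}_j| \leq R_{\mathrm{max}} - R_{\mathrm{min}}$ for all pairs. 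For the transition term, I would invoke the standard fact that for any coupling $\lambda$ of $\mathcal{P}^\pi(\cdot|\mathbf{s}_i)$ and $\mathcal{P}^\pi(\cdot|\mathbf{s}_j)$ -- both supported on $\mathcal{S}$ -- the integrand $d(\mathbf{x},\mathbf{y})^p$ is pointwise bounded by $D^p$, whence $W_p(d)(\mathcal{P}^\pi(\cdot|\mathbf{s}_i), \mathcal{P}^\pi(\cdot|\mathbf{s}_j)) \leq D$.

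Substituting these two bounds into \eqref{eq:p-Wass-bisim-metric} gives $d(\mathbf{s}_i,\mathbf{s}_j) \leq c_R(R_{\mathrm{max}} - R_{\mathrm{min}}) + c_T D$ for every pair $(\mathbf{s}_i,\mathbf{s}_j)$; taking the supremum over pairs yields $D \leq c_R(R_{\mathrm{max}} - R_{\mathrm{min}}) + c_T D$, and since $c_T \in [0,1)$ one rearranges to obtain $D \leq \frac{c_R}{1-c_T}(R_{\mathrm{max}} - R_{\mathrm{min}})$.

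The one gap here is that rearranging $D \leq c_R(R_{\mathrm{max}} - R_{\mathrm{min}}) + c_T D$ into the claimed bound is only legitimate once we know $D < \infty$; if $D = \infty$ the inequality is vacuous. I expect this to be the main (though still minor) obstacle. I would close it in one of two ways: (i) appeal to the continuous-MDP setting of Definition \ref{def:bisim2} / Assumption \ref{assum:p-wass}, in which $\mathcal{S}$ is compact and $d$ continuous, so the supremum is attained and finite; or, more self-containedly, (ii) run the fixed-point iteration $d_0 \equiv 0$, $d_{n+1} = \mathcal{F}(d_n)$, and show by induction that $\mathrm{diam}(\mathcal{S}; d_n) \leq c_R(R_{\mathrm{max}} - R_{\mathrm{min}})\sum_{k=0}^{n-1} c_T^k \leq \frac{c_R}{1-c_T}(R_{\mathrm{max}} - R_{\mathrm{min}})$, applying exactly the two bounds above at each step; since $d_n \to d$ pointwise by uniqueness of the fixed point (A1), the bound passes to the limit. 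Route (ii) is the cleaner write-up, avoids any a priori finiteness assumption, and re-derives finiteness of the diameter as a byproduct, so that is the path I would take.
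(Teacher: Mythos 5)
Your proposal is correct and follows essentially the same route as the paper: bound the reward term by $R_{\mathrm{max}}-R_{\mathrm{min}}$, bound the Wasserstein term by $\mathrm{diam}(\mathcal{S};d)$ (the paper cites its Lemma~\ref{lemma:wass-diam}, you give the equivalent direct coupling argument), and rearrange $D \leq c_R(R_{\mathrm{max}}-R_{\mathrm{min}}) + c_T D$ using $c_T < 1$. The paper rearranges without discussing finiteness of the diameter, so your fixed-point-iteration refinement is a more careful write-up of the same argument rather than a different approach.
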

The above lemma holds for all bisimulation metrics (on-policy or otherwise) that are based on exact dynamics, if said metrics exist.
Let us also consider an on-policy BSM with imperfect dynamics, such that it may not necessarily satisfy Lemma 1. 
\begin{restatable}[On-policy bisimulation metric with approximate dynamics]{definition}{approximatemetric}
\label{dfn:approximate-bisim}
Given an approximate dynamics model $\widehat{\mathcal{P}}: \mathcal{S} \times \mathcal{A} \rightarrow \mathcal{M}(\mathcal{S}^\prime)$, $c_T \in [0, 1)$, and $c_R \in [0, \infty)$:
\begin{align}
\label{eq:approx-bisim-metric}
    \widehat{d}_\pi(\mathbf{s}_i, \mathbf{s}_j) &\coloneqq c_R|r^{\pi}_i - r^{\pi}_j| ~+~ c_T
W_{1}(\widehat{d}_\pi)(\widehat{\mathcal{P}}^\pi(\cdot| \mathbf{s}_i), \widehat{\mathcal{P}}^\pi(\cdot| \mathbf{s}_j)).
\end{align}
\end{restatable}

Next, we provide a sufficient condition for the existence of a unique metric $\widehat{d}_\pi$ based on approximate dynamics, which also satisfies the upper-bound of Lemma \ref{lemma:diam}. 
Meeting this condition 
shrinks the solution space of metrics being searched to a set known to contain $d_\pi$.

\begin{restatable}[Boundedness condition for convergence]{theorem}{boundedness}
\label{thm:boundedness}
Assume $\mathcal{S}$ is compact. If the support of an approximate dynamics model $\widehat{\mathcal{P}}$, i.e., $\mathcal{S}^\prime = \mathrm{supp}(\widehat{\mathcal{P}})$, is a closed subset of $\mathcal{S}$, then there exists a unique on-policy bisimulation metric $\widehat{d}_\pi$ of the form Eq. \eqref{eq:approx-bisim-metric}, and this metric is bounded:
\begin{align}
\label{eq:boundedness}
    \mathrm{supp}(\widehat{\mathcal{P}}) \subseteq \mathcal{S} \Rightarrow \mathrm{diam}(\mathcal{S}; \widehat{d}_\pi) \leq \frac{c_R}{1 - c_T}(R_{\mathrm{max}} - R_{\mathrm{min}}).
\end{align}
\end{restatable}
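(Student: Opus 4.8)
The plan is to exhibit $\widehat{d}_\pi$ as the unique fixed point of a contraction, following the Banach-fixed-point template used for exact dynamics (cf.\ the proof of Remark~\ref{remark:rem1}), but to run the argument on the closed ``ball'' of metrics whose diameter does not exceed $\lambda \coloneqq \frac{c_R}{1-c_T}(R_{\mathrm{max}} - R_{\mathrm{min}})$, so that the bound \eqref{eq:boundedness} drops out for free. Concretely, set $\mathfrak{met}_\lambda \coloneqq \{\, d \in \mathfrak{met} : \mathrm{diam}(\mathcal{S}; d) \leq \lambda \,\}$, metrized by $\rho(d, d') \coloneqq \sup_{\mathbf{s}_i, \mathbf{s}_j}|d(\mathbf{s}_i, \mathbf{s}_j) - d'(\mathbf{s}_i, \mathbf{s}_j)|$, and define the operator $\widehat{\mathcal{F}}(d)(\mathbf{s}_i, \mathbf{s}_j) \coloneqq c_R|r^\pi_i - r^\pi_j| + c_T\,W_1(d)(\widehat{\mathcal{P}}^\pi(\cdot|\mathbf{s}_i), \widehat{\mathcal{P}}^\pi(\cdot|\mathbf{s}_j))$, whose fixed points are exactly the metrics of the form \eqref{eq:approx-bisim-metric}.

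First I would record the topological facts. Since $\mathcal{S}$ is compact, $\mathfrak{met}$ is complete under $\rho$ (as in the continuous-MDP treatment), and the constraint $\mathrm{diam}(\mathcal{S};d) \leq \lambda$ is preserved under uniform limits, so $\mathfrak{met}_\lambda$ is a nonempty (it contains the zero pseudometric), closed, hence complete, subspace. The hypothesis that $\mathcal{S}' = \mathrm{supp}(\widehat{\mathcal{P}})$ is closed in $\mathcal{S}$ makes $\mathcal{S}'$ compact, so every $\widehat{\mathcal{P}}^\pi(\cdot|\mathbf{s})$ is a Borel probability measure on a compact (hence bounded Polish) space; for bounded $d$, $W_1(d)$ between two such measures is finite and, via the standard gluing argument for couplings, symmetric, zero on equal arguments, and satisfies the triangle inequality in its measure slots. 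Combined with the analogous properties of $|r^\pi_i - r^\pi_j|$, this shows $\widehat{\mathcal{F}}(d) \in \mathfrak{met}$ for every $d \in \mathfrak{met}$.

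The two substantive steps are the self-map property and the contraction estimate. For the former, fix $d \in \mathfrak{met}_\lambda$; bounding the reward term by $c_R(R_{\mathrm{max}} - R_{\mathrm{min}})$ and, using $\mathcal{S}' \subseteq \mathcal{S}$, the Wasserstein term by $c_T \sup_{x,y \in \mathcal{S}'\times\mathcal{S}'} d(x,y) \leq c_T\,\mathrm{diam}(\mathcal{S};d) \leq c_T\lambda$, gives $\widehat{\mathcal{F}}(d)(\mathbf{s}_i, \mathbf{s}_j) \leq c_R(R_{\mathrm{max}} - R_{\mathrm{min}}) + c_T\lambda = \lambda$ by the choice of $\lambda$, hence $\widehat{\mathcal{F}}(d) \in \mathfrak{met}_\lambda$. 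For the latter, the reward terms cancel in the difference, so $|\widehat{\mathcal{F}}(d)(\mathbf{s}_i,\mathbf{s}_j) - \widehat{\mathcal{F}}(d')(\mathbf{s}_i,\mathbf{s}_j)| = c_T|W_1(d)(\mu_i,\mu_j) - W_1(d')(\mu_i,\mu_j)|$, and comparing against an $\epsilon$-optimal coupling (for $d$, then for $d'$) yields $|W_1(d)(\mu,\nu) - W_1(d')(\mu,\nu)| \leq \sup_{x,y}|d(x,y) - d'(x,y)| = \rho(d,d')$; therefore $\rho(\widehat{\mathcal{F}}(d), \widehat{\mathcal{F}}(d')) \leq c_T\,\rho(d,d')$ with $c_T < 1$. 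The Banach fixed-point theorem on $\mathfrak{met}_\lambda$ then furnishes a unique $\widehat{d}_\pi$ with $\widehat{d}_\pi = \widehat{\mathcal{F}}(\widehat{d}_\pi)$, i.e.\ of the form \eqref{eq:approx-bisim-metric}, and $\widehat{d}_\pi \in \mathfrak{met}_\lambda$ is precisely the claimed $\mathrm{diam}(\mathcal{S};\widehat{d}_\pi) \leq \frac{c_R}{1-c_T}(R_{\mathrm{max}} - R_{\mathrm{min}})$. Uniqueness extends to all bounded metrics because any fixed point $d$ satisfies $\mathrm{diam}(\mathcal{S};d) \leq c_R(R_{\mathrm{max}} - R_{\mathrm{min}}) + c_T\,\mathrm{diam}(\mathcal{S};d)$, forcing $d \in \mathfrak{met}_\lambda$.

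I expect the point needing the most care to be the well-definedness and regularity of $\widehat{\mathcal{F}}$ on a space of merely bounded (possibly discontinuous) pseudometrics: I need $W_1(d)$ to be finite, to obey the triangle inequality in its measure arguments, and to be $1$-Lipschitz in $d$ (uniform norm), and I must make explicit that the closedness hypothesis on $\mathrm{supp}(\widehat{\mathcal{P}})$ is exactly what supplies the compactness of $\mathcal{S}'$ on which the Wasserstein machinery and the completeness of the metric-of-metrics space rely, mirroring why compactness of $\mathcal{S}$ was needed in \cite{Ferns2011Bisimulation}. Everything else --- the self-map step and the contraction constant --- is routine arithmetic with the geometric factor $\lambda$, essentially the same computation as in Lemma~\ref{lemma:diam}.
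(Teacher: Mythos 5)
Your proposal is correct and follows essentially the same route as the paper: a Banach fixed-point argument in the (complete, by compactness of $\mathcal{S}$) space of bounded metrics, with the $c_T$-contraction obtained from the $1$-Lipschitz dependence of $W_1(d)$ on $d$ in the uniform norm, and the diameter bound obtained from the recursion $\mathrm{diam}(\mathcal{S};\widehat{d}_\pi) \leq c_R(R_{\mathrm{max}}-R_{\mathrm{min}}) + c_T\,\mathrm{diam}(\mathcal{S};\widehat{d}_\pi)$, which is exactly where the hypothesis $\mathrm{supp}(\widehat{\mathcal{P}}) \subseteq \mathcal{S}$ enters. The only difference is organizational: you run Banach on the closed ball $\mathfrak{met}_\lambda$ so the bound drops out of the fixed-point theorem directly (and then extend uniqueness), whereas the paper contracts on all of $\mathfrak{met}$ and derives the diameter bound a posteriori; the two are interchangeable.
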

If this condition is not satisfied at any point during training, the system may diverge. Indeed, we confirm empirically in Sec. \ref{sec:experiments} that the absence of a norm constraint on the forward dynamics model may result in embedding explosion with practical consequences 
(see Fig. \ref{fig:theoretical}). 
Such explosions are due to compactness violations of the approximate dynamics (e.g., if the predictions always increase the embedding space diameter, the recursive nature of the metric can lead to runaway expansion).
Luckily, the condition can be satisfied with ease, e.g., by projecting larger vectors onto the surface of a closed ball, $\mathbb{B}_c$, with diameter given in Lemma \ref{lemma:diam}, such that the following are true:
\begin{align}
    \label{eq:constraint}
    \phi(\mathbf{s}) \in \mathbb{B}_c = 
    \{\mathbf{x} \in \mathbb{R}^n ~&|~ \norm{\mathbf{x}}_q \leq \frac{c_R(R_{\mathrm{max}} - R_{\mathrm{min}})}{2(1 - c_T)} \}, ~\forall{\mathbf{s} \in \mathcal{S}}\\
    \widehat{\mathcal{P}}(\cdot|\phi(\mathbf{s}), \mathbf{a}) &\in \mathcal{M}(\mathbb{B}_c), ~\forall{(\mathbf{s}, \mathbf{a}) \in \mathcal{S} \times \mathcal{A}}.
\end{align}
We find that the constraint applied here is mild yet effective, as evidenced by significantly improved performance and stability when the constraints are active (see Fig.\ \ref{fig:theoretical}).

Clearly, a necessary condition for $\widehat{d}_\pi \rightarrow d_\pi$ is an error-free dynamics model. A natural question that arises from this view concerns the degree to which modelling errors affect VFA bounds. 
\begin{restatable}[VFA bound in terms of model error]{theorem}{valuemodelerror}
\label{theorem:valboundmodelerror}
Consider the same conditions as in Theorem \ref{thm:generalized-ct-vfa}, except that $c_T\in [\gamma, 1)$, $p=1$, and $ \Phi(\mathbf{s}_i) = \Phi(\mathbf{s}_j) \Rightarrow \widehat{d}_{\pi, \phi}(\mathbf{s}_i, \mathbf{s}_j) = \norm{\phi(\mathbf{s}_i)-\phi(\mathbf{s}_j)}_q \leq 2\,\widehat{\epsilon}$. 
Then: 
\begin{equation}
| V^\pi(\mathbf{s}) - \widetilde{V}^\pi( \Phi( \mathbf{s} ) ) |
\leq 
\frac{1}{c_R(1 - \gamma)} \left( 
    2 \,\widehat{\epsilon} + 
    \mathcal{E}_\phi + 
    \frac{2c_R}{1 - c_T}\mathcal{E}_r + 
    \frac{2c_T}{1 - c_T}\mathcal{E}_{\mathcal{P}} \right), \forall \mathbf{s} \in \mathcal{S}.
\end{equation}
where 
$ \mathcal{E}_\phi := \norm{\widehat{d}_{\pi, \phi} - \widehat{d}_\pi}_\infty $ 
    is the metric learning error,
$ \mathcal{E}_r := \norm{\widehat{r}^\pi - {r}^\pi}_\infty $ 
    is the reward approximation error,
and 
    $ \mathcal{E}_{\mathcal{P}} := \sup_{\mathbf{s} \in \mathcal{S}} 
        W_1(d_\pi)( {\mathcal{P}}^\pi(\cdot|\mathbf{s}), 
             \widehat{\mathcal{P}}^\pi(\cdot|\mathbf{s}) 
            ) $
    is the state transition model error. 
\end{restatable}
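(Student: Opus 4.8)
The plan is to treat this as a corollary of the generalized VFA bound (Theorem \ref{thm:generalized-ct-vfa}), fed a \emph{degraded} aggregation tolerance. Since $c_T \in [\gamma,1)$ forces $\overline{\gamma} = \min(c_T,\gamma) = \gamma$, the ``myopia'' term of Theorem \ref{thm:generalized-ct-vfa} vanishes, so that theorem already gives $|V^\pi(\mathbf{s}) - \widetilde{V}^\pi(\Phi(\mathbf{s}))| \le 2\epsilon / (c_R(1-\gamma))$ for \emph{any} $\epsilon$ with the property that $\Phi(\mathbf{s}_i)=\Phi(\mathbf{s}_j) \Rightarrow d_\pi(\mathbf{s}_i,\mathbf{s}_j) \le 2\epsilon$. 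Hence the entire task reduces to converting the hypothesis on the \emph{learned} metric $\widehat{d}_{\pi,\phi}$ into a statement of this shape on the \emph{true} on-policy metric $d_\pi$, via a short chain of sup-norm/triangle inequalities plus a single fixed-point perturbation estimate. The hypothesis $p=1$ is used so that the metrics in play are $W_1$-based (hence exist and are unique by Remark \ref{remark:rem1} and Theorem \ref{thm:boundedness}) and so that $\mathcal{E}_{\mathcal{P}}$, which is defined via $W_1(d_\pi)$, is exactly the object that surfaces.

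First, for $\mathbf{s}_i,\mathbf{s}_j$ in a common cluster I would write $d_\pi(\mathbf{s}_i,\mathbf{s}_j) \le \widehat{d}_{\pi,\phi}(\mathbf{s}_i,\mathbf{s}_j) + \|\widehat{d}_{\pi,\phi} - \widehat{d}_\pi\|_\infty + \|\widehat{d}_\pi - d_\pi\|_\infty \le 2\widehat{\epsilon} + \mathcal{E}_\phi + \|\widehat{d}_\pi - d_\pi\|_\infty$, invoking the clustering hypothesis and the definition of $\mathcal{E}_\phi$. Here $\widehat{d}_\pi$ denotes the approximate-dynamics on-policy bisimulation metric of Definition \ref{dfn:approximate-bisim}, built from the learned reward $\widehat{r}^\pi$ and learned transitions $\widehat{\mathcal{P}}^\pi$ (so that the reward-model error enters); by Theorem \ref{thm:boundedness} it exists uniquely as the fixed point of the map $\widehat{\mathcal{F}}(d)(\mathbf{s}_i,\mathbf{s}_j) \coloneqq c_R\lvert \widehat{r}^\pi_i - \widehat{r}^\pi_j\rvert + c_T\, W_1(d)(\widehat{\mathcal{P}}^\pi(\cdot|\mathbf{s}_i), \widehat{\mathcal{P}}^\pi(\cdot|\mathbf{s}_j))$, while $d_\pi$ is the fixed point of the analogous map $\mathcal{F}$ built from $r^\pi,\mathcal{P}^\pi$. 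Both are $c_T$-contractions in $\|\cdot\|_\infty$ on the space of (pseudo)metrics because $d \mapsto W_1(d)(\mu,\nu)$ is $1$-Lipschitz in $\|\cdot\|_\infty$ (as in \cite{Ferns2011Bisimulation}).

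The core step is the standard estimate that fixed points of nearby contractions are nearby: $\|d_\pi - \widehat{d}_\pi\|_\infty = \|\mathcal{F}(d_\pi) - \widehat{\mathcal{F}}(\widehat{d}_\pi)\|_\infty \le \|\mathcal{F}(d_\pi) - \widehat{\mathcal{F}}(d_\pi)\|_\infty + c_T \|d_\pi - \widehat{d}_\pi\|_\infty$, hence $\|d_\pi - \widehat{d}_\pi\|_\infty \le (1-c_T)^{-1}\|\mathcal{F}(d_\pi) - \widehat{\mathcal{F}}(d_\pi)\|_\infty$. I would evaluate the discrepancy at $d_\pi$ \emph{by design}, so that the Wasserstein costs on both sides are measured against $d_\pi$, matching the metric in the definition of $\mathcal{E}_{\mathcal{P}}$ and avoiding a self-referential inequality. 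Pointwise in $(\mathbf{s}_i,\mathbf{s}_j)$, the reward contribution is at most $2c_R\mathcal{E}_r$ by the reverse triangle inequality for $\lvert\cdot\rvert$, and the transition contribution is at most $c_T\big(W_1(d_\pi)(\mathcal{P}^\pi(\cdot|\mathbf{s}_i),\widehat{\mathcal{P}}^\pi(\cdot|\mathbf{s}_i)) + W_1(d_\pi)(\mathcal{P}^\pi(\cdot|\mathbf{s}_j),\widehat{\mathcal{P}}^\pi(\cdot|\mathbf{s}_j))\big) \le 2c_T\mathcal{E}_{\mathcal{P}}$ by the triangle inequality for the metric $W_1(d_\pi)$ on distributions. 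Therefore $\|\widehat{d}_\pi - d_\pi\|_\infty \le (2c_R\mathcal{E}_r + 2c_T\mathcal{E}_{\mathcal{P}})/(1-c_T)$, so clusters have $d_\pi$-diameter at most $2\epsilon \coloneqq 2\widehat{\epsilon} + \mathcal{E}_\phi + \tfrac{2c_R}{1-c_T}\mathcal{E}_r + \tfrac{2c_T}{1-c_T}\mathcal{E}_{\mathcal{P}}$; plugging this $\epsilon$ and $\overline{\gamma}=\gamma$ into Theorem \ref{thm:generalized-ct-vfa} yields precisely the claimed inequality.

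I expect the main obstacle to be the bookkeeping of this perturbation step, on two fronts. First, one must justify that $\widehat{d}_\pi$ is a bona fide contraction fixed point: this is where $p=1$ and the support condition $\mathrm{supp}(\widehat{\mathcal{P}}) \subseteq \mathcal{S}$ of Theorem \ref{thm:boundedness} are needed (otherwise the metric space on which the Banach argument runs may fail to be complete). Second, one must resist evaluating $\mathcal{F} - \widehat{\mathcal{F}}$ at $\widehat{d}_\pi$ rather than $d_\pi$: doing so would push the Wasserstein costs into the metric $\widehat{d}_\pi$, reintroduce $\|d_\pi - \widehat{d}_\pi\|_\infty$ on the right-hand side with a coefficient $2c_T/(1-c_T)$ that is not below $1$ once $c_T \ge 1/3$, and stall the recursion. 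Everything else is routine triangle-inequality manipulation.
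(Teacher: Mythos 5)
Your proposal is correct and follows essentially the same route as the paper: your fixed-point perturbation estimate is exactly the paper's Lemma \ref{lemma:distanceerror}/Corollary \ref{corollary:distanceerrorpequal1} specialized to $p=1$ (yielding the same $\tfrac{2c_R}{1-c_T}\mathcal{E}_r + \tfrac{2c_T}{1-c_T}\mathcal{E}_{\mathcal{P}}$ bound on $\norm{d_\pi - \widehat{d}_\pi}_\infty$), your triangle-inequality step adding $\mathcal{E}_\phi$ is Lemma \ref{lemma:bisimboundencerr}, and converting the $\widehat{d}_{\pi,\phi}$-cluster radius into a $d_\pi$-cluster radius before applying the aggregation bound with $\overline{\gamma}=\gamma$ mirrors the paper's use of the bound from Lemma \ref{lemma:on-policy-vfa}. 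The only cosmetic difference is organizational: the paper derives the distance-error bound pointwise for general $p$ (with the $a_p$ and diameter terms) before specializing to $p=1$, and re-enters the integral in Lemma \ref{lemma:on-policy-vfa}'s proof at the step marked $(\dag)$ rather than invoking Theorem \ref{thm:generalized-ct-vfa} as a black box.
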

See Appendix \ref{sec:vfa-model-error} for details, including a generalized version for $c_T \in [0, 1)$, listed as Corollary \ref{crl:vfamodelerrorct}.
Consider an ideal case where ground-truth rewards $r^\pi$ are available and the metric is learned perfectly (i.e., $\mathcal{E}_\phi = \mathcal{E}_r = 0$).
Then, we observe that the  choice of $c_T$ defines a trade-off between VFA error due to (a) forward model error $\mathcal{E}_{\mathcal{P}}$, and (b) ``myopic'' bisimulation (see Thm. \ref{thm:generalized-ct-vfa}, $c_T < \gamma$),
where the BSM timescale is shorter than that of the discounted return.

\subsubsection{On the Dangers of On-policy Bisimulation}
\label{sec:dangers}
Suppose that at training time, $|r^{\pi}_i - r^{\pi}_j| = 0$ for all pairs of states, e.g., during early training in a sparse reward setting. Then, unlike the policy-independent bisimulation metric, the on-policy formulation has a degenerate solution at $\mathrm{diam}(\mathcal{S}; d_\pi)=0$, regardless of the structure of the underlying MDP.
\begin{restatable}[A reason for caution in on-policy bisimulation]{lemma}{onpolicydiam}
\label{lemma:dist-range}
On-policy bisimulation metrics of the form Eq. \eqref{eq:p-Wass-bisim-metric} have an upper bound determined by their policy:
\begin{align}
    \mathrm{diam}(\mathcal{S}; d_\pi) \leq \frac{c_R}{1 - c_T}\sup_{i, j}|r_i^\pi - r_j^\pi|.
\end{align}
\end{restatable}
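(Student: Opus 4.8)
The plan is to mimic the argument behind Lemma~\ref{lemma:diam}, but to replace the crude estimate $|r_i^\pi - r_j^\pi| \le R_{\mathrm{max}} - R_{\mathrm{min}}$ with the tighter, policy-dependent quantity $M \coloneqq \sup_{i,j}|r_i^\pi - r_j^\pi|$. Write $D \coloneqq \mathrm{diam}(\mathcal{S}; d_\pi)$, which is finite by Lemma~\ref{lemma:diam}, so that the rearrangements below are legitimate. Fix an arbitrary pair $(\mathbf{s}_i,\mathbf{s}_j)$ and expand $d_\pi(\mathbf{s}_i,\mathbf{s}_j)$ via the fixed-point identity~\eqref{eq:p-Wass-bisim-metric}.

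For the reward term, $c_R|r_i^\pi - r_j^\pi| \le c_R M$ by definition of $M$. For the transport term, I would use that for any coupling $\lambda$ of $\mathcal{P}^\pi(\cdot|\mathbf{s}_i)$ and $\mathcal{P}^\pi(\cdot|\mathbf{s}_j)$ one has $\left(\int d_\pi(\mathbf{x},\mathbf{y})^p\, d\lambda(\mathbf{x},\mathbf{y})\right)^{1/p} \le \sup_{\mathbf{x},\mathbf{y}} d_\pi(\mathbf{x},\mathbf{y}) = D$, and hence $W_p(d_\pi)(\mathcal{P}^\pi(\cdot|\mathbf{s}_i),\mathcal{P}^\pi(\cdot|\mathbf{s}_j)) \le D$. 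Combining the two estimates gives $d_\pi(\mathbf{s}_i,\mathbf{s}_j) \le c_R M + c_T D$ for the fixed pair.

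Since this holds for every pair of states, taking the supremum over $(\mathbf{s}_i,\mathbf{s}_j)$ on the left yields $D \le c_R M + c_T D$, and because $c_T \in [0,1)$ we may rearrange to $D \le \frac{c_R}{1 - c_T} M$, which is the claim. The only place I would be careful is guaranteeing $D < \infty$ before the rearrangement; this is precisely what Lemma~\ref{lemma:diam} supplies. Alternatively — and this is the main (though still mild) obstacle if one wants a self-contained argument — one can run the same bound on the contraction iterates $d_\pi^{(n)} = \mathcal{F}^{(n)}(d_\pi^{(0)})$ started from the zero metric, prove $\mathrm{diam}(\mathcal{S}; d_\pi^{(n)}) \le c_R M \sum_{k=0}^{n-1} c_T^{k}$ by induction on $n$, and pass to the limit using uniform convergence $d_\pi^{(n)} \to d_\pi$ guaranteed by \hyperref[assum:p-wass]{A1}. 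Everything else is the routine Wasserstein-diameter estimate above.
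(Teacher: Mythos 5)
Your proposal is correct and follows essentially the same route as the paper's proof: expand the fixed-point identity, bound the reward term by $c_R\sup_{i,j}|r_i^\pi - r_j^\pi|$ and the Wasserstein term by $\mathrm{diam}(\mathcal{S}; d_\pi)$, then take the supremum over state pairs and rearrange using $c_T \in [0,1)$. Your added care about finiteness of the diameter (or the alternative argument via contraction iterates) is a sound refinement, but the substance of the argument is identical.
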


Due to policy-dependence, the target metric $d_\pi$ changes with each policy update during joint training. As a result of this difficulty, convergence to a unique fixed point (i.e., a unique metric) for a learning algorithm was previously guaranteed with a strong assumption:  ``a policy that is continuously improving'', as in Thm. 1 of \cite{zhang2021invariant}. Informally, if the policy is assumed to be continuously improving, in the worst case, the metric will have a fixed point $d_{\pi^*}$ after the policy itself reaches a fixed point, namely, the optimal policy $\pi^*$. %
However, this assumption may be too strong to guarantee convergence especially for continuous MDPs, as the policy learning process depends heavily on the encoder.

Specifically, to prove their Thm. 1, Zhang et al. \cite{zhang2021invariant} relied on the policy improvement theorem. For continuous MDPs, in practice, a policy $\pi(\mathbf{a}|\phi(\mathbf{s}))$ is learned via non-convex optimization (e.g., policy gradients, VFA), rather than the vanilla policy improvement algorithm. Thus, if the bisimulation metric is degenerate, a continuously improving policy cannot be guaranteed. As we will show, on-policy bisimulation metrics can in fact obstruct policy search in some cases (e.g., sparse rewards, low dispersion\footnote{Statistical dispersion is an umbrella term used to describe measures of variability or diversity (e.g., variance).} rewards) by yielding a collapsed or exploded metric space, which is unable to approximate the value function. 
Specifically, we can define and relate measures of (i) collapse in the embedding space, and (ii) statistical dispersion of rewards under the current policy.
\begin{definition}[Measuring collapse and sparse rewards] 
Let $\rho^\pi$ denote the stationary distribution over states, and $\nu^\pi$ the distribution over pairs of states, $(\mathbf{s}_i, \mathbf{s}_j)$ sampled independently from $\rho^\pi$. Then;
\begin{align}
    \mu_{bd}^\pi &\coloneqq \mathbb{E}_{(\mathbf{s}_i, \mathbf{s}_j) \sim \nu^\pi}[d_\pi(\mathbf{s}_i, \mathbf{s}_j)] &&
    \mu_{rd}^\pi \coloneqq \mathbb{E}_{(\mathbf{s}_i, \mathbf{s}_j) \sim \nu^\pi}[|r_i^\pi - r_j^\pi|].
\end{align}
\end{definition}
In the low dispersion case, $\mu_{rd}^\pi \approx 0$, meaning the current reward signal under $\pi$ is uninformative. However, this can have dire consequences for our embedding, as shown in the lemma below.

\begin{restatable}[Relating collapse and low-dispersion rewards]{lemma}{meanupperbound}
\label{lemma:mean-lemma}
Assume deterministic transitions and the existence of a stationary distribution $\rho_\pi$ over states. Given a bisimulation metric of the form Eq.\ \eqref{eq:p-Wass-bisim-metric}:
\begin{align} 
\label{eq:reward-dependence2}
\mu^{\pi}_{bd} = \frac{c_R}{1-c_T} \mu^{\pi}_{rd}.
\end{align}
\end{restatable}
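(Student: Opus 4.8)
The plan is to unroll the fixed-point recursion that defines $d_\pi$ along trajectories, take the expectation over $\nu^\pi$, and exploit stationarity of $\rho^\pi$ to show that every term of the unrolled series contributes exactly $\mu^\pi_{rd}$. First I would use determinism: under deterministic transitions the successor-state distribution $\mathcal{P}^\pi(\cdot\,|\,\mathbf{s})$ is a point mass at a well-defined next state, which I write as $f(\mathbf{s})$ (for a stochastic $\pi$ one argues via the product coupling over the two action distributions; with a deterministic policy this is literal). Then for any Wasserstein order $p$ we have $W_p(d_\pi)(\delta_{f(\mathbf{s}_i)},\delta_{f(\mathbf{s}_j)}) = d_\pi(f(\mathbf{s}_i),f(\mathbf{s}_j))$, so Eq.~\eqref{eq:p-Wass-bisim-metric} collapses to the scalar recursion
\begin{align}
    d_\pi(\mathbf{s}_i,\mathbf{s}_j) = c_R\,|r^\pi_i - r^\pi_j| + c_T\, d_\pi(f(\mathbf{s}_i), f(\mathbf{s}_j)). \nonumber
\end{align}

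Next I would iterate this $n$ times and pass to the limit. Because Lemma~\ref{lemma:diam} gives $\mathrm{diam}(\mathcal{S};d_\pi) \le \frac{c_R}{1-c_T}(R_{\mathrm{max}}-R_{\mathrm{min}}) < \infty$ and $c_T \in [0,1)$, the remainder $c_T^{\,n}\,d_\pi(f^n(\mathbf{s}_i),f^n(\mathbf{s}_j))$ vanishes as $n\to\infty$, where $f^t$ denotes the $t$-fold composition of $f$. This yields the absolutely convergent expansion $d_\pi(\mathbf{s}_i,\mathbf{s}_j) = c_R\sum_{t\ge 0} c_T^{\,t}\,|r^\pi(f^t(\mathbf{s}_i)) - r^\pi(f^t(\mathbf{s}_j))|$. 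Taking $\mathbb{E}_{(\mathbf{s}_i,\mathbf{s}_j)\sim\nu^\pi}$ of both sides and interchanging sum and expectation (all summands are nonnegative, so Tonelli applies) gives $\mu^\pi_{bd} = c_R\sum_{t\ge0}c_T^{\,t}\,\mathbb{E}_{\nu^\pi}\big[\,|r^\pi(f^t(\mathbf{s}_i)) - r^\pi(f^t(\mathbf{s}_j))|\,\big]$.

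The crux is the claim that each of these expectations equals $\mu^\pi_{rd}$, independently of $t$. Here I would use that stationarity of $\rho^\pi$ means $f_*\rho^\pi = \rho^\pi$, hence $(f^t)_*\rho^\pi = \rho^\pi$ for every $t$; and since $\mathbf{s}_i$ and $\mathbf{s}_j$ are drawn independently under $\nu^\pi$, the pair $(f^t(\mathbf{s}_i), f^t(\mathbf{s}_j))$ again consists of two independent $\rho^\pi$-distributed states, i.e.\ is itself distributed as $\nu^\pi$. Therefore $\mathbb{E}_{\nu^\pi}[\,|r^\pi(f^t(\mathbf{s}_i)) - r^\pi(f^t(\mathbf{s}_j))|\,] = \mathbb{E}_{\nu^\pi}[\,|r^\pi_i - r^\pi_j|\,] = \mu^\pi_{rd}$, and summing the geometric series $c_R\sum_{t\ge0}c_T^{\,t} = \frac{c_R}{1-c_T}$ completes the argument.

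I expect the main obstacle to be the first step: rigorously collapsing the Wasserstein term to $d_\pi$ evaluated at successor states. With a deterministic policy it is immediate, but for a stochastic policy over a deterministic environment one a priori only obtains the recursion with an upper bound via the product coupling, so either stating the deterministic-policy assumption explicitly, or supplying a matching lower bound, is the delicate point. A secondary, routine technical point is justifying the measurability and invariance facts used in the pushforward argument ($f$ measurable, $\rho^\pi$ invariant under $f_*$) together with the interchange of the limit and the sum with the expectation, all of which are controlled by the finite diameter from Lemma~\ref{lemma:diam}.
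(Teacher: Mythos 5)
Your proof is correct and takes essentially the same route as the paper's: determinism collapses the $W_p$ term to $d_\pi$ evaluated at the successor pair, and stationarity makes that pair again $\nu^\pi$-distributed. The paper simply takes the expectation of the one-step recursion and solves $\mu^{\pi}_{bd} = c_R\mu^{\pi}_{rd} + c_T\mu^{\pi}_{bd}$ directly, whereas you unroll to a geometric series first --- the same underlying facts, with your version merely being more explicit about convergence (via Lemma \ref{lemma:diam}) and about the implicit requirement that $\mathcal{P}^\pi$ be a point mass, which the paper's proof also glosses over.
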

Clearly, a collapsed state encoder, $\phi^*(\mathbf{s}_i)=\phi_0$, 
loses all information about the state. 
These observations motivate us to extend the method with inverse dynamics-based regularization and intrinsic rewards based on forward prediction errors (see Sec \ref{sec:intrinsic}), since they promote $\mu_{bd}^\pi, \mu_{rd}^\pi > 0$ respectively.
For empirical evidence of the relation in Lemma \ref{lemma:mean-lemma} at training-time, see Figure \ref{fig:theoretical}. Relationships between variances are also discussed in Appendix \ref{sec:reward_scale}. %

\subsection{Intrinsic Rewards and Inverse Dynamics}
\label{sec:intrinsic}

The core principle of DBC is to construct a representation such that a given $\phi(\mathbf{s})$ relates to other latent states via the PBSM, thus providing robustness to distractors, but also ensuring the latent state holds sufficient information to maximize return.
As already discussed, however, uninformative rewards can induce metric learning issues for DBC, causing the embedding to hold insufficient information to solve the task.
We consider two approaches to improving upon this representational deficiency:
(1) using intrinsic rewards (IR), 
and
(2) regularizing the latent space with inverse dynamics (ID) learning.

\newcommand\szz{0.329}
\begin{figure} 
\centering
  \includegraphics[width=\szz\textwidth]{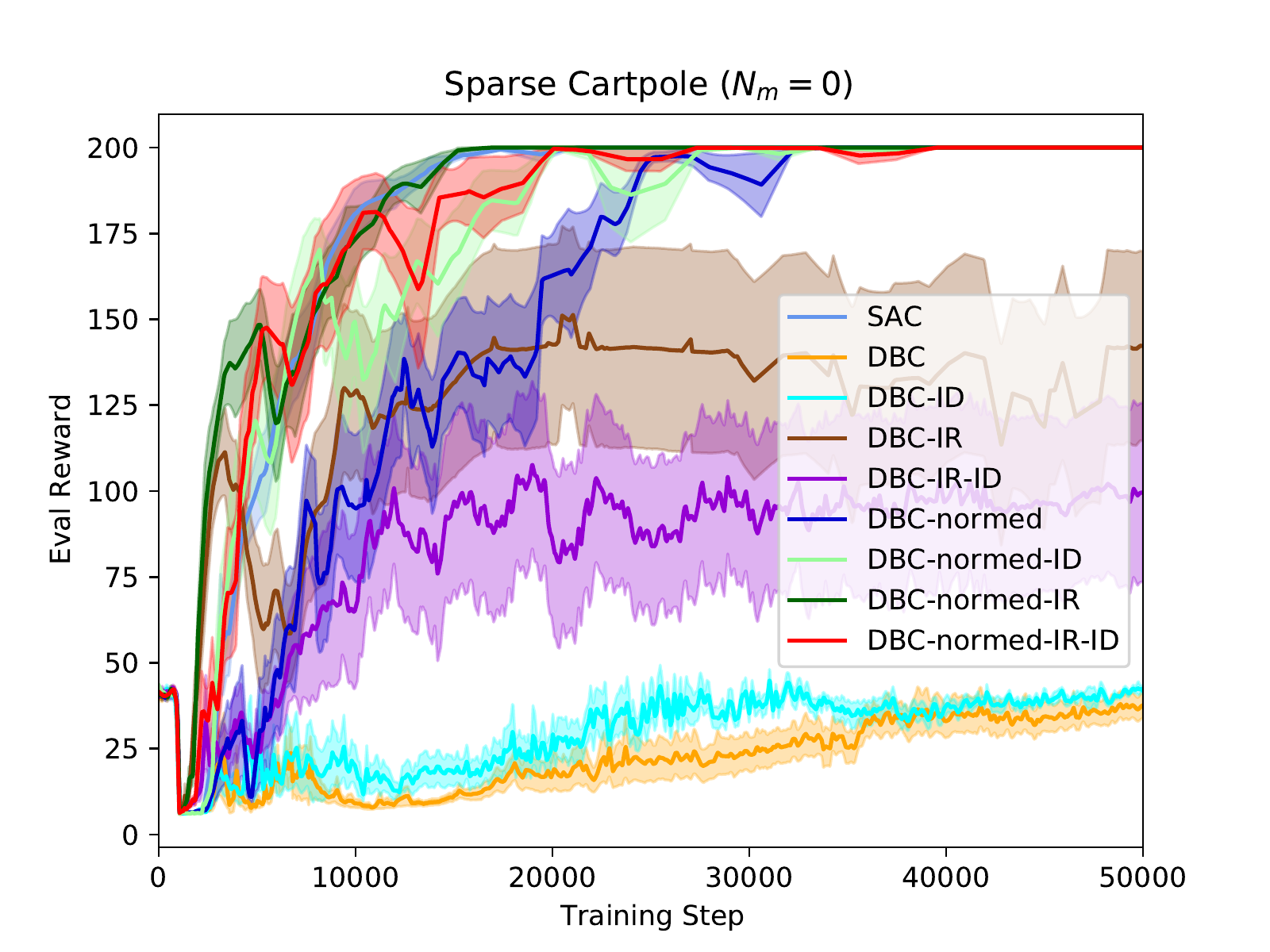}%
  \includegraphics[width=\szz\linewidth]{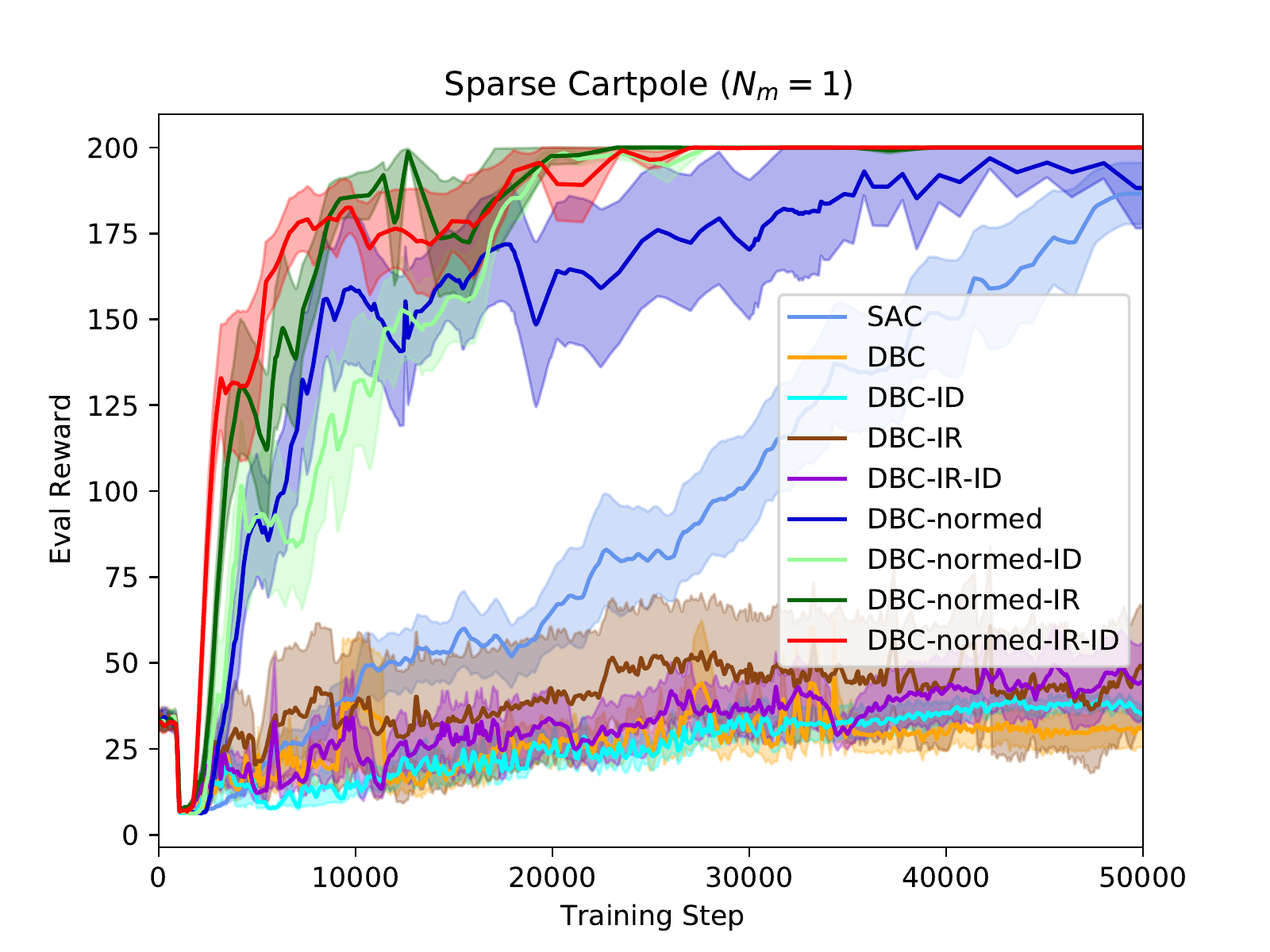}%
  \includegraphics[width=\szz\linewidth]{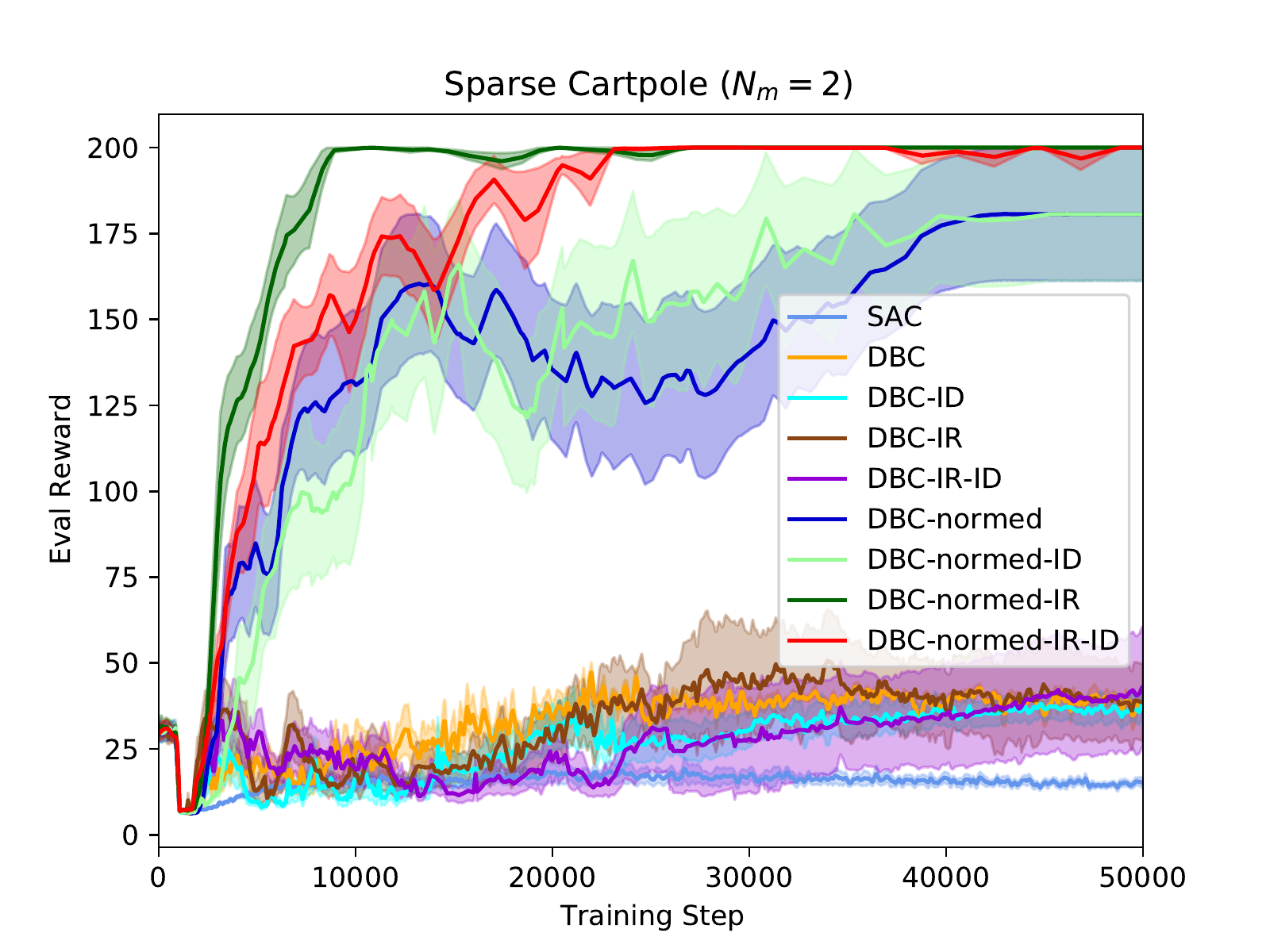}\\%
  \includegraphics[width=\szz\textwidth]{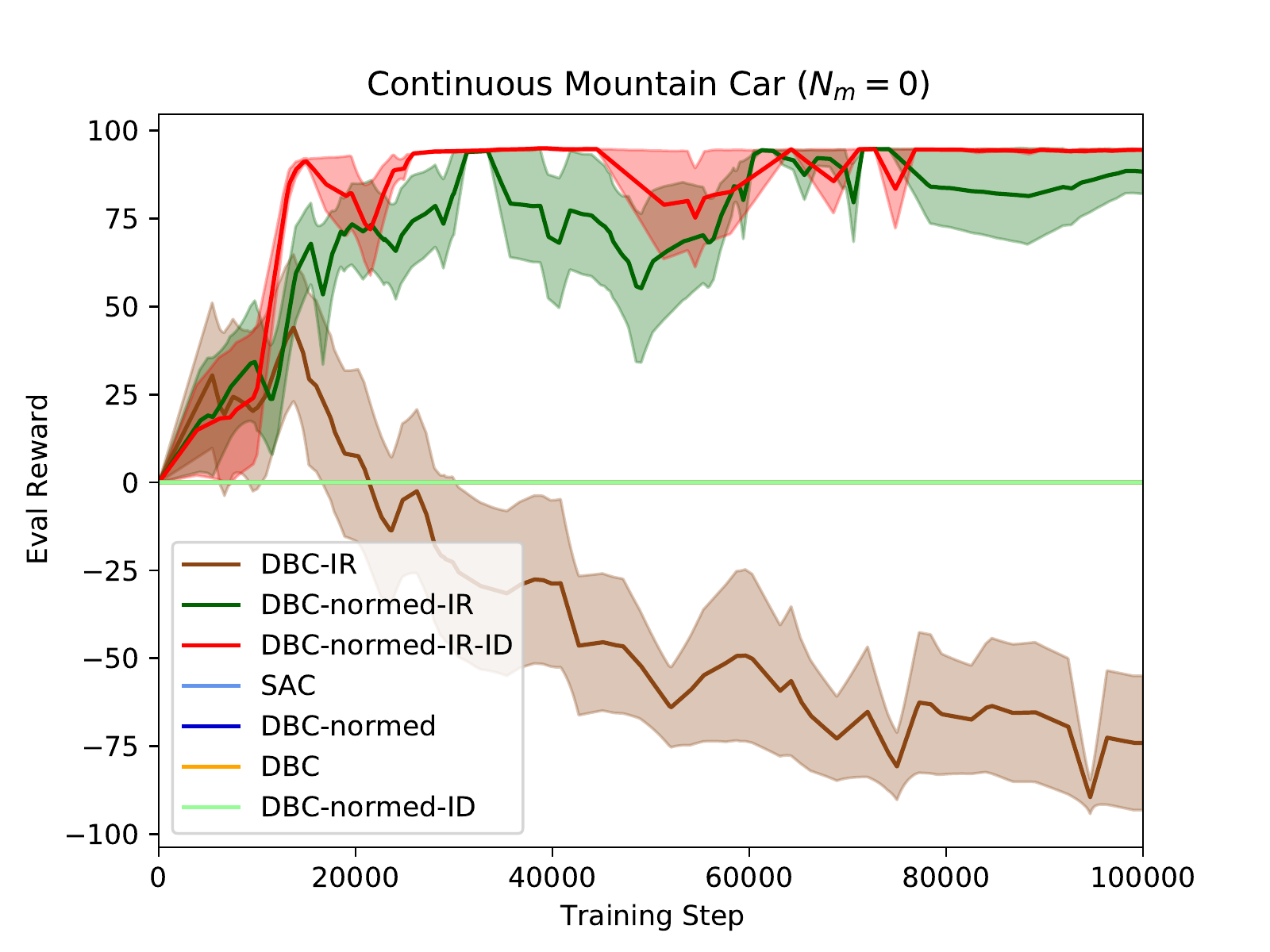}%
  \includegraphics[width=\szz\linewidth]{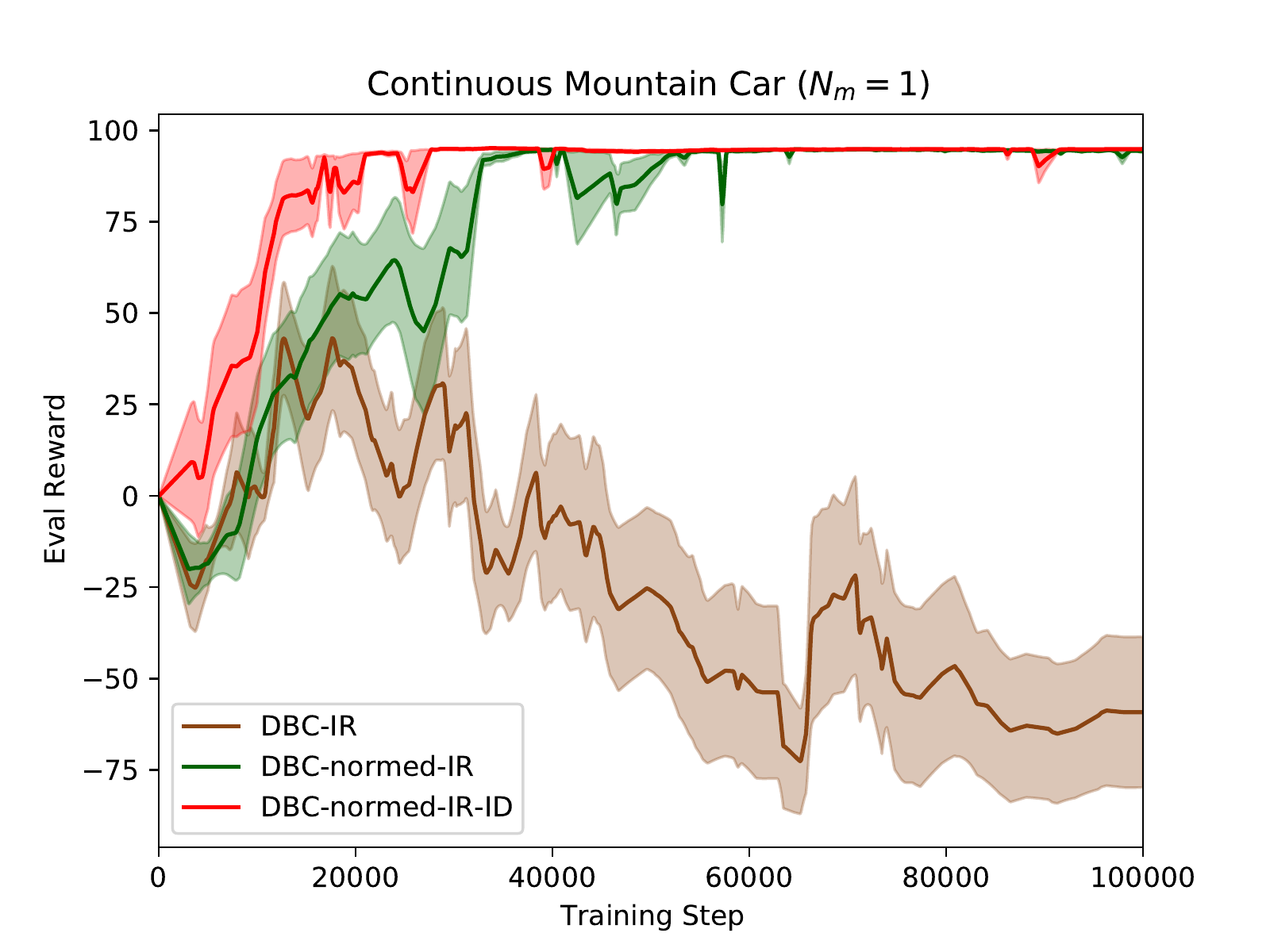}%
  \includegraphics[width=\szz\linewidth]{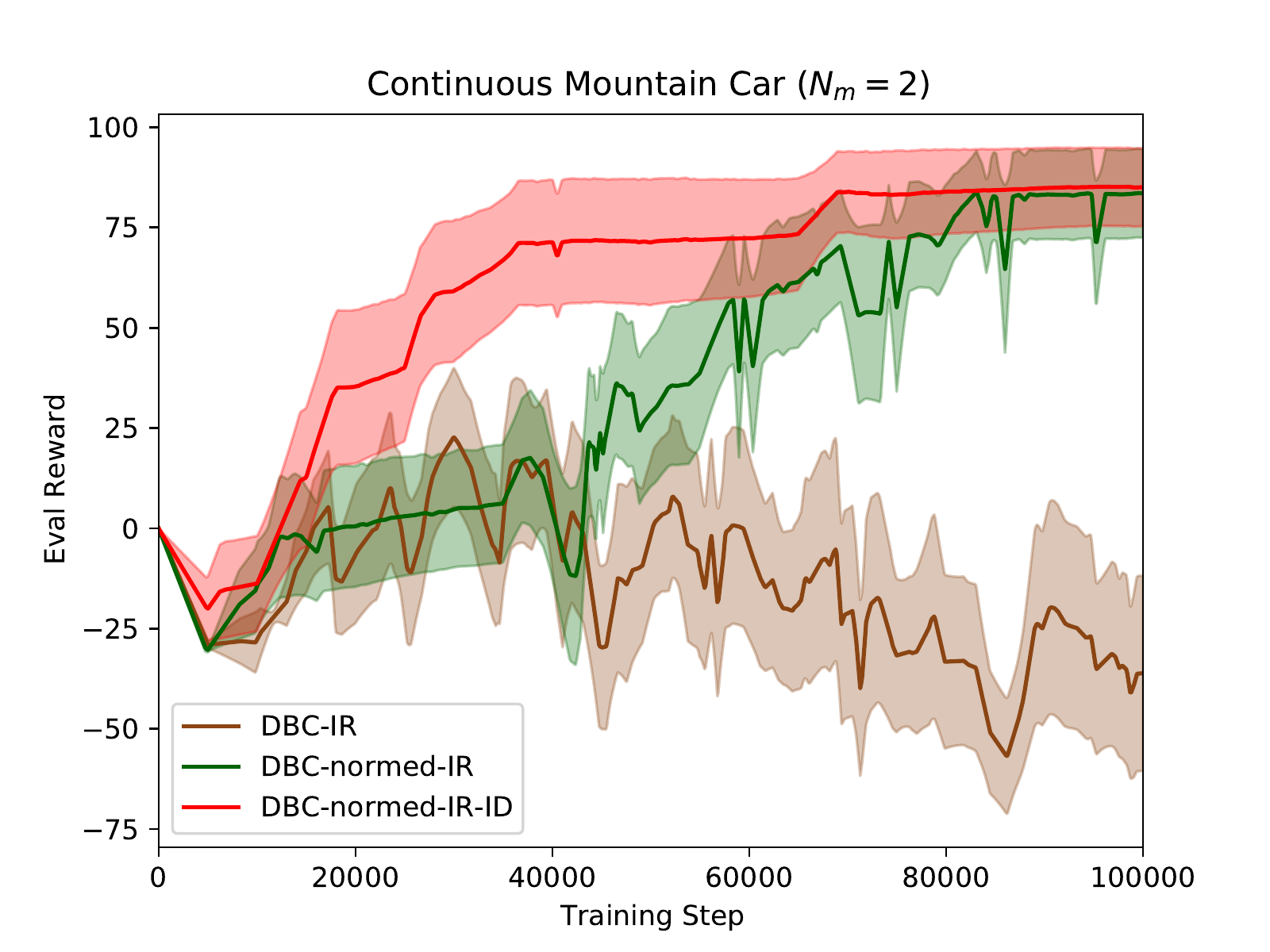}%
\caption{
Results on the \textit{modified} Gym tasks: Cartpole (\textbf{top row}) and  Mountain Car (\textbf{bottom row}).
The left, middle, and right columns have $0$, $\mathrm{dim}(\mathcal{S})$, and $2\mathrm{dim}(\mathcal{S})$
distractor noise dimensions respectively.
We show the 
Soft Actor-Critic (\textit{SAC}) and 
Deep Bisimulation for Control (\textit{DBC}) baselines,
along with our modifications using
embedding normalization (\textit{normed}),
intrinsic rewards (\textit{IR}),
and inverse dynamics (\textit{ID}).
DBC struggles under all conditions of sparsity;
while SAC handles it better (top-left), it cannot deal with high distraction (top-right) or more extreme sparsity (bottom row).
However, latent normalization immediately improves performance (top row).
Further, combining it with IR or IR+ID improves performance on all tasks.
Shading shows standard error over 10 seeds.
}
\label{fig:gym}
\end{figure}

\textbf{Curiosity-Driven Intrinsic Rewards}\hspace{0.25em}
One technique for encouraging exploration in sparse-reward environments uses intrinsic motivation via self-generated rewards, based on a notion of curiosity
\cite{burda2018large,aubret2019survey}.
In such cases, the agent rewards itself for entering unpredictable areas of the environment:
in particular, at every time step $t$, 
we redefine the reward signal to be 
$r_t = r_{I,t} + r_{E,t}$,
where $r_{E,t}$ is the extrinsic reward 
(i.e., the original environmental reward) and
$r_{I,t}$ is the IR. 
Following prior work \cite{burda2018large, pathak2017curiosity,stadie2015incentivizing}, 
we use the forward model error in the latent space to compute intrinsic rewards: 
$%
    \widetilde{r}_{I,t} \coloneqq \eta_r || \widehat{\phi}_\mu(\mathbf{s}_t, \mathbf{a}_t) - 
                                  \phi(\mathbf{s}_{t+1}) ||_2^2 / (2n),
$%
~where %
$ \widehat{\phi}_\mu(\mathbf{s}_t, \mathbf{a}_t) 
= \mathbb{E}_{ \phi(\mathbf{s}_{t+1}) \sim \widehat{\mathcal{P}}(\cdot|\phi(\mathbf{s}_t), \mathbf{a}_t) }[\phi(\mathbf{s}_{t+1})] $ 
is the mean of the predicted distribution from the forward dynamics model
and $\eta_r > 0$ is a hyper-parameter controlling the intrinsic reward scale.
We finally clamp to a fixed maximum $R_{\mathrm{max}, I}$, 
so that %
$r_{I,t} \coloneqq \min( R_{\mathrm{max}, I}, \widetilde{r}_{I,t} )$.
We already train $\widehat{\mathcal{P}}$ 
for the DBC loss,
so the additional computational cost is limited.
Finally, note that IRs cause the %
reward signal to become non-stationary, resulting in the target of the BSM learning changing as well;
however, as the agent explores and can better predict its environment over time, the IR (and thus its influence on the metric learning process) should fade.

\textbf{Latent Inverse Dynamics Prior}\hspace{0.25em}
We also consider regularizing the learned latent state space, by having it learn to predict the inverse dynamics of the world,
following prior work \cite{agrawal2016learning,pathak2017curiosity}. 
Let
$g_I : \mathbb{R}^n \times \mathbb{R}^n \rightarrow \mathcal{A}$
be an inverse model, 
which predicts the action $\mathbf{a}_t \in \mathcal{A}$ 
that changed $\mathbf{s}_t$
to $\mathbf{s}_{t+1}$ using the learned latent space: %
$ %
    \widehat{\mathbf{a}}_t \coloneqq g_I(\phi(\mathbf{s}_t), \phi(\mathbf{s}_{t+1})).
$ %
This can be trained from observed transitions
$(\mathbf{s}_t, \mathbf{a}_t, \mathbf{s}_{t+1})$, 
via
$J_{d}(\phi, g_I;\eta_d) \coloneqq \eta_d || \mathbf{a}_t - \widehat{\mathbf{a}}_t ||_1/n_a,$
where 
$\eta_d > 0$ 
weighs the ID loss importance,
and
$\mathcal{A} = [-1,1]^{n_a}$ for continuous control tasks.
This loss 
prevents removal of information
that pertains to the agent's ability to control the environment, 
but it is balanced against the loss driving $\phi$ to mimic the BSM.
In this sense, we are placing a regularizing prior on $\phi$ via an auxiliary task, 
such that it adheres to the BSM requirement when possible, 
but still transmits useful information in the sparse reward case instead of collapsing.
Yet, %
    since only \textit{actions} need be predicted,
    distractor aspects of the observation 
    that the agent cannot influence are naturally ignored.
Hence,
    we expect less disruption 
    to the BSM encoding than other auxiliary tasks 
    (e.g., reconstruction).

\textbf{Summary}\hspace{0.25em}
In this section, we 
(1) provided generalized bounds connecting the value function to the on-policy BSM (Sec.\ \ref{sec:vfa}); 
(2) showed that the BSM satisfies a fixed latent diameter, and devised a normalization for enforcing this property 
during learning
(Sec.\ \ref{sec:lipschitz-forward}), which not only prevents embedding explosion but improves performance (Fig.\ \ref{fig:theoretical});
and 
(3) found the PBSM is susceptible to embedding collapse with sparse rewards, 
and suggested IR and ID to mitigate it 
(Sec.\ \ref{sec:dangers}, \ref{sec:intrinsic}).

\section{Experiments}
\label{sec:experiments}
In this section, we seek answers to the following questions concerning our main hypotheses:
\vspace{-0.5em}
\begin{enumerate}
\itemsep-0.1em 
    \item Do the embedding collapse and explosion issues predicted theoretically occur in practice?
    \item Do our contributions address these problems?
    \item Does our proposed approach preserve the noise-invariance property of bisimulation?
    \item How do our proposed improvements interact with each other?
    \item How does our method perform compared to prior work, particularly with sparse rewards?
\end{enumerate}
\vspace{-0.5em}
To that end, we experiment on several altered classic control tasks from OpenAI Gym \cite{brockman2016openai} by (i) sparsifying the reward signal, and (ii) augmenting the environment state with noisy dimensions, to simulate distractions. 
We also perform larger scale experiments on two challenging vision-based 3D robotics benchmarks from the DeepMind Control Suite \cite{tassa2018deepmind}. 

\begin{figure}
\centering

\begin{minipage}{.3\textwidth}%
   \centering
   \vspace{-0.3cm}
    \includegraphics[width=0.75\linewidth]{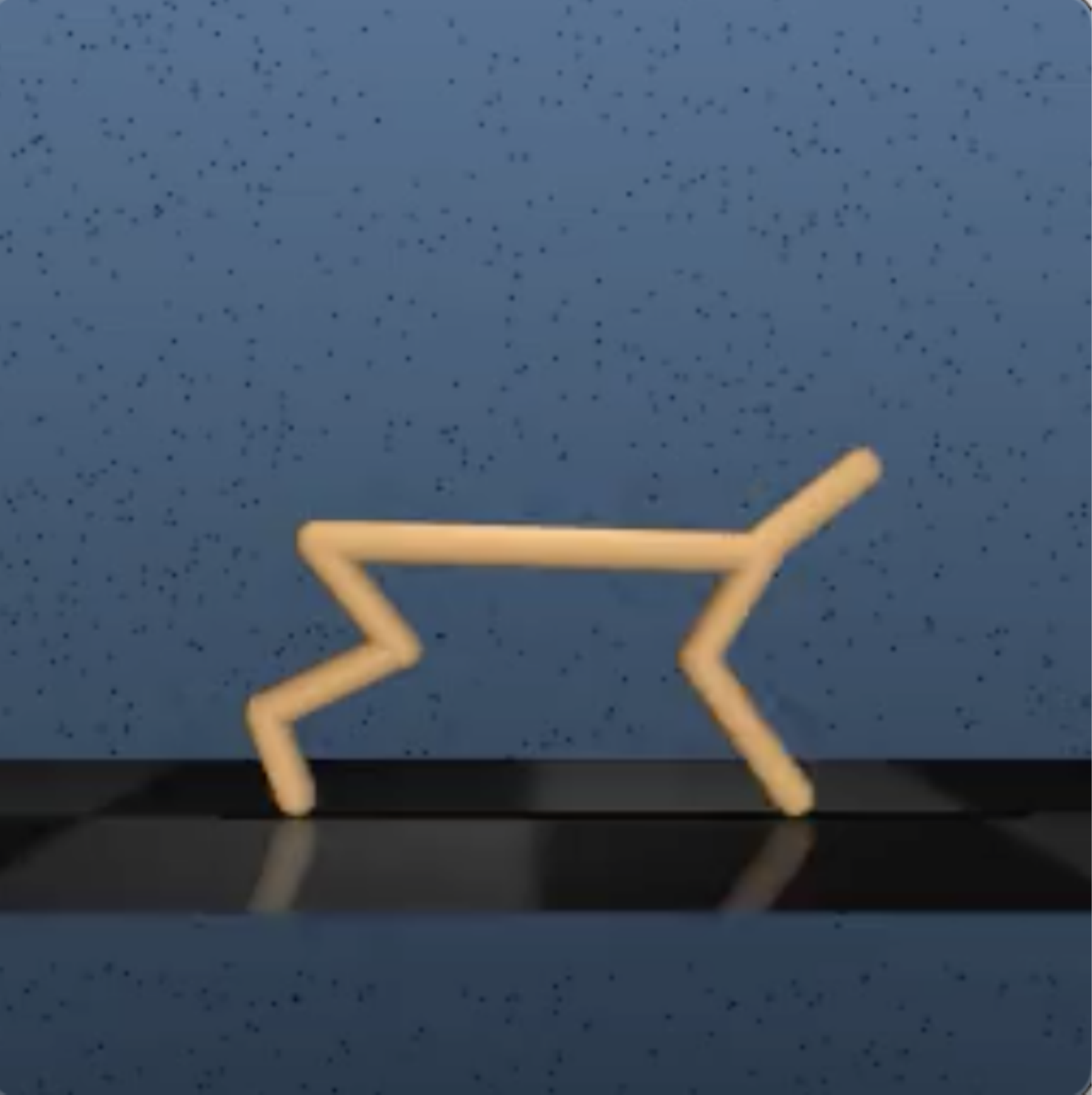}
    \par \vspace{0.8cm}
    \includegraphics[width=0.75\linewidth]{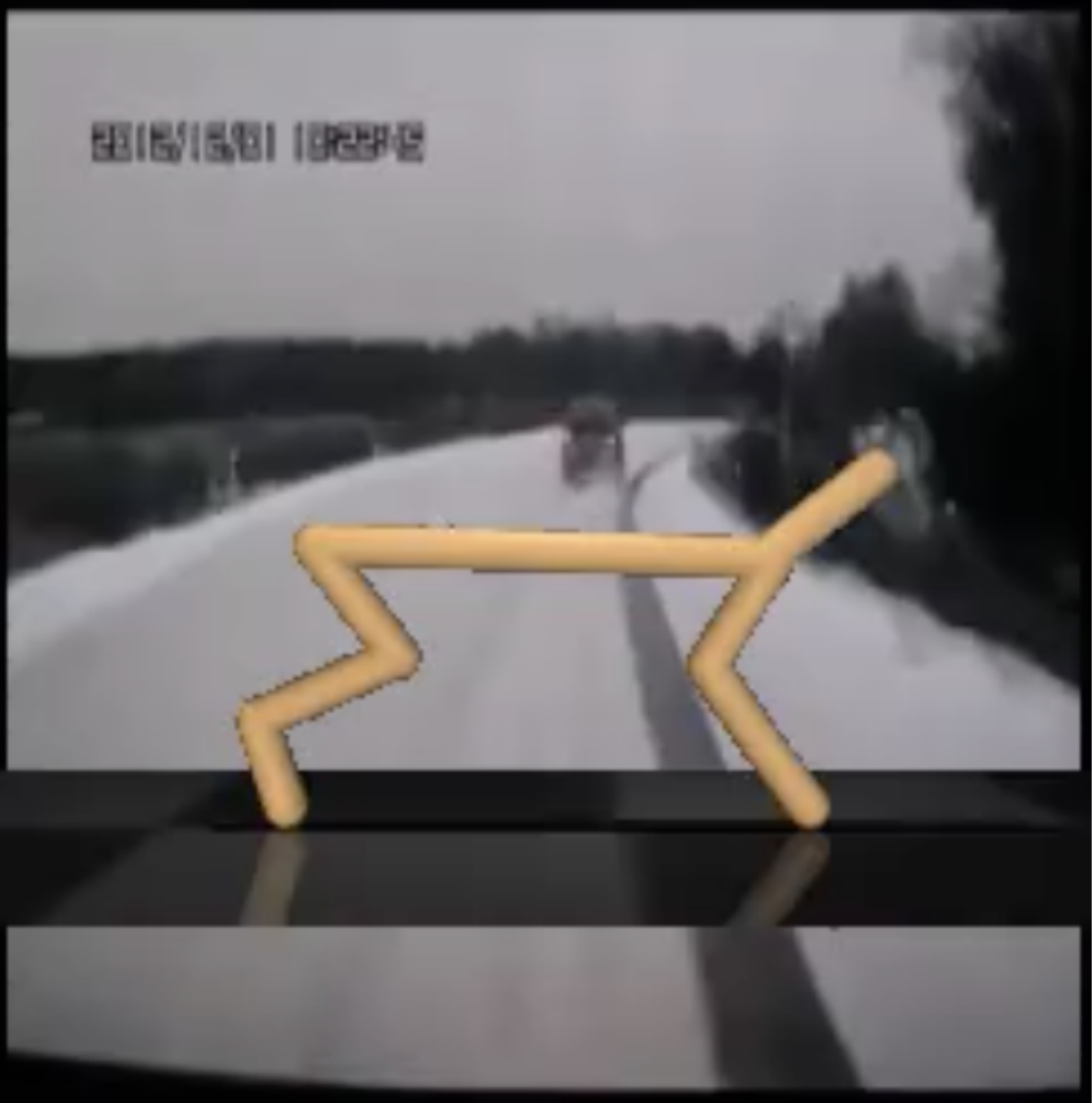}
\end{minipage}
\hspace{-0.65cm}
\begin{minipage}{.38\textwidth}%
   \centering
    \includegraphics[width=1.\linewidth]{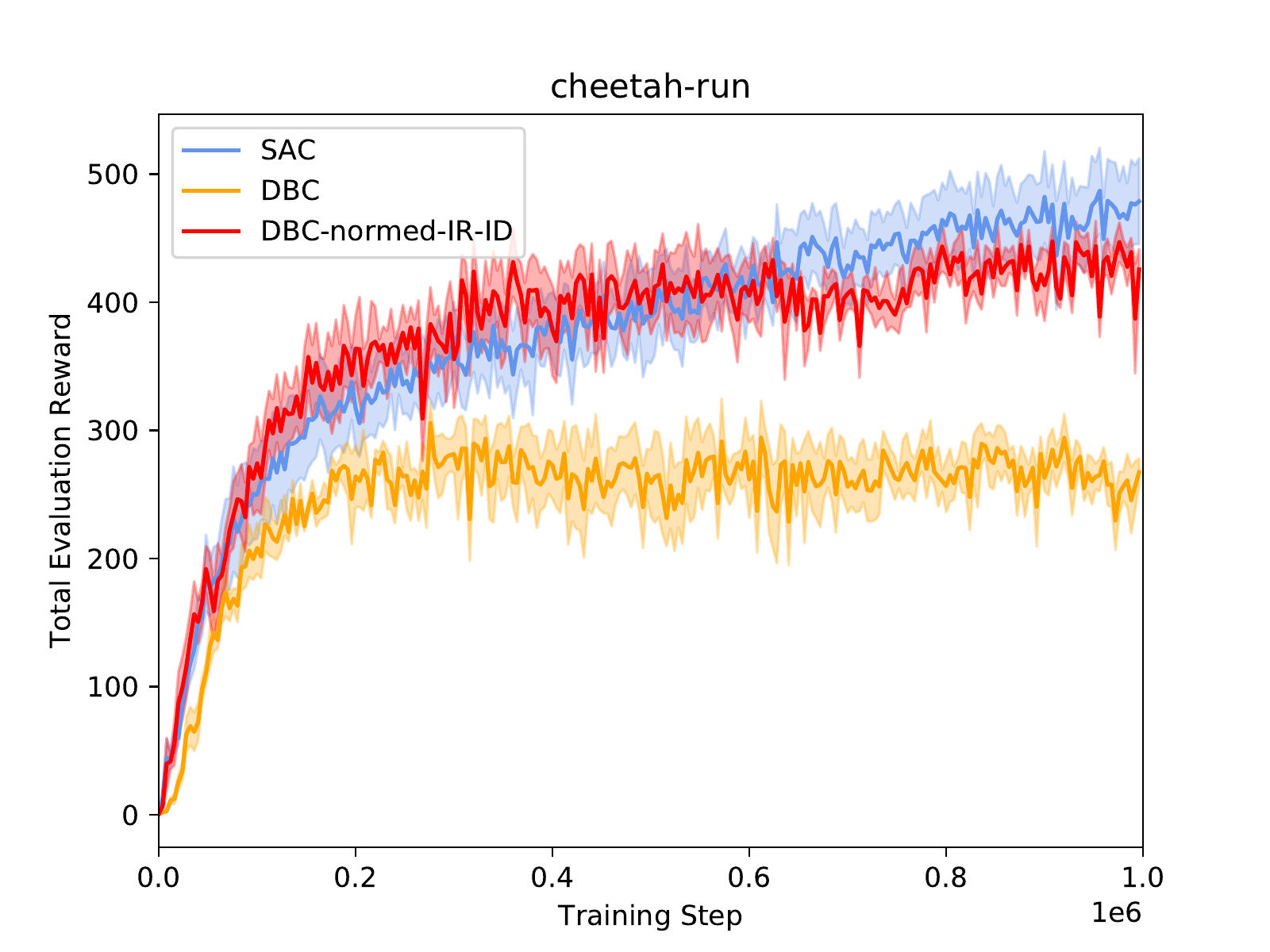}
    \includegraphics[width=1.\linewidth]{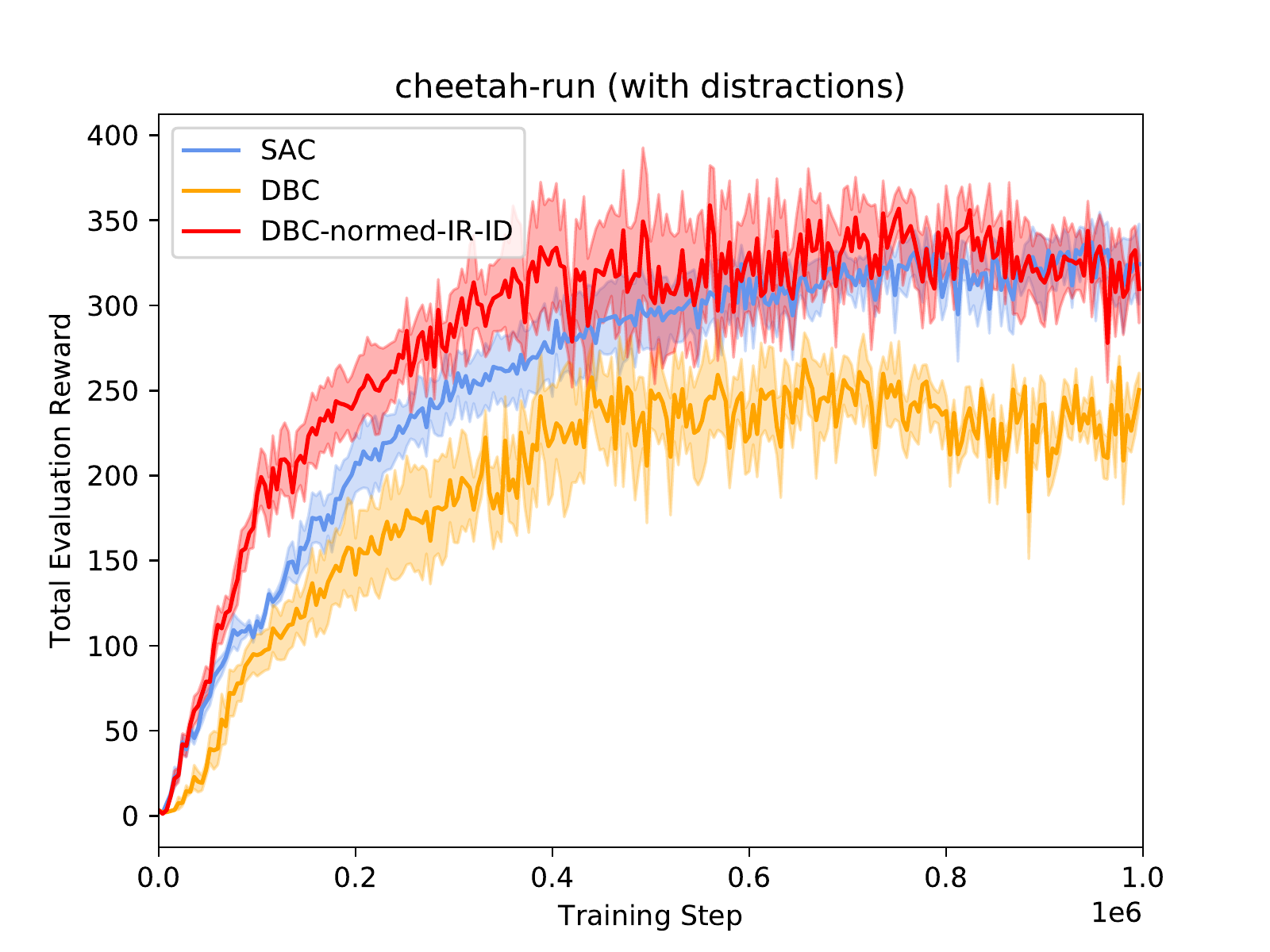}
\end{minipage}
\vspace{-10pt}
\hspace{-15pt}
\begin{minipage}{.38\textwidth} %
  \centering
  \includegraphics[width=1.\linewidth]{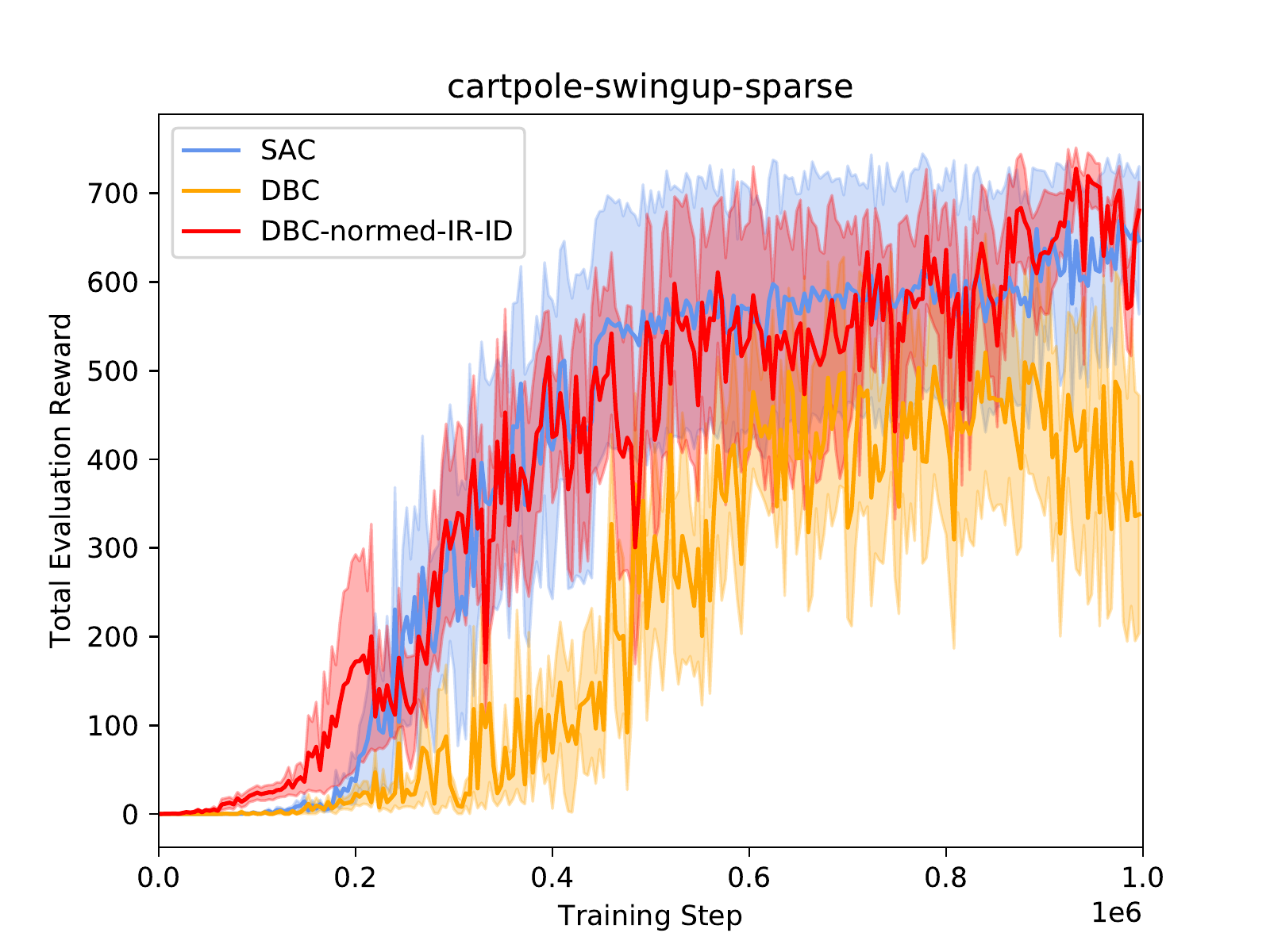}
  \includegraphics[width=1.\linewidth]{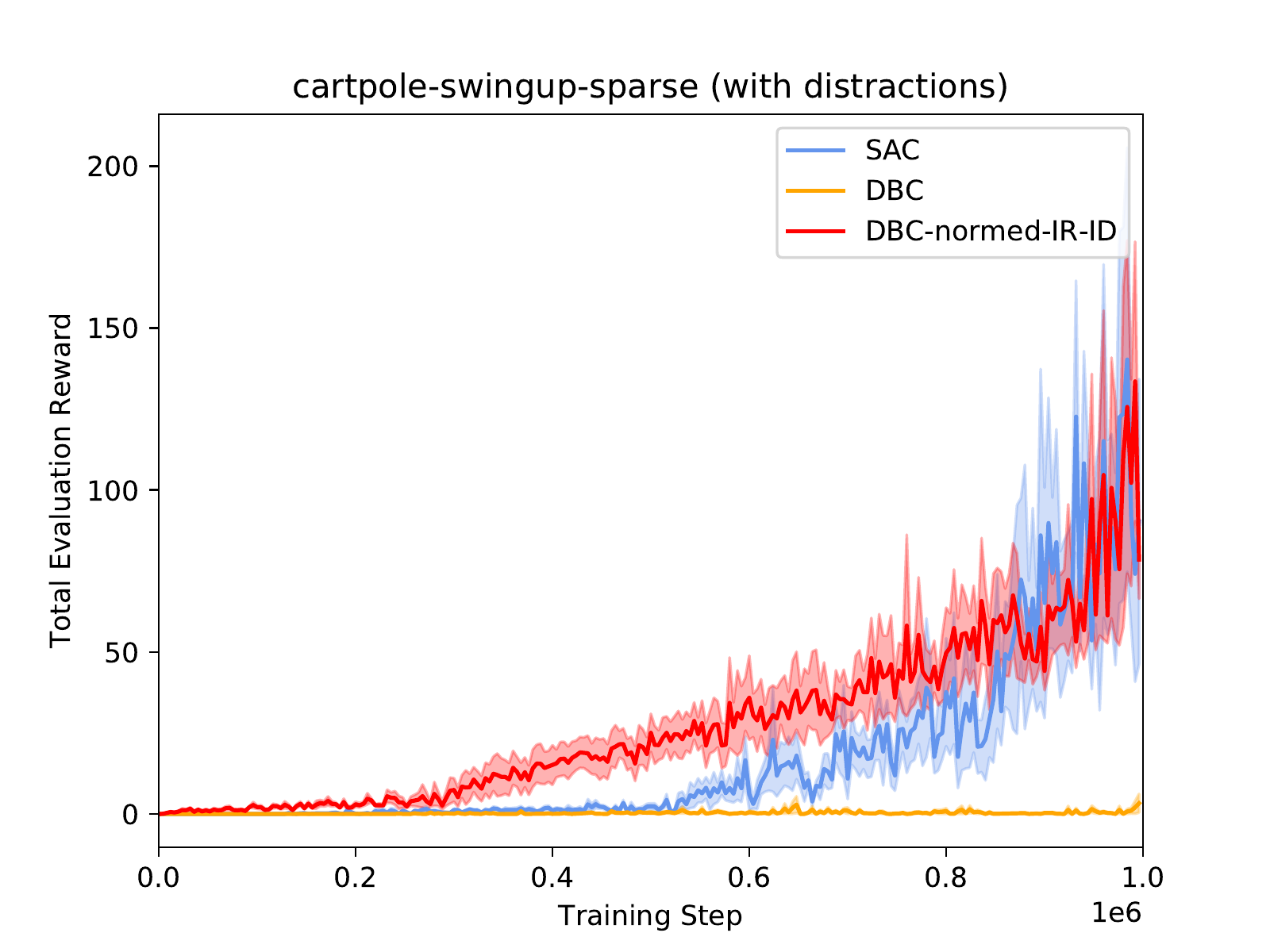}
\end{minipage}
\caption{Performance on 3D robotics tasks from DMC \cite{tassa2018deepmind} with 5 random seeds. \textbf{Leftmost column} illustrates the training setup for the \texttt{cheetah-run} task without (top) and with (bottom) natural video distractions. %
(\textbf{Top-left}) Our method significantly improves upon DBC \cite{zhang2021invariant} and is slightly below SAC \cite{haarnoja2018soft}. (\textbf{Top-right}) DBC struggles under sparse rewards, while our approach remains robust to sparsity. (\textbf{Bottom-left}) Our approach significantly outperforms DBC and is more sample-efficient than SAC. (\textbf{Bottom-right}) DBC makes no progress under both sparsity and distractions, and SAC begins learning later due to sparsity.}
\label{fig:dmc}
\end{figure}

\textbf{Sparse Noisy Cartpole}\hspace{0.25em}
First, we modified the \texttt{Cartpole-v0} task,
in which an agent tries to keep a pole upright.
To increase sparsity, we shrank the angular extent in which the pole must be to earn rewards to 1\% of its standard value.
Then, to mimic distractors, we concatenate an $N_m \mathrm{dim}(\mathcal{S})$-dimensional vector sampled from an isotropic Gaussian to the state vector. 
(See Appendix \ref{sec:additional-results:cp} for more details.)
The performance of the respective models, as well as other baselines are shown in Fig.\ \ref{fig:gym}, top row. 
While DBC struggles on this task due to the sparsity, the Soft Actor-Critic (SAC, \cite{haarnoja2018soft}) baseline, which does not use BSMs, is able to solve the problem (top-left inset). Our proposed modifications address the embedding explosion problem and perform on-par with the SAC baseline, both when combined and separately. Furthermore, when distractors are introduced
(top-middle and top-right insets), 
SAC also fails, while our approach combining intrinsic rewards (IR) and robust metric learning still solves the task.
See also Appendix \ref{sec:additional-results:nm3} for results with $N_m = 3$.

\textbf{Noisy Mountain Car}\hspace{0.25em}
We also consider the classic \texttt{MountainCarContinuous-v0} task \cite{moore1990efficient}, in its continuous control form (see Appendix \ref{sec:additional-results:mc} for details).
Since the task was already highly sparse, we modified only the distraction aspect, as in the Cartpole scenario. 
As shown in Fig.\ \ref{fig:gym},
all methods without IR are unable to solve the task and are stuck with rewards close to zero.
However, DBC with IR is unstable and unable to solve the task either.
Only using the normalization allows the agent to succeed; 
furthermore, the inverse dynamics (ID) regularization improves convergence speed and stability.
This trend continues at even higher distraction levels as well
(see Appendix \ref{sec:additional-results:nm3}).

\textbf{Sparse Noisy Pendulum}\hspace{0.25em}
We also modified the continuous \texttt{Pendulum-v0} task 
to have a higher degree of sparsity and distraction. 
Due to space constraints, we relegate details to Appendix \ref{sec:additional-results:pd}.
We find that our method performs on par with DBC, 
but outperforms SAC in the presence of distractions.

\textbf{DeepMind Control Suite}\hspace{0.25em}
The DeepMind Control Suite (DMC) \cite{tassa2018deepmind} contains a set of 3D robotics tasks based on the MuJoCo physics simulator \cite{todorov2012mujoco}. We include results from two DMC tasks, namely, \texttt{cheetah-run} ($\mathrm{dim}(\mathcal{S}){=}18, \mathrm{dim}(\mathcal{A}){=}6$) and \texttt{cartpole-swingup-sparse} ($\mathrm{dim}(\mathcal{S}){=}4, \mathrm{dim}(\mathcal{A}){=}1$), shown in Figure \ref{fig:dmc}. The former is a common benchmark where Zhang et al. \cite{zhang2021invariant} showed that their method performed sub-par in the \textit{absence} of distractors. The latter is a task we select due to reward sparsity. Similarly to \cite{zhang2021invariant}, we train for each task with and without natural video distractions, as illustrated in Figure \ref{fig:dmc}. In all cases, our approach performs significantly better than the DBC baseline \cite{zhang2021invariant}. Without distractors, our method performs competitively with SAC \cite{haarnoja2018soft}, but when distractions are introduced, our approach is slightly more sample-efficient. With these experiments, we verify that our improvements carry over to larger-scale tasks where learning is over raw-pixels.

\section{Conclusion}
\textbf{Limitations and Impact}\hspace{0.25em}
One shortcoming of our approach is the lack of a principled way to set hyper-parameters for IR and ID, which was done empirically. 
Indeed, our use of forward model error and ID regularization itself may not be optimal.
Sometimes, we observe that the embeddings all attain norms close to the maximum allowed value, suggesting alternative approaches to normalization may be helpful.
Finally, IR and ID incur an additional computational cost to training.
In terms of broader impact, 
while our work is foundational RL,
it focuses on removing distractions from internal representations.
However, this culling can remove information that the agent deems unrelated to the task, potentially leading to unsafe decisions in unseen scenarios (e.g., for mission critical systems).

\textbf{Discussion}\hspace{0.25em}
In this work, we investigated the behavior of on-policy deep bisimulation metric learning approaches, which construct efficient neural representations that are invariant to noise and distractors,  without reconstructing the raw input \cite{castro2020scalable, zhang2021invariant}. 
We identified embedding collapse and explosion as potential failure modes of this method via theoretical analysis, and highlighted that it may be especially susceptible to failure in sparse reward settings. 
Our experiments confirmed the dangers of these failure modes and showed that our proposed remedies address the issue. 
In particular, we enforced a norm constraint on state representations, 
and incorporated intrinsic motivation and latent regularization in our technique.
The resulting approach preserves the noise-invariance property of bisimulation metrics while comparing favorably against strong baselines \cite{haarnoja2018soft, zhang2021invariant} on altered versions of classic control tasks, 
which we rendered more challenging 
by sparsifying the rewards and synthesizing distractors, 
as well as harder tasks with visual observations.
Future work includes investigating alternative ways to improve embedding regularization and reward informativeness, as well as more realistic 3D robotics benchmarks, with sparse reward structures and heavy distraction.

\textbf{Acknowledgments}~~We thank Sven Dickinson, Allan Jepson, and Amir-massoud Farahmand for helpful discussions. The support of NSERC (CGSD3-534955-2019) and Samsung AI Research is gratefully acknowledged.

\bibliographystyle{plain}
\bibliography{references}

\newpage
\appendix
\section{Wasserstein Distances}
\label{sec:wasserstein}
\begin{definition}[Wasserstein metric \cite{villani2008optimal}]
\label{def:wass-primal}
Let $d: X \times X \rightarrow [0, \infty)$ be a distance function and $\Omega$ the set of all joint distributions with marginals $\mu$ and $\lambda$ over the space $X$;
\begin{align}
\label{eq:wasserstein}
    W_p(d)(\mu, \lambda) = \left(\inf_{\omega \in \Omega}\mathbb{E}_{(x_1, x_2) \sim \omega}[d(x_1, x_2)^p]  \right)^{\frac{1}{p}}.
\end{align}
\end{definition}
\begin{definition}[Dual formulation of the Wasserstein metric \cite{villani2008optimal}]
Let $d: X \times X \rightarrow [0, \infty)$ be a distance function, and $\mu$ and $\lambda$ marginals over the space $X$;
\begin{align}
\label{eq:wasserstein-dual}
    W_p(d)(\mu, \lambda) = \left(\sup_{\phi \oplus \psi \leq d^p}\mathbb{E}_{x_1 \sim \mu}[\phi(x_1)] + \mathbb{E}_{x_2 \sim \lambda}[\psi(x_2)] \right)^{\frac{1}{p}},
\end{align}
where $\phi \oplus \psi \leq d^p \iff \phi(x) + \psi(y) \leq d{(x, y)}^p, ~\forall(x, y) \in X \times X$.
\end{definition}
This dual formulation takes a simple form for $p=1$:
\begin{align}
\label{eq:wass-dual}
    W_1(d)(\mu, \lambda) = \sup_{f \in \mathrm{Lip}_{1,d}(X)}\mathbb{E}_{x_1 \sim \mu}[f(x_1)] - \mathbb{E}_{x_2 \sim \lambda}[f(x_2)],
\end{align}
where $\mathrm{Lip}_{1,d}(X)$ denotes $1$-Lipschitz functions $f: X \rightarrow \mathbb{R}$ such that $|f(x_1) - f(x_2)| \leq d(x_1, x_2)$.
Note that the 2-Wasserstein metric $W_2(\norm{\cdot}_2)$ (or simply $W_2$) has a closed-form for Gaussians \cite{olkin1982distance}:
\begin{align}
\label{eq: 2-Wasserstein}
    W_2(\mathcal{N}(\mu_i, \Sigma_i), \mathcal{N}(\mu_j, \Sigma_j))^2 = \norm{\mu_i - \mu_j}_2^2 + \norm{\Sigma_i - \Sigma_j}_{\mathcal{F}}^2,
\end{align}
where $\norm{\cdot}_{\mathcal{F}}$ denotes the Frobenius norm. We can observe that for point masses (i.e., $\Sigma_i, \Sigma_j \rightarrow 0$), the 2-Wasserstein metric is equivalent to the Euclidean distance between the two points.
\begin{lemma}[$p$-Wasserstein Inequality \cite{villani2008optimal}]
\label{lemma:wass-lemma}
For any two distributions $\mu, \lambda$, if $p \leq q$:
\begin{align}
    W_p(\mu, \lambda) \leq  W_q(\mu, \lambda).
\end{align}
\end{lemma}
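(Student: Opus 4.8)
The plan is to prove the inequality directly from the primal (coupling) definition of the Wasserstein metric in Definition~\ref{def:wass-primal}, reducing the entire claim to the standard monotonicity of $L^r$ norms over a probability space. The central observation is that any coupling $\omega \in \Omega$ is itself a probability measure on $X \times X$, so for a \emph{fixed} $\omega$ one may compare the two cost functionals $(\mathbb{E}_\omega[d^p])^{1/p}$ and $(\mathbb{E}_\omega[d^q])^{1/q}$ as the $L^p$ and $L^q$ norms of a single nonnegative random variable $d(x_1,x_2)$ under $\omega$.

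First I would fix an arbitrary $\omega \in \Omega$ with marginals $\mu$ and $\lambda$, and apply Jensen's inequality to the convex map $t \mapsto t^{q/p}$ (convex because $q/p \geq 1$) evaluated at the random variable $d(x_1,x_2)^p \geq 0$. This yields $(\mathbb{E}_\omega[d^p])^{q/p} \leq \mathbb{E}_\omega[d^q]$; raising both sides to the power $1/q$ gives the per-coupling bound
\begin{align*}
\left(\mathbb{E}_\omega[d(x_1,x_2)^p]\right)^{1/p} \leq \left(\mathbb{E}_\omega[d(x_1,x_2)^q]\right)^{1/q}.
\end{align*}
Crucially, this holds for every coupling simultaneously, with the \emph{same} $\omega$ appearing on both sides.

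Then I would take the infimum over $\Omega$, paying attention to the direction of the bound. Since $W_p(\mu,\lambda) = \inf_\omega (\mathbb{E}_\omega[d^p])^{1/p}$ is a lower envelope, evaluating the $p$-cost at any particular coupling only upper-bounds $W_p$. I would pick $\omega^\star$ to be an optimal (or, if no minimizer exists, an $\epsilon$-near-optimal) coupling for $W_q$, so that $(\mathbb{E}_{\omega^\star}[d^q])^{1/q} = W_q(\mu,\lambda)$ (up to $\epsilon$). Chaining the two facts gives $W_p(\mu,\lambda) \leq (\mathbb{E}_{\omega^\star}[d^p])^{1/p} \leq (\mathbb{E}_{\omega^\star}[d^q])^{1/q} = W_q(\mu,\lambda)$, and letting $\epsilon \to 0$ closes the argument.

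The proof is essentially routine, so the only point demanding care is the infimum step: one must not take infima independently on the two sides, since the $p$-optimal and $q$-optimal couplings generally differ; instead both cost functionals must be evaluated at the single coupling $\omega^\star$ that is (near-)optimal for the larger exponent $q$. A minimizing-sequence argument dispatches the case where the infimum defining $W_q$ is not attained, which is the only place where existence or compactness considerations enter.
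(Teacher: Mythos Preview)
Your proposal is correct and is the standard argument: fix a coupling, apply the monotonicity of $L^r$ norms on a probability space (equivalently, Jensen for $t\mapsto t^{q/p}$), and then evaluate at a (near-)optimal coupling for $W_q$ so that the infimum step goes in the right direction. The only subtlety---not taking independent infima on the two sides---is exactly the one you flagged, and the $\epsilon$-minimizing-sequence workaround handles the case where the $q$-infimum is not attained.

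There is nothing to compare against in the paper: the authors state this lemma as a citation to Villani and do not supply their own proof. Your write-up is therefore strictly more than what the paper contains, and it matches the classical proof one finds in the cited reference.
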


\begin{lemma}[Bounds on Wasserstein distances \cite{santambrogio2015optimal}]
\label{lemma:wass-diam}
For any two distributions $\mu$, $\lambda$ over a space $X$, for all $p \geq 1$:
\begin{align}
W_1(\mu, \lambda) \leq W_p(\mu, \lambda) \leq \mathrm{diam}(X)^{\frac{p-1}{p}}W_1(\mu, \lambda)^{\frac{1}{p}}.
\end{align}
\end{lemma}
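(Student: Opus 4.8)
The plan is to handle the two inequalities separately, with the left one being essentially immediate and the right one requiring only a short pointwise-domination argument in the primal (coupling) formulation of Definition \ref{def:wass-primal}.

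For the left inequality $W_1(\mu,\lambda) \le W_p(\mu,\lambda)$, I would simply invoke the preceding Lemma \ref{lemma:wass-lemma} (the $p$-Wasserstein inequality) with indices $1 \le p$. If a self-contained argument is preferred, the same conclusion follows from Jensen's inequality: for any coupling $\omega \in \Omega$, concavity of $t \mapsto t^{1/p}$ (for $p \ge 1$) gives $\mathbb{E}_\omega[d] \le (\mathbb{E}_\omega[d^p])^{1/p}$, so $W_1(d)(\mu,\lambda) = \inf_\omega \mathbb{E}_\omega[d] \le (\mathbb{E}_\omega[d^p])^{1/p}$ for every $\omega$; taking the infimum over $\omega$ on the right then yields $W_1 \le W_p$.

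The right inequality is the substantive part. I would work directly from the primal definition $W_p(d)(\mu,\lambda)^p = \inf_{\omega \in \Omega}\mathbb{E}_{(x_1,x_2)\sim\omega}[d(x_1,x_2)^p]$. The key observation is the pointwise bound $d(x_1,x_2)^p = d(x_1,x_2)^{p-1}\, d(x_1,x_2) \le \mathrm{diam}(X)^{p-1}\, d(x_1,x_2)$, valid for every pair $(x_1,x_2)$ since $d(x_1,x_2) \le \mathrm{diam}(X)$ and $p \ge 1$. Fixing an arbitrary coupling $\omega$ and taking expectations gives $\mathbb{E}_\omega[d^p] \le \mathrm{diam}(X)^{p-1}\,\mathbb{E}_\omega[d]$. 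Because $W_p^p$ is the infimum, $W_p^p \le \mathbb{E}_\omega[d^p]$, and combining the two inequalities yields $W_p^p \le \mathrm{diam}(X)^{p-1}\,\mathbb{E}_\omega[d]$ for every $\omega$. Since the left-hand side is a fixed quantity bounded by the right-hand side for all couplings, I may take the infimum over $\omega$ on the right, obtaining $W_p^p \le \mathrm{diam}(X)^{p-1}\, W_1$; raising both sides to the power $1/p$ produces exactly $W_p \le \mathrm{diam}(X)^{(p-1)/p}\, W_1^{1/p}$.

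The only points requiring care are bookkeeping rather than depth. I must avoid presupposing the existence of an optimal coupling, which is why the argument fixes an arbitrary $\omega$, applies the bound, and only then passes to the infimum on the dominating side. I should also note the degenerate case: when $X$ is unbounded, $\mathrm{diam}(X)=\infty$ and the right bound is vacuously true, whereas the compactness of $X$ assumed elsewhere guarantees finiteness and hence a meaningful estimate. I expect the main obstacle to be purely notational, namely keeping the direction of the infimum manipulations consistent, rather than any genuine analytic difficulty.
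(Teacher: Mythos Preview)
Your proof is correct and follows the standard argument. However, the paper does not actually provide its own proof of this lemma: it is stated in Appendix~\ref{sec:wasserstein} with a citation to Santambrogio's textbook \cite{santambrogio2015optimal} and then simply used as a tool elsewhere (e.g., in the proofs of Lemma~\ref{lemma:diam}, Theorem~\ref{thm:boundedness}, and Lemma~\ref{lemma:distanceerror}). So there is nothing to compare against; your write-up supplies a clean self-contained justification where the paper defers to the reference.
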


\section{Proofs}
\label{sec:proofs}

\existenceremark*
\begin{proof}
The existence proof is virtually identical to the proof of Thm. 3.12 of \cite{Ferns2011Bisimulation}, except it discards $\max_{\mathbf{a} \in \mathcal{A}}$ operations in favor of expectations under a policy $\pi$. We need to show that the following fixed-point update is a contraction:
\begin{align*}
    \mathcal{F}(d_\pi)(\mathbf{s}_i, \mathbf{s}_j) &\coloneqq c_R|r^{\pi}_i - r^{\pi}_j| ~+~ c_T
W_{p}(d_{\pi})(\mathcal{P}^\pi(\cdot| \mathbf{s}_i), \mathcal{P}^\pi(\cdot| \mathbf{s}_j)),
\end{align*}
and invoke the Banach fixed-point theorem to show the existence of a unique metric.

First, consider the case where $p=1$:
\begin{align*}
    & \mathcal{F}(d_\pi)(\mathbf{s}_i, \mathbf{s}_j) - \mathcal{F}(d_\pi^\prime)(\mathbf{s}_i, \mathbf{s}_j) \\
    &= c_T \left(W_1(d_\pi)(\mathcal{P}^\pi(\cdot | \mathbf{s}_i), \mathcal{P}^\pi(\cdot | \mathbf{s}_j)) -  W_1(d_\pi^\prime)(\mathcal{P}^\pi(\cdot | \mathbf{s}_i), \mathcal{P}^\pi(\cdot | \mathbf{s}_j)) \right)\\
    & = c_T \left(W_1(d_\pi-d_\pi^\prime+d_\pi^\prime)(\mathcal{P}^\pi(\cdot | \mathbf{s}_i), \mathcal{P}^\pi(\cdot | \mathbf{s}_j)) -  W_1(d_\pi^\prime)(\mathcal{P}^\pi(\cdot | \mathbf{s}_i), \mathcal{P}^\pi(\cdot | \mathbf{s}_j)) \right)\\
    & \leq c_T \left(W_1(\norm{d_\pi-d_\pi^\prime}_\infty+d_\pi^\prime)(\mathcal{P}^\pi(\cdot | \mathbf{s}_i), \mathcal{P}^\pi(\cdot | \mathbf{s}_j)) -  W_1(d_\pi^\prime)(\mathcal{P}^\pi(\cdot | \mathbf{s}_i), \mathcal{P}^\pi(\cdot | \mathbf{s}_j)) \right)\\
    & \leq c_T \left(\norm{d_\pi-d_\pi^\prime}_\infty + W_1(d_\pi^\prime)(\mathcal{P}^\pi(\cdot | \mathbf{s}_i), \mathcal{P}^\pi(\cdot | \mathbf{s}_j)) -  W_1(d_\pi^\prime)(\mathcal{P}^\pi(\cdot | \mathbf{s}_i), \mathcal{P}^\pi(\cdot | \mathbf{s}_j)) \right) \\
    &= c_T\norm{d_\pi-d_\pi^\prime}_\infty, ~\forall(\mathbf{s}_i, \mathbf{s}_j) \in \mathcal{S} \times \mathcal{S}.
\end{align*}
For $c_T \in [0, 1)$, there exists a unique fixed-point due to the Banach fixed-point theorem.

Next, we consider the case where both $\mathcal{P}$ and $\pi$ are deterministic, such that $\mathcal{P}^\pi$ is a delta distribution. Observe that for point masses, $W_p(d)(\delta(\mathbf{s}_i), \delta(\mathbf{s}_j)) = d(\mathbf{s}_i, \mathbf{s}_j)$, due to Definition \ref{def:wass-primal} of the Wasserstein metric. Then:
\begin{align*}
    \mathcal{F}(d_\pi)(\mathbf{s}_i, \mathbf{s}_j) - \mathcal{F}(d_\pi^\prime)(\mathbf{s}_i, \mathbf{s}_j) &= c_T \left(W_p(d_\pi)(\mathcal{P}^\pi(\cdot | \mathbf{s}_i), \mathcal{P}^\pi(\cdot | \mathbf{s}_j)) -  W_p(d_\pi^\prime)(\mathcal{P}^\pi(\cdot | \mathbf{s}_i), \mathcal{P}^\pi(\cdot | \mathbf{s}_j)) \right)\\
    & = c_T (d_\pi(\mathbf{s}_i^\prime, \mathbf{s}_j^\prime) - d_\pi^\prime(\mathbf{s}_i^\prime, \mathbf{s}_j^\prime)) \\
    & \leq c_T\norm{d_\pi-d_\pi^\prime}_\infty, ~\forall(\mathbf{s}_i, \mathbf{s}_j) \in \mathcal{S} \times \mathcal{S}.
\end{align*}
Then, fixed point iterations that update the metric as $d^{(n+1)}(\mathbf{s}_i, \mathbf{s}_j) \leftarrow \mathcal{F}(d^{(n)})(\mathbf{s}_i, \mathbf{s}_j)$ will converge for finite MDPs.
\end{proof}

\begin{restatable}[$p$-Wasserstein value difference bound]{lemma}{vfabound}
\label{lemma:p-wasserstein}
For an on-policy bisimulation metric given by Eq. \eqref{eq:p-Wass-bisim-metric},
for any $c_T \in [\gamma, 1)$ and $p \geq 1$, the bisimulation distance between a pair of states upper-bounds the difference in their values:
\begin{align}
\label{eq:vfa}
    c_R|V^\pi(\mathbf{s}_i) - V^\pi(\mathbf{s}_j)| ~\leq~ d_\pi(\mathbf{s}_i, \mathbf{s}_j), ~\forall(\mathbf{s}_i, \mathbf{s}_j) \in \mathcal{S} \times \mathcal{S}.
\end{align}
\end{restatable}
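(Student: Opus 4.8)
The plan is to exploit the fixed-point characterization of $d_\pi$ together with the Bellman recursion for $V^\pi$, via a pre-fixed-point argument. Define the bounded pseudometric $g(\mathbf{s}_i,\mathbf{s}_j) \coloneqq c_R\,|V^\pi(\mathbf{s}_i) - V^\pi(\mathbf{s}_j)|$; since $R \in [0,1]$ and $\gamma < 1$, $V^\pi$ is bounded, so $g$ is an element of the space $\mathfrak{met}$ on which the bisimulation operator $\mathcal{F}(d)(\mathbf{s}_i,\mathbf{s}_j) \coloneqq c_R|r_i^\pi - r_j^\pi| + c_T W_p(d)(\mathcal{P}^\pi(\cdot|\mathbf{s}_i),\mathcal{P}^\pi(\cdot|\mathbf{s}_j))$ acts, and by \hyperref[assum:p-wass]{A1} the metric $d_\pi$ is its unique fixed point. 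The goal is precisely to show $g \le d_\pi$ pointwise.

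First I would establish the pre-fixed-point inequality $g \le \mathcal{F}(g)$. Writing $V^\pi(\mathbf{s}) = r^\pi_\mathbf{s} + \gamma\,\mathbb{E}_{\mathbf{s}'\sim\mathcal{P}^\pi(\cdot|\mathbf{s})}[V^\pi(\mathbf{s}')]$, subtracting the two recursions and applying the triangle inequality gives $g(\mathbf{s}_i,\mathbf{s}_j) \le c_R|r_i^\pi - r_j^\pi| + \gamma\,\bigl|\,\mathbb{E}_{\mathcal{P}^\pi(\cdot|\mathbf{s}_i)}[c_R V^\pi] - \mathbb{E}_{\mathcal{P}^\pi(\cdot|\mathbf{s}_j)}[c_R V^\pi]\,\bigr|$. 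By construction the function $c_R V^\pi$ is $1$-Lipschitz with respect to $g$, so the Kantorovich--Rubinstein dual form of $W_1$ (Eq.\ \eqref{eq:wass-dual}), applied with $f = \pm c_R V^\pi$, bounds the second term by $W_1(g)(\mathcal{P}^\pi(\cdot|\mathbf{s}_i),\mathcal{P}^\pi(\cdot|\mathbf{s}_j))$. Finally, Lemma \ref{lemma:wass-lemma} gives $W_1(g) \le W_p(g)$, and since $c_T \ge \gamma$ and $W_p(g)\ge 0$ the coefficient $\gamma$ can be replaced by $c_T$; chaining these inequalities yields $g(\mathbf{s}_i,\mathbf{s}_j) \le \mathcal{F}(g)(\mathbf{s}_i,\mathbf{s}_j)$.

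Then I would conclude by a monotone-iteration / squeeze argument. The operator $\mathcal{F}$ is monotone in its metric argument, which is immediate from the primal definition of $W_p$ (Definition \ref{def:wass-primal}), since $d \le d'$ implies $d^p \le (d')^p$. Iterating $\mathcal{F}$ on $g \le \mathcal{F}(g)$ produces a nondecreasing sequence $g \le \mathcal{F}(g) \le \mathcal{F}^{(2)}(g) \le \cdots$; under \hyperref[assum:p-wass]{A1}, the fixed-point iteration $\mathcal{F}^{(n)}(d) \to d_\pi$ converges in sup norm from any starting point $d \in \mathfrak{met}$, so passing to the limit preserves the pointwise inequality and gives $g \le d_\pi$, i.e.\ $c_R|V^\pi(\mathbf{s}_i)-V^\pi(\mathbf{s}_j)| \le d_\pi(\mathbf{s}_i,\mathbf{s}_j)$.

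The step I expect to be the main obstacle is the final one: monotonicity of $\mathcal{F}$ by itself does not force $g \le d_\pi$ (a monotone map with a unique fixed point need not dominate its pre-fixed points), so the argument genuinely requires the contraction property underpinning \hyperref[assum:p-wass]{A1} in order to identify the limit of the nondecreasing iterates $\mathcal{F}^{(n)}(g)$ as $d_\pi$. The remaining steps --- the triangle-inequality split of the Bellman recursion, the Lipschitz/duality bound, and the $W_1 \le W_p$ comparison together with $\gamma \le c_T$ --- are routine once $g$ is set up as the right object.
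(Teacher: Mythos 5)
Your proof is correct, and it is organized differently from the paper's. The paper runs a parallel induction on the two fixed-point iterations started from zero, $V^{(n+1)} = r^\pi + \gamma \mathcal{P}^\pi V^{(n)}$ and $d^{(n+1)}_\pi = \mathcal{F}(d^{(n)}_\pi)$, showing $c_R|V^{(n)}(\mathbf{s}_i)-V^{(n)}(\mathbf{s}_j)| \le d^{(n)}_\pi(\mathbf{s}_i,\mathbf{s}_j)$ inductively (the induction hypothesis supplies the $1$-Lipschitzness of $c_R V^{(n)}$ w.r.t.\ $d^{(n)}_\pi$) and then passing to the limit; you instead work directly with the fixed objects, noting that $c_R V^\pi$ is $1$-Lipschitz w.r.t.\ $g \coloneqq c_R|V^\pi(\cdot)-V^\pi(\cdot)|$ by construction, which yields the pre-fixed-point inequality $g \le \mathcal{F}(g)$, and then conclude by monotonicity of $\mathcal{F}$ and convergence of $\mathcal{F}^{(n)}(g)$ to $d_\pi$. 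The core inequality chain (Bellman split, triangle inequality, Kantorovich--Rubinstein duality, $W_1 \le W_p$, and $\gamma \le c_T$) is the same in both, so the difference is the wrapper: your route avoids the parallel induction at the price of needing $\mathcal{F}^{(n)}(g)\to d_\pi$ from the starting point $g$ rather than from $0$. The obstacle you flag is real but puts you on exactly the same footing as the paper: \hyperref[assum:p-wass]{A1} as stated only asserts existence and uniqueness, and both proofs implicitly use convergence of the fixed-point iteration, which holds in the regimes where A1 is actually established (Remark \ref{remark:rem1}, where $\mathcal{F}$ is a $c_T$-contraction). If you want to avoid the limit altogether, note that from $g \le \mathcal{F}(g)$, monotonicity, and the shift bound $W_p(d + \kappa) \le W_p(d) + \kappa$ (Minkowski), one gets $g \le d_\pi + c_T M^+$ with $M \coloneqq \sup (g - d_\pi)$, whence $M \le c_T M^+$ forces $M \le 0$ directly under the contraction hypothesis.
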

\begin{proof}
The proof follows similarly to the proofs of Thm. 5.1 of \cite{Ferns2004Metrics} and Thm. 3 of \cite{castro2020scalable}. We will prove by induction. Consider the following updates:
\begin{align*}
    V^{(n+1)}(\mathbf{s}_i) &= r_i^\pi + \gamma \int_{\mathbf{s}^\prime \in \mathcal{S}}\mathcal{P}^\pi(\mathbf{s}^\prime|\mathbf{s}_i)V^{(n)}(\mathbf{s}^\prime)d\mathbf{s}^\prime\\
    d^{(n+1)}_\pi(\mathbf{s}_i, \mathbf{s}_j) &= c_R|r_i^\pi - r_j^\pi| ~+~ c_T
W_{p}(d^{(n)}_\pi)(\mathcal{P}^\pi(\cdot| \mathbf{s}_i), \mathcal{P}^\pi(\cdot| \mathbf{s}_j)).
\end{align*}
We need to show that the following holds for all $n \in \mathbb{N}$:
\begin{align*}
        c_R\left|V^{(n)}(\mathbf{s}_i) - V^{(n)}(\mathbf{s}_j)\right| ~\leq~ d_\pi^{(n)}(\mathbf{s}_i, \mathbf{s}_j), ~\forall(\mathbf{s}_i, \mathbf{s}_j) \in \mathcal{S} \times \mathcal{S}.
\end{align*}
Then, Eq. \eqref{eq:vfa} holds by taking a limit $n \rightarrow \infty$.
The base case holds since:
\begin{align*}
    \left|V^{(0)}(\mathbf{s}_i) - V^{(0)}(\mathbf{s}_j)\right|=d^{(0)}_\pi(\mathbf{s}_i, \mathbf{s}_j)=0, ~\forall (\mathbf{s}_i, \mathbf{s}_j) \in \mathcal{S} \times \mathcal{S}.
\end{align*}
In the general case:
\begin{align*}
    c_R|V^{(n+1)}(\mathbf{s}_i) - V^{(n+1)}(\mathbf{s}_j)| &= c_R \left|r_i^\pi - r_j^\pi + \gamma \int_{\mathbf{s}^\prime \in \mathcal{S}}\left(\mathcal{P}^\pi(\mathbf{s}^\prime|\mathbf{s}_i)-\mathcal{P}^\pi(\mathbf{s}^\prime|\mathbf{s}_j)\right)V^{(n)}(\mathbf{s}^\prime)d\mathbf{s}^\prime\right|\\
    &\leq c_R\left|r_i^\pi - r_j^\pi\right| + c_R \gamma \left|\int_{\mathbf{s}^\prime \in \mathcal{S}}\left(\mathcal{P}^\pi(\mathbf{s}^\prime|\mathbf{s}_i)-\mathcal{P}^\pi(\mathbf{s}^\prime|\mathbf{s}_j)\right)V^{(n)}(\mathbf{s}^\prime)d\mathbf{s}^\prime\right| \\
    &= c_R\left|r_i^\pi - r_j^\pi\right| + c_T \left|\int_{\mathbf{s}^\prime \in \mathcal{S}}\left(\mathcal{P}^\pi(\mathbf{s}^\prime|\mathbf{s}_i)-\mathcal{P}^\pi(\mathbf{s}^\prime|\mathbf{s}_j)\right)\frac{c_R \gamma}{c_T}V^{(n)}(\mathbf{s}^\prime)d\mathbf{s}^\prime\right|.
\end{align*}
Notice that by the induction hypothesis, $c_R V^{(n)}(\mathbf{s})$ is a 1-Lipschitz function with respect to the distance function $d_\pi^{(n)}$, i.e., $c_R V^{(n)}(\mathbf{s}) \in \mathrm{Lip}_{1, d_\pi^{(n)}}$. Then, since $\gamma \leq c_T$ by assumption, $\frac{c_R \gamma}{c_T}V^{(n)}(\mathbf{s})$ is also 1-Lipschitz. Using the dual form of the $W_1$ metric in Eq. \eqref{eq:wass-dual}:
\begin{align*}
    c_R|V^{(n+1)}(\mathbf{s}_i) - V^{(n+1)}(\mathbf{s}_j)| &\leq c_R\left|r_i^\pi - r_j^\pi\right| + c_T W_1(d_\pi^{(n)})(\mathcal{P}^\pi(\cdot|\mathbf{s}_i), \mathcal{P}^\pi(\cdot|\mathbf{s}_j))\\
    &\leq c_R\left|r_i^\pi - r_j^\pi\right| + c_T W_p(d_\pi^{(n)})(\mathcal{P}^\pi(\cdot|\mathbf{s}_i), \mathcal{P}^\pi(\cdot|\mathbf{s}_j))\\
    &= d_\pi^{(n+1)},
\end{align*}
where the last inequality is due to Lemma \ref{lemma:wass-lemma}.
\end{proof}

\begin{lemma}[Value function difference bound for different discount factors \cite{petrik2008biasing}]
\label{lemma:diff-gamma}
Consider two otherwise identical MDPs with different discount factors $\gamma_1 \leq \gamma_2$, and a bounded reward function $R \in [0, 1]$. Let $V^\pi_{\gamma}$ denote the value function for policy $\pi$ given discount factor $\gamma$.
\begin{align}
    \left|V_{\gamma_1}^\pi(\mathbf{s}) - V_{\gamma_2}^\pi(\mathbf{s}) \right| \leq \frac{\gamma_2 - \gamma_1}{(1-\gamma_1)(1-\gamma_2)}, \forall \mathbf{s} \in \mathcal{S}.
\end{align}
\end{lemma}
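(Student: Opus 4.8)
The plan is to exploit the hypothesis that the two MDPs are \emph{otherwise identical}: they share the same transition kernel $\mathcal{P}$, the same reward function $R$, and the same policy $\pi$, so the induced probability law over trajectories $\tau = (\mathbf{s}_0, \mathbf{a}_0, \mathbf{s}_1, \mathbf{a}_1, \dots)$ starting from $\mathbf{s}_0 = \mathbf{s}$ is \emph{the same} for both discount factors; only the aggregation of the rewards $r_t = R(\mathbf{s}_t, \mathbf{a}_t)$ differs. Writing $V_\gamma^\pi(\mathbf{s}) = \mathbb{E}[\sum_{t \geq 0}\gamma^t r_t \mid \mathbf{s}_0 = \mathbf{s}]$ over this common trajectory distribution, I would first move the difference of the two discounted returns inside a single expectation:
\begin{align*}
V_{\gamma_2}^\pi(\mathbf{s}) - V_{\gamma_1}^\pi(\mathbf{s}) = \mathbb{E}\left[\sum_{t \geq 0}(\gamma_2^t - \gamma_1^t)\, r_t \,\middle|\, \mathbf{s}_0 = \mathbf{s}\right].
\end{align*}
The interchange of the infinite sum with the expectation, and the termwise subtraction of the two series, are both justified by $R \in [0,1]$ and $\gamma_1, \gamma_2 \in [0,1)$, which make each discounted return absolutely convergent (Tonelli).

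Next I would bound the summand pointwise along every trajectory. Since $\gamma_1 \leq \gamma_2$, each coefficient satisfies $\gamma_2^t - \gamma_1^t \geq 0$; combined with $0 \leq r_t \leq 1$ this gives $0 \leq (\gamma_2^t - \gamma_1^t)\, r_t \leq \gamma_2^t - \gamma_1^t$. Summing over $t$ and taking expectations yields
\begin{align*}
0 ~\leq~ V_{\gamma_2}^\pi(\mathbf{s}) - V_{\gamma_1}^\pi(\mathbf{s}) ~\leq~ \sum_{t \geq 0}\bigl(\gamma_2^t - \gamma_1^t\bigr) = \frac{1}{1-\gamma_2} - \frac{1}{1-\gamma_1} = \frac{\gamma_2 - \gamma_1}{(1-\gamma_1)(1-\gamma_2)}.
\end{align*}
Since the difference $V_{\gamma_2}^\pi(\mathbf{s}) - V_{\gamma_1}^\pi(\mathbf{s})$ is nonnegative, its absolute value is bounded by the same quantity, which is exactly the claimed inequality, uniformly over $\mathbf{s} \in \mathcal{S}$.

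There is essentially no hard step here; the result is elementary once one observes that the trajectory law is shared between the two MDPs. The only point that requires a moment's care is the legitimacy of collapsing the two discounted sums into one expectation and comparing them term by term, which rests precisely on the ``otherwise identical'' hypothesis together with bounded rewards ensuring absolute convergence. An alternative route would be a contraction argument on $\Delta(\mathbf{s}) := V_{\gamma_2}^\pi(\mathbf{s}) - V_{\gamma_1}^\pi(\mathbf{s})$ using the two Bellman equations (writing $\gamma_2^t = \gamma_2(\gamma_1 + (\gamma_2 - \gamma_1))\cdots$ recursively), but the direct geometric-series comparison above is shorter and self-contained.
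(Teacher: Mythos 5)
Your proof is correct. Note that the paper does not prove this lemma at all: its ``proof'' is a one-line citation to Thm.~2 of Petrik and Scherrer \cite{petrik2008biasing}, so there is no internal argument to compare against. Your route --- observing that the two MDPs induce the same trajectory law, collapsing the two returns into a single expectation of $\sum_{t\ge 0}(\gamma_2^t-\gamma_1^t)r_t$, and bounding termwise using $r_t\in[0,1]$ together with the geometric-series identity $\sum_{t\ge 0}(\gamma_2^t-\gamma_1^t)=\frac{\gamma_2-\gamma_1}{(1-\gamma_1)(1-\gamma_2)}$ --- is a clean, self-contained alternative to the operator-style argument in the cited reference (which works with $(I-\gamma P^\pi)^{-1}$ and a resolvent-type identity, essentially the same geometric series in disguise). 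Your version has the minor advantages of applying verbatim to general (continuous) state spaces without any matrix machinery and of delivering the extra monotonicity fact $V_{\gamma_1}^\pi(\mathbf{s})\le V_{\gamma_2}^\pi(\mathbf{s})$ for nonnegative rewards; the justification of the sum--expectation interchange via nonnegativity/absolute convergence is exactly the right amount of care. No gaps.
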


\begin{proof}
See Thm. 2 of \cite{petrik2008biasing}.
\end{proof}

\generalizedvdbound*
\begin{proof}
A bisimulation metric with $c_T \leq \gamma$ can be viewed as approximating a value function for another MDP with $\gamma^\prime = c_T$. We will make use of this view and apply Lemma \ref{lemma:p-wasserstein} with Lemma \ref{lemma:diff-gamma} to derive the above relation.

First, note that by Lemma \ref{lemma:p-wasserstein}:
\begin{align*}
    c_R|V_{c_T}^\pi(\mathbf{s}_i) - V_{c_T}^\pi(\mathbf{s}_j)| ~\leq~ d_\pi(\mathbf{s}_i, \mathbf{s}_j), ~\forall(\mathbf{s}_i, \mathbf{s}_j) \in \mathcal{S} \times \mathcal{S}.
\end{align*}
Then;
\begin{align*}
    c_R|V^\pi(\mathbf{s}_i) - V^\pi(\mathbf{s}_j)| &= c_R\left|V^\pi(\mathbf{s}_i) - V_{c_T}^\pi(\mathbf{s}_i) + V_{c_T}^\pi(\mathbf{s}_i) - V^\pi(\mathbf{s}_j) + V_{c_T}^\pi(\mathbf{s}_j) - V_{c_T}^\pi(\mathbf{s}_j)\right| \\
    & \leq c_R \left(|V^\pi(\mathbf{s}_i) - V_{c_T}^\pi(\mathbf{s}_i)| + |V^\pi(\mathbf{s}_j) - V_{c_T}^\pi(\mathbf{s}_j)| + |V_{c_T}^\pi(\mathbf{s}_i) - V_{c_T}^\pi(\mathbf{s}_j)| \right) \\
    & \leq d_\pi(\mathbf{s}_i, \mathbf{s}_j) + c_R\left(|V^\pi(\mathbf{s}_i) - V_{c_T}^\pi(\mathbf{s}_i)| + |V^\pi(\mathbf{s}_j) - V_{c_T}^\pi(\mathbf{s}_j)|\right) \\
    &\leq d_\pi(\mathbf{s}_i, \mathbf{s}_j) + \frac{2c_R(\gamma - c_T)}{(1-\gamma)(1-c_T)},
\end{align*}
where the last inequality is due to Lemma \ref{lemma:diff-gamma}. Due to Lemma \ref{lemma:p-wasserstein}:
\begin{align}
    c_R|V^\pi(\mathbf{s}_i) - V^\pi(\mathbf{s}_j)| ~\leq~ d_\pi(\mathbf{s}_i, \mathbf{s}_j) +  \frac{2c_R(\gamma - \min(c_T, \gamma))}{(1-\gamma)(1-c_T)}.
\end{align}
\end{proof}

\begin{restatable}[On-policy VFA bound]{lemma}{onpolicyvfabound}
\label{lemma:on-policy-vfa}
Let the reward function be bounded as $R \in [0, 1]$ and $\Phi: \mathcal{S} \rightarrow \widetilde{\mathcal{S}}$ a function mapping states to a finite partitioning $\widetilde{\mathcal{S}}$ such that $\Phi(\mathbf{s}_i) = \Phi(\mathbf{s}_j) \Rightarrow d_\pi(\mathbf{s}_i, \mathbf{s}_j) \leq 2\epsilon$, which produces an aggregated MDP $\langle \widetilde{\mathcal{S}}, \mathcal{A}, \widetilde{\mathcal{P}}, \widetilde{R}, \widetilde{\rho_0} \rangle$. For $c_T \in [\gamma, 1)$:
\begin{align}
    |V^\pi(\mathbf{s}_i) - \widetilde{V}^\pi(\Phi(\mathbf{s}_i))| ~\leq~ \frac{2\epsilon}{c_R(1-\gamma)}, ~\forall\mathbf{s}_i \in \mathcal{S}.
\end{align}
\end{restatable}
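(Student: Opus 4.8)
The plan is to run value iteration inside the finite aggregated MDP, lift the iterates back to $\mathcal{S}$ through $\Phi$, and bound the gap to $V^\pi$ by a one-step contraction estimate. First I would fix the aggregated MDP in the standard way: for each block $\widetilde{\mathbf{s}}\in\widetilde{\mathcal{S}}$ choose a probability measure $\omega_{\widetilde{\mathbf{s}}}$ supported on $\Phi^{-1}(\widetilde{\mathbf{s}})$ and set $\widetilde r^\pi(\widetilde{\mathbf{s}})=\mathbb{E}_{\mathbf{u}\sim\omega_{\widetilde{\mathbf{s}}}}[r^\pi_{\mathbf{u}}]$ and $\widetilde{\mathcal{P}}^\pi(\widetilde{\mathbf{s}}'|\widetilde{\mathbf{s}})=\mathbb{E}_{\mathbf{u}\sim\omega_{\widetilde{\mathbf{s}}}}[\mathcal{P}^\pi(\Phi^{-1}(\widetilde{\mathbf{s}}')|\mathbf{u})]$; the constant we obtain will not depend on this choice. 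The two imported facts are Lemma~\ref{lemma:p-wasserstein} --- since $c_T\in[\gamma,1)$, $c_R V^\pi\in\mathrm{Lip}_{1,d_\pi}$, so by the dual form~\eqref{eq:wass-dual} $|\mathbb{E}_{\mu}[V^\pi]-\mathbb{E}_{\lambda}[V^\pi]|\le \tfrac{1}{c_R}W_1(d_\pi)(\mu,\lambda)$ for all $\mu,\lambda$ --- and the Wasserstein ordering $W_1\le W_p$ (Lemma~\ref{lemma:wass-lemma}).

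Next I would set $\widetilde V^{(0)}\equiv 0$, iterate the policy Bellman operator of the aggregated MDP (which converges, the state space being finite) to get $\widetilde V^{(n)}\to\widetilde V^\pi$, and define the lifts $\bar V^{(n)}:=\widetilde V^{(n)}\circ\Phi$ with $D_n:=\norm{V^\pi-\bar V^{(n)}}_\infty$, noting $D_0=\norm{V^\pi}_\infty\le \tfrac{1}{1-\gamma}<\infty$. Writing $B=\Phi^{-1}(\Phi(\mathbf{s}))$ and expanding both Bellman backups, the central identity is
\begin{align*}
V^\pi(\mathbf{s})-\bar V^{(n+1)}(\mathbf{s}) &= \mathbb{E}_{\mathbf{u}\sim\omega_B}\big[r^\pi_{\mathbf{s}}-r^\pi_{\mathbf{u}}\big] \\
&\quad + \gamma\,\mathbb{E}_{\mathbf{u}\sim\omega_B}\big[\mathbb{E}_{\mathcal{P}^\pi(\cdot|\mathbf{s})}[V^\pi]-\mathbb{E}_{\mathcal{P}^\pi(\cdot|\mathbf{u})}[\bar V^{(n)}]\big].
\end{align*}
Inside the last expectation I would split $\mathbb{E}_{\mathcal{P}^\pi(\cdot|\mathbf{u})}[\bar V^{(n)}]=\mathbb{E}_{\mathcal{P}^\pi(\cdot|\mathbf{u})}[V^\pi]+\mathbb{E}_{\mathcal{P}^\pi(\cdot|\mathbf{u})}[\bar V^{(n)}-V^\pi]$, the second summand contributing at most $\gamma D_n$ in absolute value. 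The remaining term, for fixed $\mathbf{u}\in B$, is $(r^\pi_{\mathbf{s}}-r^\pi_{\mathbf{u}})+\gamma(\mathbb{E}_{\mathcal{P}^\pi(\cdot|\mathbf{s})}[V^\pi]-\mathbb{E}_{\mathcal{P}^\pi(\cdot|\mathbf{u})}[V^\pi])$, of magnitude $\le |r^\pi_{\mathbf{s}}-r^\pi_{\mathbf{u}}|+\tfrac{\gamma}{c_R}W_1(d_\pi)(\mathcal{P}^\pi(\cdot|\mathbf{s}),\mathcal{P}^\pi(\cdot|\mathbf{u}))$ by the Lipschitz fact; using $\gamma\le c_T$ and $W_1\le W_p$, this is $\le \tfrac{1}{c_R}\big(c_R|r^\pi_{\mathbf{s}}-r^\pi_{\mathbf{u}}|+c_T W_p(d_\pi)(\mathcal{P}^\pi(\cdot|\mathbf{s}),\mathcal{P}^\pi(\cdot|\mathbf{u}))\big)=\tfrac{1}{c_R}d_\pi(\mathbf{s},\mathbf{u})$ by~\eqref{eq:p-Wass-bisim-metric}, and $d_\pi(\mathbf{s},\mathbf{u})\le 2\epsilon$ since $\mathbf{s},\mathbf{u}$ share a block. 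Taking the outer $\omega_B$-expectation and absolute values gives $D_{n+1}\le \tfrac{2\epsilon}{c_R}+\gamma D_n$.

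Finally, unrolling yields $D_n\le \tfrac{2\epsilon}{c_R}\cdot\tfrac{1-\gamma^n}{1-\gamma}+\gamma^n D_0$, and letting $n\to\infty$ (so $\bar V^{(n)}\to\widetilde V^\pi\circ\Phi$ uniformly over the finite $\widetilde{\mathcal{S}}$) gives $\norm{V^\pi-\widetilde V^\pi\circ\Phi}_\infty\le\tfrac{2\epsilon}{c_R(1-\gamma)}$, i.e.\ the claim. The step I expect to be the main obstacle --- and the reason the constant comes out sharp --- is the decomposition in the second paragraph: one must compare the \emph{exact} $V^\pi$ (whose $d_\pi$-Lipschitzness is known) against the lifted iterate $\bar V^{(n)}$, so the discrepancy $\bar V^{(n)}-V^\pi$ enters only once and costs a single $\gamma D_n$; a direct fixed-point comparison of $V^\pi$ with $\widetilde V^\pi\circ\Phi$ instead picks up a term $2\gamma\norm{V^\pi-\widetilde V^\pi\circ\Phi}_\infty$ and only closes the recursion for $\gamma<1/3$. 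The per-block averaging built into $\widetilde r^\pi,\widetilde{\mathcal{P}}^\pi$ is what lets us pull everything under the single outer expectation $\mathbb{E}_{\mathbf{u}\sim\omega_B}$ and then argue pointwise in $\mathbf{u}$, so no Wasserstein-convexity argument is needed.
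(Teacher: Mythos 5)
Your proof is correct and is essentially the paper's own argument: the same per-block averaged aggregated MDP, the same asymmetric split of the next-state term (dynamics compared at the fixed $V^\pi$ via the $W_1$ dual form with $c_R V^\pi \in \mathrm{Lip}_{1,d_\pi}$ from Lemma \ref{lemma:p-wasserstein}, plus $\gamma \le c_T$ and $W_1 \le W_p$ to recover $d_\pi(\mathbf{s},\mathbf{u}) \le 2\epsilon$), differing only in that you close the recursion by unrolling value iteration in the aggregated MDP, whereas the paper compares the fixed points directly and rearranges $\norm{V^\pi - \widetilde{V}^\pi\circ\Phi}_\infty \le \tfrac{2\epsilon}{c_R} + \gamma \norm{V^\pi - \widetilde{V}^\pi\circ\Phi}_\infty$. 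One small correction to your closing remark: the paper's direct fixed-point comparison uses exactly your asymmetric decomposition, so it also incurs the discrepancy term only once and closes for all $\gamma < 1$, not merely $\gamma < 1/3$; your iterative variant's only real advantage is making the finiteness of the sup-norm gap explicit.
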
\begin{proof}
Let $\xi$ be a measure on $\mathcal{S}$. Given a partition $\Phi(\mathbf{s}) \in \widetilde{\mathcal{S}}$, i.e., a set of points in $\mathcal{S}$ clustered in an $\epsilon$-neighborhood such that $\xi(\Phi(\mathbf{s})) > 0$, we can define the reward function and transition probabilities of a $\xi$-average finite MDP as in Thm. 3.21 of \cite{Ferns2011Bisimulation}:
\begin{align*}
    \tilde{r}^\pi(\Phi(\mathbf{s})) &= \frac{1}{\xi(\Phi(\mathbf{s}))}\int\limits_{\mathbf{z} \in \Phi(\mathbf{s})} r^\pi(\mathbf{z})d\xi(\mathbf{z}), \\
    \widetilde{\mathcal{P}}^\pi(\Phi(\mathbf{s}^\prime)|\Phi(\mathbf{s})) &= \frac{1}{\xi(\Phi(\mathbf{s}))}\int\limits_{\mathbf{z} \in \Phi(\mathbf{s})} \mathcal{P}^\pi(\Phi(\mathbf{s}^\prime)| \mathbf{z})d\xi(\mathbf{z}).
\end{align*}
Then,
\begin{align*}
    \begin{split}
        &|V^\pi(\mathbf{s}) - \widetilde{V}^\pi(\Phi(\mathbf{s}))| \\
        &= \bigg| r^\pi(\mathbf{s}) - \tilde{r}^\pi(\Phi(\mathbf{s})) + \gamma  \int\limits_{\mathbf{s}^\prime \in \mathcal{S}} \mathcal{P}^\pi(\mathbf{s}^\prime|\mathbf{s})V^\pi(\mathbf{s}^\prime)d\mathbf{s}^\prime - \gamma \hspace{-8pt} \int\limits_{\Phi(\mathbf{s}^\prime) \in \widetilde{\mathcal{S}}} \hspace{-8pt} \widetilde{\mathcal{P}}^\pi(\Phi(\mathbf{s}^\prime)|\Phi(\mathbf{s}))\widetilde{V}^\pi(\Phi(\mathbf{s}^\prime))d\Phi(\mathbf{s}^\prime) \bigg| 
    \end{split} 
    \\ 
    \begin{split}
    & \leq \frac{1}{\xi(\Phi(\mathbf{s}))}\int\limits_{\mathbf{z} \in \Phi(\mathbf{s})} \left | r^\pi(\mathbf{s}) - r^\pi(\mathbf{z}) \right| + \gamma \left|~ \int\limits_{\mathbf{s}^\prime \in \mathcal{S}}  \mathcal{P}^\pi(\mathbf{s}^\prime|\mathbf{s})V^\pi(\mathbf{s}^\prime)d\mathbf{s}^\prime - \hspace{-10pt} \int\limits_{\Phi(\mathbf{s}^\prime) \in \widetilde{\mathcal{S}}} \hspace{-10pt} \widetilde{\mathcal{P}}^\pi(\Phi(\mathbf{s}^\prime)|\mathbf{z})\widetilde{V}^\pi(\Phi(\mathbf{s}^\prime))d\Phi(\mathbf{s}^\prime) \right| d\xi(\mathbf{z})
    \end{split}
    \\
    & \leq \frac{1}{\xi(\Phi(\mathbf{s}))}\int\limits_{\mathbf{z} \in \Phi(\mathbf{s})} \left | r^\pi(\mathbf{s}) - r^\pi(\mathbf{z}) \right| + \gamma \left|~ \int\limits_{\mathbf{s}^\prime \in \mathcal{S}}  \mathcal{P}^\pi(\mathbf{s}^\prime|\mathbf{s})V^\pi(\mathbf{s}^\prime) - \mathcal{P}^\pi(\mathbf{s}^\prime|\mathbf{z})\widetilde{V}^\pi(\Phi(\mathbf{s}^\prime)) d\mathbf{s}^\prime \right| d\xi(\mathbf{z}) 
    \\
    \begin{split}
    & \leq \frac{1}{\xi(\Phi(\mathbf{s}))}\int\limits_{\mathbf{z} \in \Phi(\mathbf{s})} \left | r^\pi(\mathbf{s}) - r^\pi(\mathbf{z}) \right| + \gamma \left|~ \int\limits_{\mathbf{s}^\prime \in \mathcal{S}}  \mathcal{P}^\pi(\mathbf{s}^\prime|\mathbf{s})V^\pi(\mathbf{s}^\prime) - \mathcal{P}^\pi(\mathbf{s}^\prime|\mathbf{z})V^\pi(\mathbf{s}^\prime) d\mathbf{s}^\prime \right| d\xi(\mathbf{z}) \\
    &\qquad + \gamma \frac{1}{\xi(\Phi(\mathbf{s}))}\int\limits_{\mathbf{z} \in \Phi(\mathbf{s})} \left| \int_{\mathbf{s}^\prime \in \mathcal{S}} \mathcal{P}^\pi(\mathbf{s}^\prime|\mathbf{z})\left(V^\pi(\mathbf{s}^\prime) - \widetilde{V}^\pi(\Phi(\mathbf{s}^\prime))\right)d\mathbf{s}^\prime \right| d\xi(\mathbf{z})
    \end{split}
    \\
    \begin{split}
    & \leq \frac{1}{\xi(\Phi(\mathbf{s}))}\int\limits_{\mathbf{z} \in \Phi(\mathbf{s})} \left | r^\pi(\mathbf{s}) - r^\pi(\mathbf{z}) \right| + \gamma \left|~ \int\limits_{\mathbf{s}^\prime \in \mathcal{S}}  \left(\mathcal{P}^\pi(\mathbf{s}^\prime|\mathbf{s}) - \mathcal{P}^\pi(\mathbf{s}^\prime|\mathbf{z})\right)V^\pi(\mathbf{s}^\prime) d\mathbf{s}^\prime \right| d\xi(\mathbf{z}) \\
    &\qquad + \gamma\norm{V^\pi - \widetilde{V}^\pi}_\infty
    \end{split}
    \\
    \begin{split}
    & \leq \frac{c_R^{-1}}{\xi(\Phi(\mathbf{s}))}\int\limits_{\mathbf{z} \in \Phi(\mathbf{s})} c_R \left | r^\pi(\mathbf{s}) - r^\pi(\mathbf{z}) \right| + c_T \left|~ \int\limits_{\mathbf{s}^\prime \in \mathcal{S}}  (\mathcal{P}^\pi(\mathbf{s}^\prime|\mathbf{s}) - \mathcal{P}^\pi(\mathbf{s}^\prime|\mathbf{z}))\frac{c_R\gamma}{c_T}V^\pi(\mathbf{s}^\prime) d\mathbf{s}^\prime \right| d\xi(\mathbf{z}) \\
    &\qquad + \gamma \norm{V^\pi - \widetilde{V}^\pi}_\infty,
    \end{split}
\end{align*}
where $\norm{\cdot}_\infty$ is the supremum norm over $\mathcal{S}$. Due to Lemma \ref{lemma:p-wasserstein}, $c_R V(\mathbf{s})$ is a 1-Lipschitz function with respect to the distance function $d_\pi$. Then, since $\gamma \leq c_T$ by assumption, $\frac{c_R \gamma}{c_T}V(\mathbf{s})$ is also 1-Lipschitz. Using the dual form of the $W_1$ metric:
\begin{align*}
\begin{split}
    |V^\pi(\mathbf{s}) - \widetilde{V}^\pi(\Phi(\mathbf{s}))| &\leq \frac{c_R^{-1}}{\xi(\Phi(\mathbf{s}))}\int\limits_{\mathbf{z} \in \Phi(\mathbf{s})} c_R \left | r^\pi(\mathbf{s}) - r^\pi(\mathbf{z}) \right| + c_T W_1(d_\pi) (\mathcal{P}^\pi(\mathbf{s}^\prime|\mathbf{s}), \mathcal{P}^\pi(\mathbf{s}^\prime|\mathbf{z}))d\xi(\mathbf{z}) \\
    &\qquad + \gamma \norm{V^\pi - \widetilde{V}^\pi}_\infty \\
    & \leq \frac{c_R^{-1}}{\xi(\Phi(\mathbf{s}))}\int\limits_{\mathbf{z} \in \Phi(\mathbf{s})}d_\pi(\mathbf{s}, \mathbf{z})d\xi(\mathbf{z}) + \gamma \norm{V^\pi - \widetilde{V}^\pi}_\infty \\
    & \leq c_R^{-1} 2\epsilon + \gamma\norm{V^\pi - \widetilde{V}^\pi}_\infty.
\end{split}
\end{align*}
Thus, taking the supremum on the LHS over the state space $\mathcal{S}$:
\begin{align*}
    |V^\pi(\mathbf{s}) &- \widetilde{V}^\pi(\Phi(\mathbf{s}))| \leq \frac{2\epsilon}{c_R(1-\gamma)}, ~\forall \mathbf{s} \in \mathcal{S}.
\end{align*}
\end{proof}

\generalizedvfabound*
\begin{proof}
Due to Lemma \ref{lemma:on-policy-vfa},
\begin{align}
\label{eq:we-know}
    |V_{\gamma^\prime}^\pi(\mathbf{s}) - \widetilde{V}_{\gamma^\prime}^\pi(\Phi(\mathbf{s}))| \leq \frac{2\epsilon}{c_R(1-\gamma^\prime)},
\end{align}
where $V_{\gamma^\prime}^\pi$ denotes the expected value under a policy $\pi$ for a discount factor $\gamma^\prime$. Then, for $c_T < \gamma$:
\begin{align*}
    |V^\pi(\mathbf{s}) - \widetilde{V}^\pi(\Phi(\mathbf{s}))| &= |V^\pi(\mathbf{s}) - \widetilde{V}^\pi(\Phi(\mathbf{s})) + \widetilde{V}_{c_T}^\pi(\Phi(\mathbf{s})) - \widetilde{V}_{c_T}^\pi(\Phi(\mathbf{s})) + V_{c_T}^\pi(\mathbf{s}) - V_{c_T}^\pi(\mathbf{s})| \\
    & \leq |V_{c_T}^\pi(\mathbf{s}) - \widetilde{V}_{c_T}^\pi(\Phi(\mathbf{s}))| ~+~  |V^\pi(\mathbf{s}) - V_{c_T}^\pi(\mathbf{s})| ~+~  |\widetilde{V}^\pi(\Phi(\mathbf{s})) - \widetilde{V}_{c_T}^\pi(\Phi(\mathbf{s}))| \\
    & \leq  \frac{2\epsilon}{c_R(1-c_T)} ~+~ |V^\pi(\mathbf{s}) - V_{c_T}^\pi(\mathbf{s})| ~+~  |\widetilde{V}^\pi(\Phi(\mathbf{s})) - \widetilde{V}_{c_T}^\pi(\Phi(\mathbf{s}))|\\
    & \leq \frac{2\epsilon}{c_R(1-c_T)} + \frac{2(\gamma - c_T)}{(1-\gamma)(1-c_T)},
\end{align*}
where the second and third inequalities are due to Eq. \eqref{eq:we-know} and Lemma \ref{lemma:diff-gamma} respectively. For $\gamma \leq c_T$, we recover Lemma \ref{lemma:on-policy-vfa}, hence, for all $c_T \in [0, 1)$:
\begin{align*}
    |V^\pi(\mathbf{s}) - \widetilde{V}^\pi(\Phi(\mathbf{s}))| ~\leq~ \frac{2\epsilon}{c_R(1-\min(c_T, \gamma))} + \frac{2(\gamma - \min(c_T, \gamma))}{(1-\gamma)(1-c_T)}, ~\forall\mathbf{s} \in \mathcal{S}.
\end{align*}
\end{proof}

\maxdiam*
\begin{proof}
This lemma is a slight generalization of the distance bounds given in Thm. 3.12 of \cite{Ferns2011Bisimulation}, and the proof follows similarly:
\begin{align*}
    {d}(\mathbf{s}_i, \mathbf{s}_j) &= \max_{\mathbf{a} \in \mathcal{A}}\left(c_R|R(\mathbf{s}_i, \mathbf{a}) - R(\mathbf{s}_j, \mathbf{a})| ~+~ c_T W_1({d})({\mathcal{P}}(\cdot | \mathbf{s}_i, \mathbf{a}), {\mathcal{P}}(\cdot | \mathbf{s}_j, \mathbf{a}))\right)\\
    & \leq c_R(R_{\mathrm{max}}-R_{\mathrm{min}}) + c_T \mathrm{diam}(\mathcal{S}; {d}), ~\forall(\mathbf{s}_i, \mathbf{s}_j) \in \mathcal{S} \times \mathcal{S},
\end{align*}
due to Lemma \ref{lemma:wass-diam} (upper bound as $p \rightarrow \infty$). Then,
\begin{align*}
    \mathrm{diam}(\mathcal{S}; {d}) & \leq c_R(R_{\mathrm{max}}-R_{\mathrm{min}}) + c_T \mathrm{diam}(\mathcal{S}; {d}) \\
    & \leq \frac{c_R}{1-c_T}(R_{\mathrm{max}}-R_{\mathrm{min}}).
\end{align*}
\end{proof}

\boundedness*
\begin{proof}
The existence proof is virtually identical to the proof of Remark \ref{remark:rem1}, except it replaces $\mathcal{P}$ with an approximate dynamics model $\widehat{\mathcal{P}}$. This is possible since $\mathcal{S}$ is compact by assumption such that $\mathrm{supp}(\widehat{\mathcal{P}}) \subseteq \mathcal{S}$ is also compact:
\begin{align*}
    &\mathcal{F}(d_\pi)(\mathbf{s}_i, \mathbf{s}_j) - \mathcal{F}(d_\pi^\prime)(\mathbf{s}_i, \mathbf{s}_j) \\
    &= c_T \left(W_1(d_\pi)(\widehat{\mathcal{P}}^\pi(\cdot | \mathbf{s}_i), \widehat{\mathcal{P}}^\pi(\cdot | \mathbf{s}_j)) -  W_1(d_\pi^\prime)(\widehat{\mathcal{P}}^\pi(\cdot | \mathbf{s}_i), \widehat{\mathcal{P}}^\pi(\cdot | \mathbf{s}_j)) \right)\\
    & = c_T \left(W_1(d_\pi-d_\pi^\prime+d_\pi^\prime)(\widehat{\mathcal{P}}^\pi(\cdot | \mathbf{s}_i), \widehat{\mathcal{P}}^\pi(\cdot | \mathbf{s}_j)) -  W_1(d_\pi^\prime)(\widehat{\mathcal{P}}^\pi(\cdot | \mathbf{s}_i), \widehat{\mathcal{P}}^\pi(\cdot | \mathbf{s}_j)) \right)\\
    & \leq c_T \left(W_1(\norm{d_\pi-d_\pi^\prime}_\infty+d_\pi^\prime)(\widehat{\mathcal{P}}^\pi(\cdot | \mathbf{s}_i), \widehat{\mathcal{P}}^\pi(\cdot | \mathbf{s}_j)) -  W_1(d_\pi^\prime)(\widehat{\mathcal{P}}^\pi(\cdot | \mathbf{s}_i), \widehat{\mathcal{P}}^\pi(\cdot | \mathbf{s}_j)) \right)\\
    & \leq c_T \left(\norm{d_\pi-d_\pi^\prime}_\infty + W_1(d_\pi^\prime)(\widehat{\mathcal{P}}^\pi(\cdot | \mathbf{s}_i), \widehat{\mathcal{P}}^\pi(\cdot | \mathbf{s}_j)) -  W_1(d_\pi^\prime)(\widehat{\mathcal{P}}^\pi(\cdot | \mathbf{s}_i), \widehat{\mathcal{P}}^\pi(\cdot | \mathbf{s}_j)) \right) \\
    &= c_T\norm{d_\pi-d_\pi^\prime}_\infty, ~\forall(\mathbf{s}_i, \mathbf{s}_j) \in \mathcal{S} \times \mathcal{S},
\end{align*}
which implies $\mathcal{F}$ is a $c_T$-contraction.
It remains to prove that the distance is bounded. First, note that due to Lemma \ref{lemma:wass-diam}:
\begin{align}
\mathrm{supp}(\widehat{\mathcal{P}}) \subseteq \mathcal{S} \Rightarrow \sup_{\mathbf{s}_i, \mathbf{s}_j \in \mathcal{S} \times \mathcal{S}} W_p(\widehat{d}_\pi)(\widehat{\mathcal{P}}^\pi(\cdot | \mathbf{s}_i), \widehat{\mathcal{P}}^\pi(\cdot | \mathbf{s}_j)) \leq \mathrm{diam}(\mathcal{S}; \widehat{d}_\pi), ~\forall p \geq 1.
\end{align}
Then, similarly to Lemma \ref{lemma:diam},
\begin{align*}
    \widehat{d}_\pi(\mathbf{s}_i, \mathbf{s}_j) &= c_R|r_i^\pi - r_j^\pi| ~+~ c_T W_p(\widehat{d}_\pi)(\widehat{\mathcal{P}}^\pi(\cdot | \mathbf{s}_i), \widehat{\mathcal{P}}^\pi(\cdot | \mathbf{s}_j))\\
    & \leq c_R(R_{\mathrm{max}}-R_{\mathrm{min}}) + c_T \mathrm{diam}(\mathcal{S}; \widehat{d}_\pi), ~\forall(\mathbf{s}_i, \mathbf{s}_j) \in \mathcal{S} \times \mathcal{S},
\end{align*}
which implies,
\begin{align*}
    \mathrm{diam}(\mathcal{S}; \widehat{d}_\pi) &\leq c_R(R_{\mathrm{max}}-R_{\mathrm{min}}) + c_T \mathrm{diam}(\mathcal{S}; \widehat{d}_\pi) \\
    & \leq \frac{c_R}{1-c_T}(R_{\mathrm{max}}-R_{\mathrm{min}}).
\end{align*}
\end{proof}

\onpolicydiam*

\begin{proof}
In the on-policy case, the bound in Lemma \ref{lemma:diam} can be much tighter depending on the policy:
\begin{align*}
    d_\pi(\mathbf{s}_i, \mathbf{s}_j) &= c_R|r_i^\pi - r_j^\pi| ~+~ c_T W_1(d_\pi)(\mathcal{P}^\pi(\cdot | \mathbf{s}_i), \mathcal{P}^\pi(\cdot | \mathbf{s}_j))\\
    & \leq c_R\sup|r_i^\pi - r_j^\pi| ~+~ c_T \mathrm{diam}(\mathcal{S}; d_\pi), ~\forall(\mathbf{s}_i, \mathbf{s}_j) \in \mathcal{S} \times \mathcal{S}.
\end{align*}
As before,
\begin{align*}
    \mathrm{diam}(\mathcal{S}; d_\pi) &\leq \frac{c_R}{1-c_T}\sup|r_i^\pi - r_j^\pi|.
\end{align*}
\end{proof}

\meanupperbound*
\begin{proof}
For point masses, $W_p(d)(\delta(\mathbf{s}_i), \delta(\mathbf{s}_j)) = d(\mathbf{s}_i, \mathbf{s}_j)$:
\begin{align*} 
d_\pi(\mathbf{s}_i, \mathbf{s}_j) = c_R|r^{\pi}_i - r^{\pi}_j| ~+~ c_T d_\pi(\mathbf{s}_i^\prime, \mathbf{s}_j^\prime).
\end{align*}
Simply taking an expectation under $\nu^\pi$, due to the stationarity assumption: 
\begin{align*} 
\label{eq:reward-dependence}
\mu^{\pi}_{bd} &= c_R\mu^{\pi}_{rd} + c_T \mu^{\pi}_{bd}\\
&= \frac{c_R}{1-c_T}\mu^{\pi}_{rd}.
\end{align*}
\end{proof} 

\section{Notes on Reward Scale, $c_R$ and $c_T$}\label{sec:reward_scale}
In recent work \cite{castro2020scalable, gelada2019deepmdp, zhang2021invariant}, various forms of bisimulation metrics have been presented with different scaling constants; $(c_R=1-c, c_T=c)$ as in Definition \ref{def:bisim2}, and $(c_R=1, c_T=\gamma)$ as in Definition \ref{def:on-policy}. Here, we aim to add clarity to the effect of these choices and how they relate to the reward scale. First, note that due to Lemma \ref{lemma:diam}, setting $c_R=1-c_T$ serves to ensure that $d \in [0, 1]$ when the reward range is specified as $(R_{\mathrm{max}}=1, R_{\mathrm{min}}=0)$ as in \cite{Ferns2004Metrics, Ferns2011Bisimulation}.
\begin{corollary}[Policy-independent mean and variance bounds]
\label{corollary:meanvar}
Due to Lemma \ref{lemma:diam}, the first two moments of the random variable given by bisimulation distance have the following bounds independently of any policy $\pi$:\\
\begin{minipage}{0.5\textwidth}
  \begin{align}
      \mu_{bd} \leq \frac{c_R(R_{\mathrm{max}} - R_{\mathrm{min}})}{2(1 - c_T)},
    \end{align}
\end{minipage}
\begin{minipage}{0.50\textwidth}
    \begin{align}
    \label{eq:policy-indepdendent-variance-bound}
      \sigma_{bd}^2 \leq \frac{c_R^2(R_{\mathrm{max}} - R_{\mathrm{min}})^2}{4(1 - c_T)^2}.
    \end{align}
\end{minipage}
\end{corollary}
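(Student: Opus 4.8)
\textbf{Proof proposal for Corollary \ref{corollary:meanvar}.}

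The plan is to derive both bounds as immediate consequences of Lemma \ref{lemma:diam}, which controls the diameter of $\mathcal{S}$ under any bisimulation metric $d$. The key observation is that the random variable of interest, the bisimulation distance $d(\mathbf{s}_i, \mathbf{s}_j)$ with $(\mathbf{s}_i, \mathbf{s}_j)$ drawn from some pairing distribution, is a nonnegative random variable that is bounded almost surely by $\mathrm{diam}(\mathcal{S}; d) \leq \frac{c_R}{1-c_T}(R_{\mathrm{max}} - R_{\mathrm{min}})$.

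First I would establish the mean bound. Since $d \geq 0$ and $d \leq \mathrm{diam}(\mathcal{S}; d)$ pointwise, linearity and monotonicity of expectation give $\mu_{bd} = \mathbb{E}[d(\mathbf{s}_i,\mathbf{s}_j)] \leq \mathrm{diam}(\mathcal{S}; d) \leq \frac{c_R}{1-c_T}(R_{\mathrm{max}}-R_{\mathrm{min}})$. Here I note there is a factor-of-$2$ discrepancy to reconcile: the stated bound in the corollary has a $2$ in the denominator, matching the ball radius in Eq.\ \eqref{eq:constraint} (where the \emph{radius} of $\mathbb{B}_c$ is half the diameter). So the cleanest route is to observe that a random variable supported on $[0, D]$ with $D = \mathrm{diam}(\mathcal{S};d)$ has mean at most $D$; but to obtain the tighter constant $D/2$ one would instead symmetrize — or simply note that the corollary's intended reading takes the per-coordinate embedding constraint $\norm{\phi(\mathbf{s})}_q \leq \frac{c_R(R_{\max}-R_{\min})}{2(1-c_T)}$, so $d(\mathbf{s}_i,\mathbf{s}_j) = \norm{\phi(\mathbf{s}_i)-\phi(\mathbf{s}_j)}_q \leq 2 \cdot \frac{c_R(R_{\max}-R_{\min})}{2(1-c_T)}$, and then apply the standard fact that a nonnegative random variable bounded by $2\beta$ whose values are differences $\norm{X_i - X_j}$ with $X_i, X_j$ i.i.d.\ in a ball of radius $\beta$ has expectation at most $\beta$ (by the triangle inequality applied to an independent copy and Jensen). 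I would state this carefully to pin down which convention yields the factor $2$.

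Next, for the variance bound, the simplest argument uses $\sigma_{bd}^2 = \mathbb{E}[d^2] - \mu_{bd}^2 \leq \mathbb{E}[d^2]$, and since $0 \leq d \leq M$ with $M = \frac{c_R(R_{\max}-R_{\min})}{2(1-c_T)}$ almost surely (under the centered-embedding reading above), we get $\mathbb{E}[d^2] \leq M^2$, i.e., $\sigma_{bd}^2 \leq \frac{c_R^2(R_{\max}-R_{\min})^2}{4(1-c_T)^2}$ as claimed. Alternatively one could invoke the Popoviciu inequality on the variance of a bounded random variable, but the crude bound $\sigma^2 \leq \mathbb{E}[d^2] \leq M^2$ already suffices and is self-contained.

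The main obstacle here is purely bookkeeping rather than mathematical depth: making sure the factor of $2$ in the denominators is handled consistently between the diameter bound of Lemma \ref{lemma:diam} (which has no $2$) and the radius-based embedding constraint of Eq.\ \eqref{eq:constraint} (which does). I would resolve this explicitly at the start of the proof by fixing the convention that $d(\mathbf{s}_i,\mathbf{s}_j) \leq \mathrm{diam}(\mathcal{S};d) \leq \frac{c_R}{1-c_T}(R_{\max}-R_{\min})$ and then using the i.i.d.\ pairing structure (triangle inequality plus an independent-copy/Jensen argument) to halve the constant for the mean, after which the variance bound follows from squaring. No other subtlety arises, and the argument is policy-independent precisely because Lemma \ref{lemma:diam} is.
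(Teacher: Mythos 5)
There is a genuine gap, and it sits exactly at the factor of $2$ you flagged. Lemma \ref{lemma:diam} bounds the \emph{diameter}: $d(\mathbf{s}_i,\mathbf{s}_j) \leq D \coloneqq \frac{c_R}{1-c_T}(R_{\mathrm{max}}-R_{\mathrm{min}})$ almost surely; it does not bound the distance by the radius $D/2$. Your variance argument, $\sigma_{bd}^2 \leq \mathbb{E}[d^2] \leq M^2$ with $M = D/2$, therefore rests on the unjustified premise that $d \leq D/2$ a.s.\ (even under the constraint of Eq.~\eqref{eq:constraint}, two embeddings can be nearly antipodal in $\mathbb{B}_c$ and hence at distance close to $D$). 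With the correct a.s.\ bound $d \leq D$, your crude route only yields $\sigma_{bd}^2 \leq D^2$, four times the stated constant; the stated constant is recovered precisely by the Popoviciu inequality you mention and then set aside: a random variable supported in $[0,D]$ has variance at most $D^2/4$. That elementary step is the intended one-line consequence of Lemma \ref{lemma:diam} for the second bound.

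The mean bound is worse off. The ``standard fact'' you invoke --- that for $X_i,X_j$ i.i.d.\ in a ball of radius $\beta$ one has $\mathbb{E}\norm{X_i-X_j} \leq \beta$ --- is false in dimension two or higher: for the uniform distribution on a circle of radius $\beta$ the expected distance is $4\beta/\pi > \beta$, and for the uniform distribution on three equidistant points of that circle it is $2\beta/\sqrt{3} > \beta$; the triangle-inequality-plus-independent-copy/Jensen sketch gives at best $2\beta$, not $\beta$. The fact is true only in one dimension (the Gini mean difference of a variable supported in an interval of length $L$ is at most $L/2$), and that one-dimensional version is what actually delivers the factor of $2$: for i.i.d.\ state pairs, apply $\mathbb{E}\,|r_i^\pi - r_j^\pi| \leq (R_{\mathrm{max}}-R_{\mathrm{min}})/2$ to the scalar reward term inside the fixed-point recursion of Eq.~\eqref{eq:p-Wass-bisim-metric} and unroll (in the settings of Remark \ref{remark:rem1}, bounding $W_1$ by the independent coupling at each step, in the spirit of Lemma \ref{lemma:mean-lemma}), which yields $\mu_{bd} \leq \frac{c_R(R_{\mathrm{max}}-R_{\mathrm{min}})}{2(1-c_T)}$. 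So the mean bound does not follow from the diameter bound alone or from any radius convention on $\mathbb{B}_c$; it needs the i.i.d.\ pairing structure together with the recursive form of the metric, neither of which your argument uses.
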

Conversely, if $(c_R=1, c_T = \gamma)$ as in Zhang et al. \cite{zhang2021invariant}, the formulation allows large distances between embeddings since $\gamma$ is commonly set to a value close to $1$. Large pairwise distances imply large norms and high variance (see Corollary \ref{corollary:meanvar}), which may cause instabilities in optimization (especially in the absence of norm constraints), considering the compactness conditions discussed in Sec. \ref{sec:lipschitz-forward}. 
\begin{definition}[Variance of distances and reward differences] 
Given a stationary distribution $\rho^\pi$ over states, and $\nu^\pi$ the distribution over pairs of states, $(\mathbf{s}_i, \mathbf{s}_j)$ sampled independently from $\rho^\pi$:
\begin{align}
    (\sigma_{bd}^\pi)^{2} &\coloneqq \mathbb{V}_{(\mathbf{s}_i, \mathbf{s}_j) \sim \nu^\pi}[d^\pi(\mathbf{s}_i, \mathbf{s}_j)] &&
    (\sigma_{rd}^\pi)^{2} \coloneqq \mathbb{V}_{(\mathbf{s}_i, \mathbf{s}_j) \sim \nu^\pi}[|r_i^\pi - r_j^\pi|].
\end{align}
\end{definition}
\begin{restatable}[On-policy variance bound]{proposition}{varupperbound}
\label{prop:prop4}
Assume a deterministic environment. Given $c_T \in [0, \sqrt{0.5})$, the variance of the optimal on-policy bisimulation distance for an objective of the form Eq. \eqref{eq:general-bisim-loss} can be upper-bounded as follows:
\begin{align}
(\sigma_{bd}^\pi)^{2} \leq \frac{2c_R^2}{1 - 2c_T^2} (\sigma_{rd}^\pi)^{2} + \frac{c_R^2\left(1 - 2c_T\right)^2}{\left(1 - 2c_T^2\right)\left(1-c_T\right)^2}(\mu_{rd}^\pi)^2,
\end{align}
while for all $c_T \in [0, 1)$, the bound in Eq. \eqref{eq:policy-indepdendent-variance-bound} applies.
\end{restatable}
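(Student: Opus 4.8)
\textbf{Proof proposal for Proposition~\ref{prop:prop4}.}
The plan is to proceed analogously to Lemma~\ref{lemma:mean-lemma}, but tracking the second moment rather than the first. Since the environment is deterministic and we use a matching metric ($q_1=q_2=q$), the objective in Eq.~\eqref{eq:general-bisim-loss} at its fixed point induces the recursion $d_\pi(\mathbf{s}_i,\mathbf{s}_j) = c_R|r^\pi_i - r^\pi_j| + c_T\, d_\pi(\mathbf{s}_i',\mathbf{s}_j')$, where $\mathbf{s}_i',\mathbf{s}_j'$ are the (deterministic) successor states under $\pi$. First I would square both sides, obtaining $d_\pi(\mathbf{s}_i,\mathbf{s}_j)^2 = c_R^2|r^\pi_i-r^\pi_j|^2 + 2c_Rc_T|r^\pi_i-r^\pi_j|\,d_\pi(\mathbf{s}_i',\mathbf{s}_j') + c_T^2 d_\pi(\mathbf{s}_i',\mathbf{s}_j')^2$, then take the expectation under $\nu^\pi$. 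Using the stationarity assumption, $\mathbb{E}_{\nu^\pi}[d_\pi(\mathbf{s}_i',\mathbf{s}_j')^2]$ can be replaced by $\mathbb{E}_{\nu^\pi}[d_\pi(\mathbf{s}_i,\mathbf{s}_j)^2] =: m_{bd}$ — here I must be careful that the pushforward of $\nu^\pi$ under the successor map is again $\nu^\pi$, which follows from $\rho^\pi$ being stationary and the pair being drawn independently from $\rho^\pi$ (this is exactly the step that Lemma~\ref{lemma:mean-lemma} relies on, so it carries over).

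The cross term is the crux. I would bound $\mathbb{E}_{\nu^\pi}[|r^\pi_i - r^\pi_j|\, d_\pi(\mathbf{s}_i',\mathbf{s}_j')]$ by applying Cauchy--Schwarz (or Young's inequality $ab \le \tfrac12 a^2 + \tfrac12 b^2$ with a suitable scaling) to decouple the reward difference from the successor distance, then again invoke stationarity on the $d_\pi(\mathbf{s}_i',\mathbf{s}_j')^2$ factor. Alternatively, since $d_\pi(\mathbf{s}_i',\mathbf{s}_j') \le d_\pi(\mathbf{s}_i,\mathbf{s}_j)$ is \emph{not} generally true, I expect the cleanest route is: expand $d_\pi = c_R\sum_{t\ge0} c_T^t |r^\pi_{i,t} - r^\pi_{j,t}|$ along the trajectory pair, so that $d_\pi^2 = c_R^2 \sum_{t,u} c_T^{t+u}|r_{i,t}-r_{j,t}||r_{i,u}-r_{j,u}|$, take expectations, apply Cauchy--Schwarz termwise to get $\mathbb{E}[|r_{i,t}-r_{j,t}||r_{i,u}-r_{j,u}|] \le \sqrt{\mathbb{E}[(r_{i,t}-r_{j,t})^2]\mathbb{E}[(r_{i,u}-r_{j,u})^2]}$, and use stationarity to replace each such factor by $\mathbb{E}_{\nu^\pi}[(r^\pi_i-r^\pi_j)^2] = (\sigma_{rd}^\pi)^2 + (\mu_{rd}^\pi)^2$ (noting $\mathbb{E}_{\nu^\pi}[r^\pi_i - r^\pi_j]=0$ is false; rather one should write $\mathbb{E}[(r_i^\pi-r_j^\pi)^2]$ directly in terms of $\sigma_{rd}$ and $\mu_{rd}$ via $\mathbb{E}[|X|^2]=\mathbb{V}[|X|]+\mathbb{E}[|X|]^2$). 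Summing the resulting geometric-type double series in $c_T$ — which converges precisely when $c_T^2 < 1/2$, explaining the hypothesis $c_T \in [0,\sqrt{0.5})$ — then yields $m_{bd} \le c_R^2\big((\sigma_{rd}^\pi)^2 + (\mu_{rd}^\pi)^2\big)\big(\sum_{t} c_T^t\big)^2 \cdot (\text{correction})$, and I would massage this together with $\mu_{bd}^\pi = \tfrac{c_R}{1-c_T}\mu_{rd}^\pi$ from Lemma~\ref{lemma:mean-lemma} and $(\sigma_{bd}^\pi)^2 = m_{bd} - (\mu_{bd}^\pi)^2$ to isolate $(\sigma_{bd}^\pi)^2$ and match the stated coefficients $\tfrac{2c_R^2}{1-2c_T^2}$ and $\tfrac{c_R^2(1-2c_T)^2}{(1-2c_T^2)(1-c_T)^2}$.

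The main obstacle I anticipate is getting the cross-term bookkeeping to produce exactly those two coefficients rather than something merely of the same order: the factor $2$ and the precise $(1-2c_T)^2$ numerator suggest that a naive Cauchy--Schwarz is too lossy and that one instead wants an exact identity for $m_{bd}$ in terms of the \emph{autocovariance structure} of the reward-difference process, followed by a single clean inequality (e.g.\ bounding lag-$k$ reward-difference autocovariances by the lag-$0$ value $(\sigma_{rd}^\pi)^2+(\mu_{rd}^\pi)^2$ — or, for the $\mu_{rd}$ part, recognizing the deterministic/constant component contributes a perfect geometric series $(\sum c_T^t)^2 = 1/(1-c_T)^2$). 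I would therefore first derive the exact recursion $m_{bd} = c_R^2\,\mathbb{E}_{\nu^\pi}[(r_i^\pi-r_j^\pi)^2] + 2c_Rc_T\,\mathbb{E}_{\nu^\pi}[|r_i^\pi-r_j^\pi|\,d_\pi(\mathbf{s}_i',\mathbf{s}_j')] + c_T^2 m_{bd}$, solve for $m_{bd}$, and only then bound the single remaining cross-expectation by splitting $d_\pi(\mathbf{s}_i',\mathbf{s}_j')$ into its mean plus fluctuation — this is where the $c_T^2<1/2$ threshold and the exact constants should fall out. Finally, for $c_T \in [\sqrt{0.5}, 1)$ the series diverges but Corollary~\ref{corollary:meanvar} (via Lemma~\ref{lemma:diam}) still supplies the policy-independent bound \eqref{eq:policy-indepdendent-variance-bound}, completing the statement.
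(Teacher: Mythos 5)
Your skeleton coincides with the paper's proof: under deterministic transitions write $d_\pi(\mathbf{s}_i,\mathbf{s}_j)=c_R|r_i^\pi-r_j^\pi|+c_T d_\pi(\mathbf{s}_i',\mathbf{s}_j')$, square, take expectations under $\nu^\pi$, use stationarity to identify $\mathbb{E}[d_\pi(\mathbf{s}_i',\mathbf{s}_j')^2]$ with the second moment $m_{bd}$, and finish with Lemma \ref{lemma:mean-lemma}. The gap is that you leave the decisive step --- the cross term --- unresolved, and your guiding intuition about it is wrong. The paper does exactly the ``naive'' thing you dismiss as too lossy: $2(c_R|r_i^\pi-r_j^\pi|)(c_T d_\pi(\mathbf{s}_i',\mathbf{s}_j'))\le c_R^2|r_i^\pi-r_j^\pi|^2+c_T^2 d_\pi(\mathbf{s}_i',\mathbf{s}_j')^2$, i.e.\ $(a+b)^2\le 2a^2+2b^2$, giving $m_{bd}\le 2c_R^2\big((\sigma_{rd}^\pi)^2+(\mu_{rd}^\pi)^2\big)+2c_T^2 m_{bd}$. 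The hypothesis $c_T\in[0,\sqrt{0.5})$ is precisely what lets you solve this recursion (divide by $1-2c_T^2>0$); it has nothing to do with convergence of a series. Then $(\sigma_{bd}^\pi)^2=m_{bd}-(\mu_{bd}^\pi)^2$ with $(\mu_{bd}^\pi)^2=\frac{c_R^2}{(1-c_T)^2}(\mu_{rd}^\pi)^2$ from Lemma \ref{lemma:mean-lemma}, and the elementary identity $2(1-c_T)^2-(1-2c_T^2)=(1-2c_T)^2$ produces exactly the stated coefficients $\frac{2c_R^2}{1-2c_T^2}$ and $\frac{c_R^2(1-2c_T)^2}{(1-2c_T^2)(1-c_T)^2}$. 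So no autocovariance bookkeeping or mean-plus-fluctuation split is needed; the ``surprisingly precise'' constants are artifacts of the crude quadratic bound combined with the exact mean identity.

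A secondary error in your alternative route: the trajectory expansion $d_\pi=c_R\sum_{t\ge 0}c_T^t|r_{i,t}^\pi-r_{j,t}^\pi|$ with termwise Cauchy--Schwarz and stationarity yields $m_{bd}\le \frac{c_R^2}{(1-c_T)^2}\big((\sigma_{rd}^\pi)^2+(\mu_{rd}^\pi)^2\big)$, and the double geometric sum $\sum_{t,u}c_T^{t+u}=(1-c_T)^{-2}$ converges for all $c_T\in[0,1)$, not ``precisely when $c_T^2<1/2$.'' Carried through, that route in fact gives the tighter conclusion $(\sigma_{bd}^\pi)^2\le \frac{c_R^2}{(1-c_T)^2}(\sigma_{rd}^\pi)^2$, which implies the proposition because $2(1-c_T)^2\ge 1-2c_T^2$; so it is a valid (even stronger) alternative, but your stated reason for the $\sqrt{0.5}$ threshold does not come from it.
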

\begin{proof}
Given a deterministic environment:
\begin{align*} 
d_\pi(\mathbf{s}_i, \mathbf{s}_j) = c_R|r^{\pi}_i - r^{\pi}_j| ~+~ c_T d_\pi(\mathbf{s}_i^\prime, \mathbf{s}_j^\prime).
\end{align*}
Then,
\begin{align*} 
\label{eq:variance-inequality2}
(\sigma_{bd}^\pi)^{2} &= \mathbb{E}_{(\mathbf{s}_i, \mathbf{s}_j) \sim \nu^\pi}[d_\pi(\mathbf{s}_i, \mathbf{s}_j)^2] - (\mu_{bd}^\pi)^{2}\\
&\leq 2c_R^2 ((\sigma_{rd}^\pi)^{2}+(\mu_{rd}^\pi)^{2}) +2 c_T^2 ((\sigma_{bd}^\pi)^{2} + (\mu_{bd}^\pi)^{2}) - (\mu_{bd}^\pi)^{2}.
\end{align*}
When $c_T \geq \sqrt{0.5}$ (as in Zhang et al. \cite{zhang2021invariant}), the above bound is loose. However, $c_T < \sqrt{0.5}$ provides a convenient upper bound:
\begin{align*}
(\sigma_{bd}^\pi)^{2} &\leq \frac{2c_R^2}{1 - 2c_T^2} ((\sigma_{rd}^\pi)^{2}+(\mu_{rd}^\pi)^{2}) - (\mu_{bd}^\pi)^{2} \\
& =  \frac{2c_R^2}{1 - 2c_T^2} (\sigma_{rd}^\pi)^{2} + \frac{c_R^2\left(1 - 2c_T\right)^2}{\left(1 - 2c_T^2\right)\left(1-c_T\right)^2}(\mu_{rd}^\pi)^{2},
\end{align*}
where the equality is due to Lemma \ref{lemma:mean-lemma}.
\end{proof}

\begin{figure}[h]
    \centering
     \includegraphics[width=0.49\textwidth]{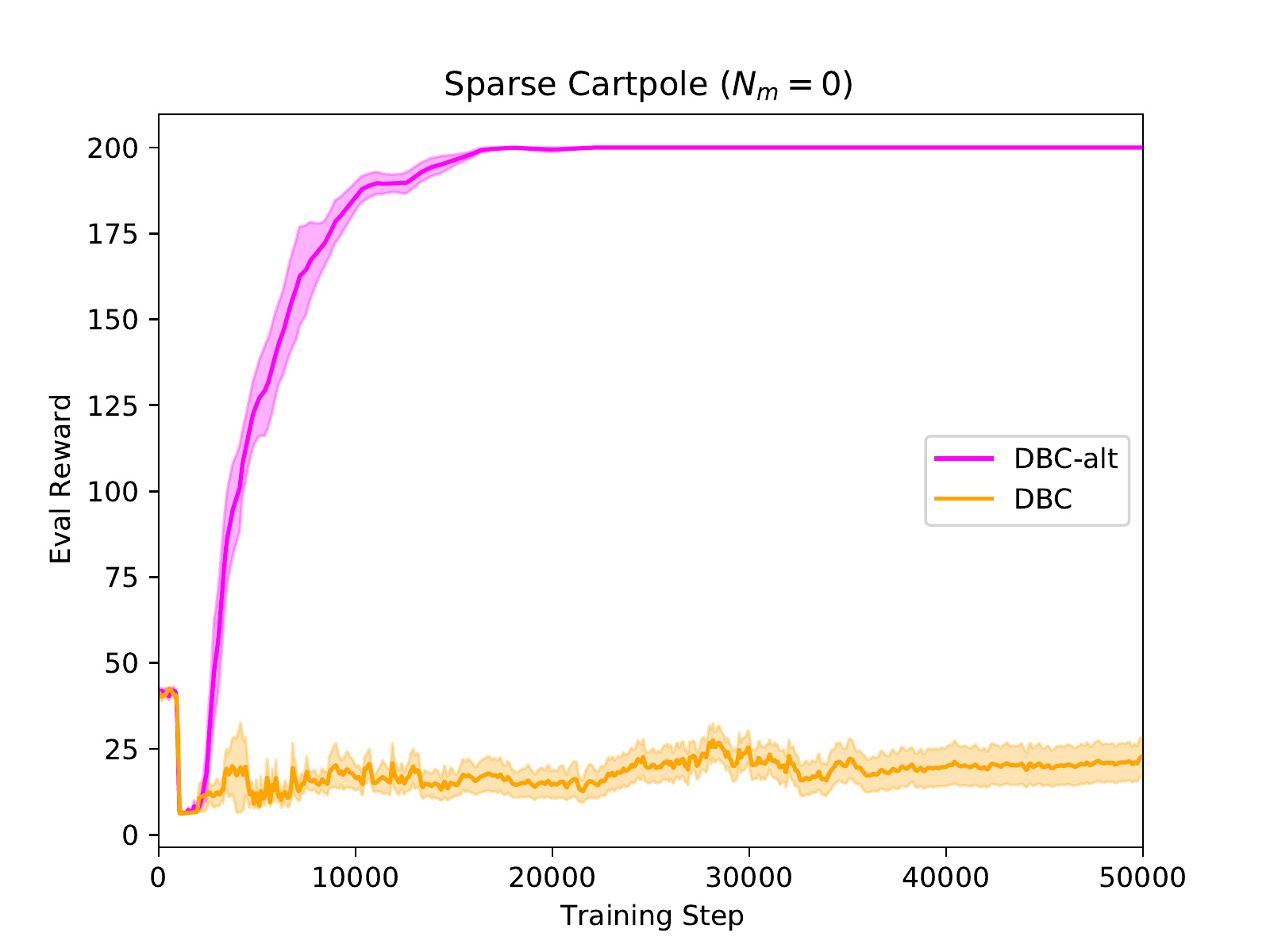}
    \hspace{\fill}
  \includegraphics[width=0.49\linewidth]{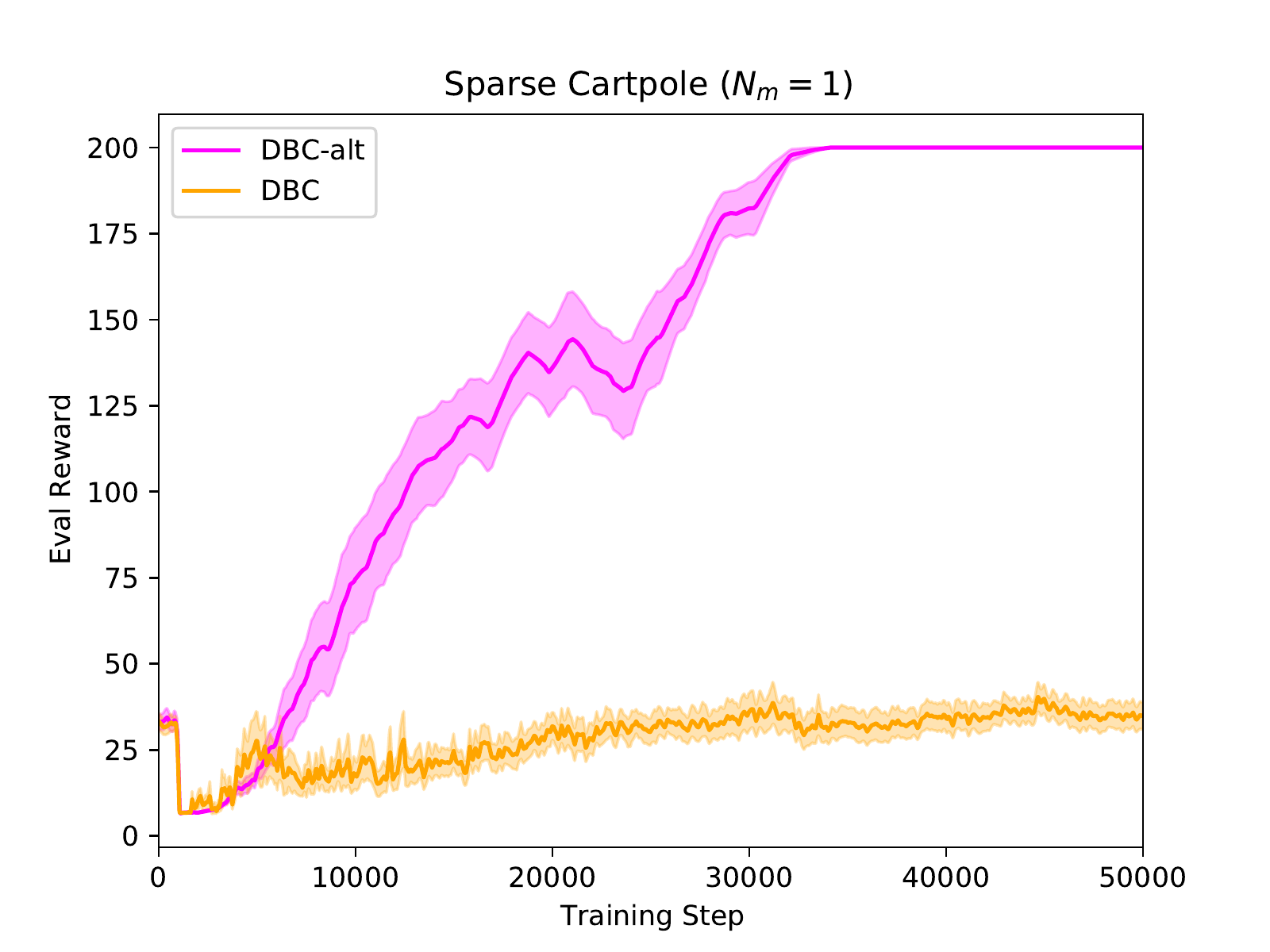} 
    \caption{Performance on the Sparse Cartpole task, comparing the standard DBC algorithm (using $c_R = 1.0$ and $c_T = \gamma$) with an alternative weighting formulation (denoted ``DBC-alt''), where $c_R = 0.5$ and $c_T = 0.5$. 
    For this task, the alternative weighting formulation is much more robust to reward sparsity.
    Shaded bars are standard errors over 10 seeds.}
    \label{fig:dbc-alt-results}
\end{figure}

Tighter bounds on the variance of the on-policy bisimulation metric may be important, since the statistics of $r^\pi$ undergo change throughout training between policy updates. Hence, it is desirable to remove the dependence of the bound from the $\mu_{rd}^\pi$ term with a choice of $c_T=0.5$, such that tighter bounds can be obtained. This choice renders the formulation more robust to changes in the scale of the expected rewards. The resulting bound for $c_T=0.5$ has a simpler form:
\begin{align}
    (\sigma_{bd}^\pi)^{2} \leq 4c_R^2(\sigma_{rd}^\pi)^{2} \leq c_R^2(R_{\mathrm{max}} - R_{\mathrm{min}})^2.
\end{align}
Indeed, in Figure \ref{fig:dbc-alt-results}, we show that such a choice can stabilize the DBC \cite{zhang2021invariant} algorithm significantly, resulting in higher overall performance.

\section{Value Bounds with Model Error}
\label{sec:vfa-model-error}

Our goal in this section is to characterize the errors induced by using approximate dynamics and an imperfect encoder, with respect to estimating both the ground-truth on-policy bisimulation metric, as well as preserving the value function with the encoded state. 

For this section, we first remark on the difference between three forms of the PBSM, for some fixed policy $\pi$:
\begin{itemize}
    \item 
    $d_\pi(\mathbf{s}_i, \mathbf{s}_j)$ is the \textit{ground-truth} bisimulation metric, defined as the fixed point of the operator defined in Assumption \ref{assum:p-wass} (when it exists uniquely).
    
    \item 
    $\widehat{d}_\pi(\mathbf{s}_i, \mathbf{s}_j;  \widehat{\mathcal{P}})$ 
    is the fixed point of the same operator but with \textit{approximate dynamics} 
    (i.e., using $ \widehat{\mathcal{P}} $ and $\widehat{r}^\pi$ 
    instead of the true $ \mathcal{P}$ and $r^\pi$; we leave out conditioning on $\widehat{r}^\pi$ for brevity). 
    
    \item 
    $\widehat{d}_{\pi,\phi}(\mathbf{s}_i, \mathbf{s}_j) := || \phi(\mathbf{s}_i) - \phi(\mathbf{s}_j) ||_q $ is a non-negative function of states, dependent on an encoder function $\phi$. For example, $\phi$ may be produced by a metric learning process, such as by stochastic minimization of the objective in Eq.\ \ref{eq:general-bisim-loss}.
    
\end{itemize}

Notice that, if we attempt to learn $\phi$ with a metric learning process based on approximate dynamics, then the best we can do is obtain 
$ \widehat{d}_{\pi,\phi} \rightarrow \widehat{d}_\pi $, 
which can still have some irreducible error.
On the other hand, even with perfect approximate dynamics, the metric learning process may be incomplete, meaning $ \widehat{d}_{\pi,\phi} \ne \widehat{d}_\pi $.
We therefore hope to characterize the error into two types: dynamics approximation error and metric learning error.

Next, we define three types of model errors, relating to the quality of encoding and forward dynamics prediction (in terms of state and reward).

\begin{restatable}[Model and Encoder Errors]{definition}{model-error}
\label{prop:modelerror}
The bisimulation distance approximation error $\mathcal{E}_{\phi}$,
transition probability error $\mathcal{E}_{\mathcal{P}}$,
and
reward prediction error $\mathcal{E}_{r}$
are given by
\begin{align}
    \mathcal{E}_{\phi} &:= \norm{\widehat{d}_{\pi,\phi} - \widehat{d}_\pi}_\infty \label{eqdef:ephi} \\
    \mathcal{E}_{\mathcal{P}} &:= \sup_{\mathbf{s} \in\mathcal{S}} W_p(d_\pi)(\mathcal{P}^\pi(\cdot|\mathbf{s}), \widehat{\mathcal{P}}^\pi(\cdot|\mathbf{s}))  \\
    \mathcal{E}_{r} &:= \norm{\widehat{r}^\pi - {r}^\pi}_\infty 
\end{align}
where $||\cdot||_\infty$ is the supremum (or uniform) norm over states and
$\widehat{d}_\pi$ is the fixed-point bisimulation metric with the $W_p$ distance defined by using $\widehat{\mathcal{P}}^\pi$ and $\widehat{r}^\pi$, instead of the true dynamics $\mathcal{P}^\pi$ and ${r}^\pi$.%
\footnote{Firstly, note that for stochastic reward signals and/or policies, this is a difference between expectations, meaning that there could be sampling noise. Otherwise, for the DBC use case, the observed reward collected by the agent is used for training (meaning there will be zero modelling error for the reward term when computing $\widehat{d}_\pi$ between observed states). 
Nevertheless, given a reward model, $\widehat{d}_{\pi,\phi}$ and $ \widehat{d}_{\pi}$ can still be queried for states where the ground truth reward (or reward distribution) may not be known.  } 
\end{restatable}

Note that we have defined our forward dynamics model errors here irrespective of $\phi$ (i.e., it may be used by $\widehat{\mathcal{P}}$ and/or $\widehat{r}^\pi$, or not).
Our goal is to use these errors to bound the difference in the MDP value function induced by the approximate nature of the environmental model and state encoder.
First, we consider how the model errors affect the optimal policy-dependent bisimulation distance that we can obtain given our approximate forward dynamics.
\begin{restatable}[Bisimulation distance error]{lemma}{distance-error}
\label{lemma:distanceerror}
Let $c_T \in [0,1)$ and $c_R \geq 0$. Assume $\mathrm{supp}(\widehat{\mathcal{P}}) \subseteq \mathcal{S}$
and $1 - c_Ta_p > 0$. 
Then
\begin{equation}
\norm{d_\pi  -  \widehat{d}_\pi}_\infty \leq 
\frac{2c_R}{1 - c_T a_p} \mathcal{E}_{r} + \frac{2c_T}{1 - c_T a_p} \mathcal{E}_{\mathcal{P}} + 
\frac{ c_T[a_p - 1] }{1 - c_Ta_p}
\mathrm{diam}(\mathcal{S};{d}_\pi)
\end{equation}
where $a_p = 2^{(p-1)/p}$ and 
$\mathrm{diam}(\mathcal{S};{d}_\pi) \leq  \frac{c_R}{1-c_T}(R_{\mathrm{max}}-R_{\mathrm{min}})$ by Theorem \ref{thm:boundedness}.
\end{restatable}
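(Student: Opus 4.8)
The plan is to turn the two defining fixed-point equations---one for $d_\pi$ with the true $(\mathcal{P}^\pi, r^\pi)$ and one for $\widehat{d}_\pi$ with the approximate $(\widehat{\mathcal{P}}^\pi, \widehat{r}^\pi)$, both of the form in Assumption~\ref{assum:p-wass} (existence of $\widehat{d}_\pi$ following from the same assumption applied to the approximate-dynamics MDP, e.g.\ via Remark~\ref{remark:rem1}/Theorem~\ref{thm:boundedness} when $p=1$, using that $\mathcal{S}$ is compact)---into a single self-referential scalar inequality for $\Delta \coloneqq \norm{d_\pi - \widehat{d}_\pi}_\infty$. Subtracting the two equations at a fixed pair $(\mathbf{s}_i,\mathbf{s}_j)$ and taking absolute values, the reward contribution is bounded immediately by the reverse triangle inequality: $c_R\bigl|\,|\widehat{r}^\pi_i - \widehat{r}^\pi_j| - |r^\pi_i - r^\pi_j|\,\bigr| \leq c_R\bigl(|\widehat{r}^\pi_i - r^\pi_i| + |\widehat{r}^\pi_j - r^\pi_j|\bigr) \leq 2c_R\,\mathcal{E}_r$, which is what produces the $\mathcal{E}_r$ term with coefficient $2c_R$ and no factor of $a_p$.

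The heart of the argument is controlling $\bigl| W_p(\widehat{d}_\pi)(\widehat{\mathcal{P}}^\pi_i, \widehat{\mathcal{P}}^\pi_j) - W_p(d_\pi)(\mathcal{P}^\pi_i, \mathcal{P}^\pi_j) \bigr|$, which I would telescope through the intermediate quantity $W_p(d_\pi)(\widehat{\mathcal{P}}^\pi_i, \widehat{\mathcal{P}}^\pi_j)$ into a \emph{distribution-perturbation} part and a \emph{cost-perturbation} part. For the distribution part, the triangle inequality for the $W_p$ metric (valid since $\mathrm{supp}(\widehat{\mathcal{P}})\subseteq\mathcal{S}$, so all the measures live on $\mathcal{S}$) together with the definition $\mathcal{E}_{\mathcal{P}} = \sup_{\mathbf{s}} W_p(d_\pi)(\mathcal{P}^\pi(\cdot|\mathbf{s}), \widehat{\mathcal{P}}^\pi(\cdot|\mathbf{s}))$ gives a bound of $W_p(d_\pi)(\widehat{\mathcal{P}}^\pi_i, \mathcal{P}^\pi_i) + W_p(d_\pi)(\widehat{\mathcal{P}}^\pi_j, \mathcal{P}^\pi_j) \leq 2\mathcal{E}_{\mathcal{P}}$. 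For the cost part, I would use that $\widehat{d}_\pi \leq d_\pi + \Delta$ pointwise, so that monotonicity of $W_p$ in its cost together with the convexity bound $(a+\Delta)^p \leq 2^{p-1}(a^p + \Delta^p)$, applied inside the optimal coupling for $W_p(d_\pi)$, yields $W_p(\widehat{d}_\pi)(\mu,\lambda) \leq a_p\bigl(W_p(d_\pi)(\mu,\lambda) + \Delta\bigr)$ for any $\mu,\lambda$ on $\mathcal{S}$, with $a_p = 2^{(p-1)/p}$; subtracting $W_p(d_\pi)(\widehat{\mathcal{P}}^\pi_i, \widehat{\mathcal{P}}^\pi_j)$ and bounding that crudely by $\mathrm{diam}(\mathcal{S}; d_\pi)$ (again using that the $\widehat{\mathcal{P}}^\pi$'s are supported on $\mathcal{S}$) leaves $(a_p-1)\mathrm{diam}(\mathcal{S};d_\pi) + a_p\Delta$. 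The opposite sign of the absolute value is handled identically via $d_\pi \leq \widehat{d}_\pi + \Delta$, which produces $\mathrm{diam}(\mathcal{S};\widehat{d}_\pi)$ in place of $\mathrm{diam}(\mathcal{S};d_\pi)$; both are dominated by $\tfrac{c_R}{1-c_T}(R_{\mathrm{max}} - R_{\mathrm{min}})$ via Lemma~\ref{lemma:diam} and Theorem~\ref{thm:boundedness}, so a common bound is obtained.

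Assembling the pieces, for every $(\mathbf{s}_i,\mathbf{s}_j)$ one gets $|\widehat{d}_\pi(\mathbf{s}_i,\mathbf{s}_j) - d_\pi(\mathbf{s}_i,\mathbf{s}_j)| \leq 2c_R\mathcal{E}_r + c_T\bigl(2\mathcal{E}_{\mathcal{P}} + (a_p-1)\mathrm{diam}(\mathcal{S};d_\pi) + a_p\Delta\bigr)$; taking the supremum over state pairs turns the left side into $\Delta$, giving $\Delta(1 - c_T a_p) \leq 2c_R\mathcal{E}_r + 2c_T\mathcal{E}_{\mathcal{P}} + c_T(a_p-1)\mathrm{diam}(\mathcal{S};d_\pi)$, and dividing by $1 - c_T a_p > 0$ (the stated hypothesis) yields exactly the claimed inequality, with $\mathrm{diam}(\mathcal{S};d_\pi) \leq \tfrac{c_R}{1-c_T}(R_{\mathrm{max}}-R_{\mathrm{min}})$ read off from Theorem~\ref{thm:boundedness} (equivalently Lemma~\ref{lemma:diam}). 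I expect the main obstacle to be the cost-perturbation step: unlike the $W_1$ case, where perturbing the ground distance by a uniform $\Delta$ changes $W_1$ by at most $\Delta$ additively, for general $p$ the convexity bound forces the multiplicative factor $a_p$, which is precisely what necessitates the contraction-type condition $c_T a_p < 1$ and brings in the diameter term; a secondary subtlety is ordering the telescoping so that $\mathcal{E}_{\mathcal{P}}$ stays measured against the true metric $d_\pi$ (as in its definition) rather than $\widehat{d}_\pi$, which is what keeps its coefficient at $2c_T$ rather than $2c_T a_p$.
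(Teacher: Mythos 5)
Your proposal is correct and follows essentially the same route as the paper's proof: bound the reward term by the reverse triangle inequality, split the Wasserstein term through the intermediate quantity $W_p(d_\pi)(\widehat{\mathcal{P}}^\pi_i,\widehat{\mathcal{P}}^\pi_j)$ into a distribution-perturbation part (Wasserstein triangle inequality, keeping $\mathcal{E}_{\mathcal{P}}$ measured against $d_\pi$) and a cost-perturbation part (power-mean convexity giving the factor $a_p = 2^{(p-1)/p}$ plus the diameter term), then close the self-referential inequality in $\norm{d_\pi-\widehat{d}_\pi}_\infty$ using $1 - c_T a_p > 0$. Your explicit treatment of the reverse direction of the absolute value (which yields $\mathrm{diam}(\mathcal{S};\widehat{d}_\pi)$, dominated by the same bound $\tfrac{c_R}{1-c_T}(R_{\mathrm{max}}-R_{\mathrm{min}})$) is a small point the paper's write-up glosses over, but it does not change the argument.
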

\begin{proof}
We can first use the triangle inequality to bound the difference between reward distances:
\begin{align*}
|r_i^\pi - r_j^\pi| 
&\leq |{r}_i^\pi - \widehat{r}_i^\pi| + |{r}_j^\pi - \widehat{r}_i^\pi| 
\leq \mathcal{E}_{r} + \mathcal{E}_{r} + |\widehat{r}_i^\pi - \widehat{r}_j^\pi|,
\end{align*}
so that
$|r_i^\pi - r_j^\pi|  - |\widehat{r}_i^\pi - \widehat{r}_j^\pi|
\leq \mathcal{E}_{r} + \mathcal{E}_{r}$. 
Symmetrically, we can show that 
$|\widehat{r}_i^\pi - \widehat{r}_j^\pi| - |r_i^\pi - r_j^\pi| 
\leq 2\mathcal{E}_{r} $ as well.
For notational clarity, 
let $a_p = 2^{(p-1)/p}$ and
$ W_p(d_\pi, \mathcal{P}^\pi) :=  W_p(d_\pi)(\mathcal{P}^\pi(\cdot|\mathbf{s}_i), {\mathcal{P}}^\pi(\cdot|\mathbf{s}_j)) $, as well as similarly for 
$ W_p(d_\pi, \widehat{\mathcal{P}}^\pi)$ 
and 
$ W_p( \widehat{d}_\pi, \widehat{\mathcal{P}}^\pi) $.

First, by the Wasserstein triangle inequality  \cite{clement2008elementary}, as for the reward difference:
\begin{equation}
|W_p(d_\pi,\widehat{\mathcal{P}}^\pi)
 - W_p(d_\pi,\mathcal{P}^\pi)| 
\leq 2\mathcal{E}_{\mathcal{P}}.
\label{eq:supp:9:triineq}
\end{equation} 

Second, the convexity of $d^p$ implies that,
\begin{align*}
    W_p( || d_\pi - \widehat{d}_\pi ||_\infty + d_\pi,\widehat{\mathcal{P}}^\pi )
    &= \left(\inf_{\omega \in \Omega}\mathbb{E}_{(\mathbf{s}_i, \mathbf{s}_j) \sim \omega}[( || d_\pi - \widehat{d}_\pi ||_\infty + d_\pi(\mathbf{s}_i, \mathbf{s}_j))^p]  \right)^{\frac{1}{p}} \\
    &\leq \left(\inf_{\omega \in \Omega} 2^{p-1} \mathbb{E}_{(\mathbf{s}_i, \mathbf{s}_j) \sim \omega}[( || d_\pi - \widehat{d}_\pi ||_\infty^p + d_\pi(\mathbf{s}_i, \mathbf{s}_j)^p]  \right)^{\frac{1}{p}} \\
    &\leq a_p \left( || d_\pi - \widehat{d}_\pi ||_\infty^p +  
    W_p^p( d_\pi, \widehat{\mathcal{P}}^\pi) \right)^{\frac{1}{p}} \\
    &\leq
    a_p \left( \left[|| d_\pi - \widehat{d}_\pi ||_\infty + 
    W_p( d_\pi , \widehat{\mathcal{P}}^\pi) \right]^p \right)^{1/p} \\
    &= a_p \left(|| d_\pi - \widehat{d}_\pi ||_\infty + W_p( d_\pi , \widehat{\mathcal{P}}^\pi) \right).
    \numberthis \label{eq:supp:9:convexity}
\end{align*}
Third, 
recall that when $\mathrm{supp}(\widehat{\mathcal{P}}) \subseteq \mathcal{S}$, due to Lemma \ref{lemma:wass-diam}, we have:
\begin{equation}
    W_p(d_\pi,\widehat{\mathcal{P}}^\pi)
    \leq 
    \mathrm{diam}(\mathcal{S};{d}_\pi). 
    \numberthis \label{eq:supp:9:Wdiam}
\end{equation}

Then, the difference in distances can be bounded by:
\begin{align*}
    \bigl| 
        &W_p(d_\pi, \mathcal{P}^\pi)
        -
        W_p( \widehat{d}_\pi, \widehat{\mathcal{P}}^\pi)
    \bigr|\\
    &\leq 
        \bigl| 
            W_p( \widehat{d}_\pi, \widehat{\mathcal{P}}^\pi)
            - 
            W_p( {d}_\pi, \widehat{\mathcal{P}}^\pi)
            \bigr|
        + 
        \bigl| W_p( {d}_\pi, {\mathcal{P}}^\pi)
            - W_p( {d}_\pi, \widehat{\mathcal{P}}^\pi)
            \bigr| \\
    &\leq \bigl| 
        W_p( \widehat{d}_\pi, \widehat{\mathcal{P}}^\pi)
                - 
                W_p( {d}_\pi, \widehat{\mathcal{P}}^\pi)
                \bigr| +
            2 \mathcal{E}_{\mathcal{P}}  && \text{By Eq.\ \ref{eq:supp:9:triineq}} \\
&=
\bigl| 
        W_p( \widehat{d}_\pi - d_\pi + d_\pi, \widehat{\mathcal{P}}^\pi)
                - 
                W_p( {d}_\pi, \widehat{\mathcal{P}}^\pi)
                \bigr| +
            2 \mathcal{E}_{\mathcal{P}} \\
&\leq
\bigl| 
        W_p( ||\widehat{d}_\pi - d_\pi||_\infty + d_\pi, \widehat{\mathcal{P}}^\pi)
                - 
                W_p( {d}_\pi, \widehat{\mathcal{P}}^\pi)
                \bigr| +
            2 \mathcal{E}_{\mathcal{P}} \\
&=
\bigl| 
        W_p( || d_\pi - \widehat{d}_\pi ||_\infty + d_\pi, \widehat{\mathcal{P}}^\pi)
                - 
                W_p( {d}_\pi, \widehat{\mathcal{P}}^\pi)
                \bigr| +
            2 \mathcal{E}_{\mathcal{P}} \\
&\leq 
\bigl| 
        a_p || d_\pi - \widehat{d}_\pi ||_\infty + 
        a_p W_p( {d}_\pi, \widehat{\mathcal{P}}^\pi)
                - 
                W_p( {d}_\pi, \widehat{\mathcal{P}}^\pi)
                \bigr| +
            2 \mathcal{E}_{\mathcal{P}} && \text{By Eq.\ \ref{eq:supp:9:convexity}} 
\\
&\leq %
 a_p || d_\pi - \widehat{d}_\pi ||_\infty  + [a_p - 1] \mathrm{diam}(\mathcal{S};{d}_\pi) 
+ 2 \mathcal{E}_{\mathcal{P}}. && \text{By Eq.\ \ref{eq:supp:9:Wdiam}}   
\end{align*}

We can then plug these into the difference between the true and approximate policy-dependent bisimulation distances:
\begin{align*}
    |d_\pi(\mathbf{s}_i, \mathbf{s}_j)  -  \widehat{d}_\pi(\mathbf{s}_i, \mathbf{s}_j)|
    &\leq c_R \left| |r_i^\pi - r_j^\pi| - |\widehat{r}_i^\pi - \widehat{r}_j^\pi| \right|
         + c_T\left| W_p(d_\pi,\mathcal{P}^\pi) - W_p(\widehat{d}_\pi,\widehat{\mathcal{P}}^\pi) \right| \\
    &\leq 2c_R \mathcal{E}_{r} 
        + c_T\left| a_p || d_\pi - \widehat{d}_\pi ||_\infty  + [a_p - 1] \mathrm{diam}(\mathcal{S};{d}_\pi)  + 2 \mathcal{E}_{\mathcal{P}} \right| \\
|| d_\pi - \widehat{d}_\pi ||_\infty    
&\leq 
2c_R \mathcal{E}_{r} + {2c_T} \mathcal{E}_{\mathcal{P}} + 
c_T a_p || d_\pi - \widehat{d}_\pi ||_\infty + c_T[a_p - 1] \mathrm{diam}(\mathcal{S};{d}_\pi) \\
|| d_\pi - \widehat{d}_\pi ||_\infty    
&\leq 
\frac{2c_R}{1 - c_T a_p} \mathcal{E}_{r} + \frac{2c_T}{1 - c_T a_p} \mathcal{E}_{\mathcal{P}} + 
\frac{ c_T[a_p - 1] }{1 - c_Ta_p} \mathrm{diam}(\mathcal{S};{d}_\pi)
\end{align*}
where the second-last inequality follows by taking the supremum over states for both sides. 
\end{proof}

For the remainder of this section, we assume $p=1$. 

\begin{restatable}[Bisimulation distance error with $p=1$]{corollary}{distance-error-peq1}
\label{corollary:distanceerrorpequal1}
Let $p=1$, with the remaining conditions as in Lemma \ref{lemma:distanceerror}. Then
\begin{equation}
\norm{d_\pi - \widehat{d}_\pi}_\infty
\leq 
\frac{2c_R}{1-c_T} \mathcal{E}_{r} + \frac{2c_T}{1-c_T} \mathcal{E}_{\mathcal{P}}.
\end{equation}
\end{restatable}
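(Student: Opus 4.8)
The plan is to obtain this as an immediate specialization of Lemma~\ref{lemma:distanceerror}. First I would evaluate the constant $a_p = 2^{(p-1)/p}$ at $p=1$, which gives $a_1 = 2^{0} = 1$. Substituting $a_p = 1$ into the three-term bound of Lemma~\ref{lemma:distanceerror}, the coefficient $a_p - 1$ multiplying $\mathrm{diam}(\mathcal{S}; d_\pi)$ vanishes, so the diameter term drops out completely, and the denominators $1 - c_T a_p$ collapse to $1 - c_T$. This yields directly
\begin{align*}
\norm{d_\pi - \widehat{d}_\pi}_\infty \le \frac{2c_R}{1 - c_T}\mathcal{E}_r + \frac{2c_T}{1 - c_T}\mathcal{E}_{\mathcal{P}},
\end{align*}
which is the claimed inequality.

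It then remains only to check that the hypotheses of Lemma~\ref{lemma:distanceerror} are in force: $c_T \in [0,1)$, $c_R \ge 0$, $\mathrm{supp}(\widehat{\mathcal{P}}) \subseteq \mathcal{S}$, and $1 - c_T a_p > 0$. Since $a_1 = 1$, the last condition reduces to $1 - c_T > 0$, which is already implied by $c_T \in [0,1)$; hence at $p=1$ this side condition is automatically satisfied and imposes nothing beyond the standing assumptions. All the remaining hypotheses are carried over verbatim from the lemma, so no new work is needed.

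There is essentially no obstacle here: the corollary is a routine specialization whose only content is the arithmetic simplification $a_1 = 1$. The one point worth remarking is \emph{why} the diameter term — which is genuinely present when $p > 1$ — disappears: at $p=1$ the $W_p$-triangle-inequality and the convexity bound used in the proof of Lemma~\ref{lemma:distanceerror} (Eqs.~\eqref{eq:supp:9:convexity}--\eqref{eq:supp:9:Wdiam}) hold with $a_p$ equal to one, so the slack term $c_T[a_p-1]\,\mathrm{diam}(\mathcal{S}; d_\pi)$ that bounds the loss of tightness from those steps simply evaluates to zero.
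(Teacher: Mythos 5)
Your proposal is correct and matches the paper's proof exactly: the corollary follows by setting $a_1 = 2^{(1-1)/1} = 1$ in Lemma \ref{lemma:distanceerror}, which kills the $c_T[a_p-1]\,\mathrm{diam}(\mathcal{S};d_\pi)$ term and reduces the denominators to $1-c_T$. Your added observation that the side condition $1 - c_T a_p > 0$ becomes automatic under $c_T \in [0,1)$ is a nice (if minor) clarification beyond what the paper states.
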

\begin{proof}
When $p=1$, we have $a_p = a_1 = 1$, giving the expression above.
\end{proof}

This bounds the error between the true on-policy bisimulation distance and the optimal \textit{approximate} bisimulation distance (i.e., the best distance function we could hope to achieve with our encoder, given the error in our forward dynamics model).
However, ultimately, we wish to bound the error in the value function in terms of $\widehat{d}_{\pi,\phi}$, not just $\widehat{d}_\pi$ (to take the error of the encoder $\phi$ into account, as well as that of the dynamics model).
First, we can bound the true bisimulation distance in terms of the encoder and model error as follows:

\begin{restatable}[Bound on bisimulation distance with encoder error]{lemma}{bisim-bound-enc-err}
\label{lemma:bisimboundencerr}
Consider the same conditions as Corollary \ref{corollary:distanceerrorpequal1}. Then
\begin{equation}
\norm{d_\pi - \widehat{d}_{\pi,\phi}}_\infty
\leq 
\mathcal{E}_\phi + \frac{2c_R}{1-c_T} \mathcal{E}_{r} + \frac{2c_T}{1-c_T} \mathcal{E}_{\mathcal{P}}.
\end{equation}
\end{restatable}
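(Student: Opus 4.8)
The plan is to obtain this bound by a single application of the triangle inequality in the supremum norm, combined with the two estimates already established. First I would write, pointwise for every $(\mathbf{s}_i,\mathbf{s}_j) \in \mathcal{S}\times\mathcal{S}$ and then taking suprema over state pairs,
\[
\norm{d_\pi - \widehat{d}_{\pi,\phi}}_\infty \;\leq\; \norm{d_\pi - \widehat{d}_\pi}_\infty \;+\; \norm{\widehat{d}_\pi - \widehat{d}_{\pi,\phi}}_\infty .
\]
This decomposes the total error into (i) the error between the ground-truth PBSM $d_\pi$ and the fixed point $\widehat{d}_\pi$ of the approximate-dynamics operator, and (ii) the gap between that fixed point and the encoder-induced distance $\widehat{d}_{\pi,\phi}$.

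Next I would identify each piece with a quantity already named. The second summand is, by Definition \ref{prop:modelerror} (the definition \eqref{eqdef:ephi} of the metric learning error), exactly $\mathcal{E}_\phi$. The first summand is controlled by Corollary \ref{corollary:distanceerrorpequal1}: since we are assuming its hypotheses ($p=1$, $\mathrm{supp}(\widehat{\mathcal{P}})\subseteq\mathcal{S}$, and $c_T\in[0,1)$, so that $1-c_T a_1 = 1-c_T>0$), we have $\norm{d_\pi - \widehat{d}_\pi}_\infty \leq \tfrac{2c_R}{1-c_T}\mathcal{E}_r + \tfrac{2c_T}{1-c_T}\mathcal{E}_{\mathcal{P}}$. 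Substituting both bounds into the display above yields precisely the claimed inequality.

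The argument is essentially immediate, so there is no real obstacle; the only point requiring mild care is bookkeeping of prerequisites. All the structural assumptions needed for the objects to be well-defined — existence and uniqueness of $d_\pi$ via Assumption \ref{assum:p-wass}, and of $\widehat{d}_\pi$ together with the diameter bound $\mathrm{diam}(\mathcal{S};d_\pi)\leq \tfrac{c_R}{1-c_T}(R_{\mathrm{max}}-R_{\mathrm{min}})$ via Theorem \ref{thm:boundedness} — are exactly those already invoked by Corollary \ref{corollary:distanceerrorpequal1}, and they hold here by hypothesis, so the chaining of bounds is valid and no additional work is required.
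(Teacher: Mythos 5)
Your proposal is correct and follows exactly the paper's own argument: insert $\widehat{d}_\pi$ as an intermediate point, apply the triangle inequality in the supremum norm, identify the second term with $\mathcal{E}_\phi$ via its definition, and bound the first via Corollary \ref{corollary:distanceerrorpequal1}. Nothing is missing.
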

\begin{proof}
\begin{align*}
    \norm{d_\pi - \widehat{d}_{\pi,\phi}}_\infty
    &= 
    \norm{d_\pi - \widehat{d}_{\pi,\phi} - \widehat{d}_\pi + \widehat{d}_\pi}_\infty \\
    &\leq 
      \norm{d_\pi - 
        \widehat{d}_\pi}_\infty
    + \norm{\widehat{d}_{\pi,\phi} - 
        \widehat{d}_\pi}_\infty \\
    &\leq \frac{2c_R}{1-c_T} \mathcal{E}_{r} + \frac{2c_T}{1-c_T} \mathcal{E}_{\mathcal{P}}
        + \mathcal{E}_\phi 
\end{align*}
using Corollary \ref{corollary:distanceerrorpequal1} and Equation \ref{eqdef:ephi}.
\end{proof}
Thus, if we can relate $d_\pi$ to the value function, we can also do so for $\widehat{d}_{\pi,\phi}$, as a function of model error.

Finally, we look at bounding the difference in the state value function, using the \textit{approximate} bisimulation distance defined through the learned encoder (i.e., our partitioning $Z$ is defined via $ \widehat{d}_{\pi,\phi} $).
Let $\widehat{\epsilon}$ be the aggregation radius in $\phi$-space 
(meaning the maximum diameter with respect to $\widehat{d}_{\pi,\phi}$ per partition subset, or equivalence class, is at most $2\,\widehat{\epsilon}\,$):
\begin{equation*}
    \sup_{\mathbf{z}\in Z} \sup_{\mathbf{s}_i, \mathbf{s}_j \in \mathbf{z}}
    || \phi(\mathbf{s}_i) - \phi(\mathbf{s}_j) ||_q ~\leq~ 2 \widehat{\epsilon}.
\end{equation*}
Notice that $\widehat{\epsilon}$ bounds the maximal diameter of the partition cells with respect to the \textit{learned} metric, using $\phi$, rather than the ground truth bisimulation distance.

\valuemodelerror*
\begin{proof}

Following the proof of Lemma \ref{lemma:on-policy-vfa}, we have that
\footnote{Notice that using the Equation $(\dag)$ allows us to recover the value bound with model error from \cite{zhang2021invariant}: $ |V^\pi(\mathbf{s}) - \widetilde{V}^\pi(\Phi(\mathbf{s}))| \leq \frac{2\widehat{\varepsilon} + \mathcal{L}}{(1-\gamma)(1-c)} $, where $c_R = 1-c$. } %
\begin{align*}
    (1-\gamma)|V^\pi(\mathbf{s}) - \widetilde{V}^\pi(\Phi(\mathbf{s}))| 
    & \leq \frac{c_R^{-1}}{\xi(\Phi(\mathbf{s}))}\int\limits_{\mathbf{z} \in \Phi(\mathbf{s})}d_\pi(\mathbf{s}, \mathbf{z})d\xi(\mathbf{z}) \\
    & \leq \frac{c_R^{-1}}{\xi(\Phi(\mathbf{s}))}\int\limits_{\mathbf{z} \in \Phi(\mathbf{s})} 
    \widehat{d}_{\pi,\phi}(\mathbf{s}, \mathbf{z}) 
    + \underbrace{| d_\pi(\mathbf{s},\mathbf{z}) - 
            \widehat{d}_{\pi,\phi}(\mathbf{s}, \mathbf{z}) |_\infty}_{\mathcal{L}}
    d\xi(\mathbf{z}) \\
    & \leq \frac{c_R^{-1}}{\xi(\Phi(\mathbf{s}))}\int\limits_{\mathbf{z} \in \Phi(\mathbf{s})} 
    2\widehat{\epsilon} 
    + {\mathcal{L}}\,
    d\xi(\mathbf{z}) \\
    &= c_R^{-1}( 2\widehat{\epsilon} + \mathcal{L} ) \tag{\dag} \\
    &\leq 
    \frac{1}{c_R}\left( 2\widehat{\epsilon} + \mathcal{E}_\phi + \frac{2c_R}{1-c_T} \mathcal{E}_{r} + \frac{2c_T}{1-c_T} \mathcal{E}_{\mathcal{P}} \right)
\end{align*}
where the last line used Lemma \ref{lemma:bisimboundencerr}.

\end{proof}

Rather than bound the ground-truth on-policy bisimulation distance $d_\pi$, we have instead bound the \textit{estimated} distance $\widehat{d}_{\pi,\phi}$, using both approximate predictive dynamics and error in the metric learning process.

Notice that as we shrink the size of the equivalence classes in the partition, 
    so $\widehat{\epsilon} \rightarrow 0$, 
    we get $\Phi \rightarrow \phi$.
This tells us that information about the value of a state is preserved by the encoder $\phi$, as long as the error in the forward dynamics model and metric learning process is small.
Further, in the low error case, if the vectors $\phi(\mathbf{s}_i)$ and $\phi(\mathbf{s}_j)$ are close, we are guaranteed they have similar value under $\pi$, with the difference growing only linearly with the error.

\begin{restatable}[VFA bound in terms of model error for arbitrary $c_T$]{corollary}{vfamodelerrorct}
\label{crl:vfamodelerrorct}
Consider the same conditions and definitions as Thm. \ref{theorem:valboundmodelerror}, except $c_T \in [0, 1)$. Let $\overline{\gamma} = \min(c_T, \gamma)$:
\begin{equation}
| V^\pi(\mathbf{s}) - \widetilde{V}^\pi( \Phi( \mathbf{s} ) ) |
\leq 
\frac{1}{c_R(1 - \overline{\gamma})} \left( 
    2 \,\widehat{\epsilon} + 
    \mathcal{E}_\phi + 
    \frac{2c_R}{1 - c_T}\mathcal{E}_r + 
    \frac{2c_T}{1 - c_T}\mathcal{E}_{\mathcal{P}} \right) + \frac{2(\gamma - \overline{\gamma})}{(1-\gamma)(1-c_T)}, ~\forall\mathbf{s} \in \mathcal{S}.
\end{equation}
\end{restatable}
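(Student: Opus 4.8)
The plan is to reduce the claim to Theorem \ref{theorem:valboundmodelerror} via the discount‑shifting argument already used in the proof of Theorem \ref{thm:generalized-ct-vfa}. Write $\overline{\gamma} = \min(c_T, \gamma)$ and split into two cases. If $c_T \geq \gamma$, then $\overline{\gamma} = \gamma$, the additive term $\frac{2(\gamma-\overline{\gamma})}{(1-\gamma)(1-c_T)}$ vanishes, and the asserted inequality is verbatim Theorem \ref{theorem:valboundmodelerror}; nothing more is needed.

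For the remaining case $c_T < \gamma$ (so $\overline{\gamma} = c_T$), the key observation is that the metrics $d_\pi$, $\widehat{d}_\pi$, $\widehat{d}_{\pi,\phi}$, the aggregation map $\Phi$, and the error quantities $\widehat{\epsilon}, \mathcal{E}_\phi, \mathcal{E}_r, \mathcal{E}_{\mathcal{P}}$ all depend only on $c_R$, $c_T$, and the (approximate) reward and dynamics models --- not on the discount factor used to define value functions. Hence I would apply Theorem \ref{theorem:valboundmodelerror} to the otherwise‑identical MDP whose discount factor is $c_T$: its hypothesis ``$c_T \in [\gamma,1)$'' becomes ``$c_T \in [c_T,1)$'', which holds, and $p=1$ and the aggregation‑radius condition are unchanged, so
\[
|V^\pi_{c_T}(\mathbf{s}) - \widetilde{V}^\pi_{c_T}(\Phi(\mathbf{s}))| \leq \frac{1}{c_R(1-c_T)}\left(2\widehat{\epsilon} + \mathcal{E}_\phi + \frac{2c_R}{1-c_T}\mathcal{E}_r + \frac{2c_T}{1-c_T}\mathcal{E}_{\mathcal{P}}\right), \quad \forall\mathbf{s}\in\mathcal{S},
\]
where $V^\pi_{c_T}$ and $\widetilde{V}^\pi_{c_T}$ are the value functions of the original and aggregated MDPs under discount $c_T$.

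Next I would insert these discount‑$c_T$ value functions via the triangle inequality,
\[
|V^\pi(\mathbf{s}) - \widetilde{V}^\pi(\Phi(\mathbf{s}))| \leq |V^\pi_{c_T}(\mathbf{s}) - \widetilde{V}^\pi_{c_T}(\Phi(\mathbf{s}))| + |V^\pi(\mathbf{s}) - V^\pi_{c_T}(\mathbf{s})| + |\widetilde{V}^\pi(\Phi(\mathbf{s})) - \widetilde{V}^\pi_{c_T}(\Phi(\mathbf{s}))|.
\]
The first term is bounded by the display above. Since the original MDP has rewards in $[0,1]$ and the aggregated reward $\widetilde{R}$ is a $\xi$‑average of $R\in[0,1]$, hence also in $[0,1]$, Lemma \ref{lemma:diff-gamma} applies to each of the last two terms and bounds each by $\frac{\gamma-c_T}{(1-\gamma)(1-c_T)}$. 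Summing the three contributions and using $\overline{\gamma}=c_T$ gives exactly the claimed bound in this case, and combined with the $c_T\geq\gamma$ case this finishes the proof.

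The main obstacle is bookkeeping rather than conceptual: one must check that ``applying Theorem \ref{theorem:valboundmodelerror} at discount $c_T$'' is actually legitimate. Concretely, the proof of that theorem (via Lemma \ref{lemma:on-policy-vfa}) relies on $c_R V^\pi$ being $1$‑Lipschitz with respect to $d_\pi$ and on the rescaling $\frac{c_R\gamma}{c_T}V^\pi$ staying $1$‑Lipschitz, which used $\gamma\leq c_T$; with the discount reset to $c_T$ this rescaling factor becomes $c_R$ and the argument goes through unchanged. One should also verify that the metric‑error decomposition from Lemma \ref{lemma:bisimboundencerr}, and thus the constants multiplying $\mathcal{E}_r$ and $\mathcal{E}_{\mathcal{P}}$, are untouched by the change of discount --- which they are, since those lemmas never reference $\gamma$.
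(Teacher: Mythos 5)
Your proposal is correct and follows essentially the same route as the paper: split on $c_T \geq \gamma$ versus $c_T < \gamma$, apply Theorem \ref{theorem:valboundmodelerror} at discount $c_T$ for the first term of the triangle-inequality decomposition involving $V^\pi_{c_T}$ and $\widetilde{V}^\pi_{c_T}$, and bound the two discount-shift terms by Lemma \ref{lemma:diff-gamma}. Your extra verification that the metric quantities and error constants are independent of the discount factor is a welcome bookkeeping check, but it does not change the argument relative to the paper's proof.
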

\begin{proof}
The proof follows similarly to the proof of Thm. \ref{thm:generalized-ct-vfa}. Suppose $c_T < \gamma$:
\begin{align*}
    |V^\pi(\mathbf{s}) - \widetilde{V}^\pi(\Phi(\mathbf{s}))| &= |V^\pi(\mathbf{s}) - \widetilde{V}^\pi(\Phi(\mathbf{s})) + \widetilde{V}_{c_T}^\pi(\Phi(\mathbf{s})) - \widetilde{V}_{c_T}^\pi(\Phi(\mathbf{s})) + V_{c_T}^\pi(\mathbf{s}) - V_{c_T}^\pi(\mathbf{s})| \\
    & \leq |V_{c_T}^\pi(\mathbf{s}) - \widetilde{V}_{c_T}^\pi(\Phi(\mathbf{s}))| ~+~  |V^\pi(\mathbf{s}) - V_{c_T}^\pi(\mathbf{s})| ~+~  |\widetilde{V}^\pi(\Phi(\mathbf{s})) - \widetilde{V}_{c_T}^\pi(\Phi(\mathbf{s}))| \\
    & \leq  \frac{1}{c_R(1 - c_T)} \left(
    2 \,\widehat{\epsilon} + 
    \mathcal{E}_\phi + 
    \frac{2c_R}{1 - c_T}\mathcal{E}_r + 
    \frac{2c_T}{1 - c_T}\mathcal{E}_{\mathcal{P}} \right) \\ &~~~+~ |V^\pi(\mathbf{s}) - V_{c_T}^\pi(\mathbf{s})| ~+~  |\widetilde{V}^\pi(\Phi(\mathbf{s})) - \widetilde{V}_{c_T}^\pi(\Phi(\mathbf{s}))|\\
    & \leq \frac{1}{c_R(1 - c_T)} \left(
    2 \,\widehat{\epsilon} + 
    \mathcal{E}_\phi + 
    \frac{2c_R}{1 - c_T}\mathcal{E}_r + 
    \frac{2c_T}{1 - c_T}\mathcal{E}_{\mathcal{P}} \right) + \frac{2(\gamma - c_T)}{(1-\gamma)(1-c_T)},
\end{align*}
where the second and third inequalities are due to Thm. \ref{theorem:valboundmodelerror} and Lemma \ref{lemma:diff-gamma} respectively. For $\gamma \leq c_T$, we recover Thm. \ref{theorem:valboundmodelerror}, hence, for all $c_T \in [0, 1)$ and $\mathbf{s} \in \mathcal{S}$:
\begin{align*}
    |V^\pi(\mathbf{s}) - \widetilde{V}^\pi(\Phi(\mathbf{s}))| ~\leq~ \frac{1}{c_R(1 - \min(c_T, \gamma))} \left( 
    2 \,\widehat{\epsilon} + 
    \mathcal{E}_\phi + 
    \frac{2c_R}{1 - c_T}\mathcal{E}_r + 
    \frac{2c_T}{1 - c_T}\mathcal{E}_{\mathcal{P}} \right) + \frac{2(\gamma - \min(c_T, \gamma))}{(1-\gamma)(1-c_T)}.
\end{align*}
\end{proof}

\section{Experimental Details and Additional Results}
\label{sec:additional-results}

\subsection{OpenAI Gym}
\label{sec:additional-results:oagym}

\subsubsection{Sparse Noisy Cartpole Additional Details}
\label{sec:additional-results:cp}

First, we consider modifications to the Cartpole-v0 task. In the standard version, the agent receives a constant reward of $+1$ at each time-step for as long as the cart-pole system is upright between $[-\theta_{\mathrm{term}}, \theta_{\mathrm{term}}]$ degrees. If the pole falls below this range, the episode terminates. To make the task more challenging, we introduce a second parameter $\theta_{\mathrm{rew}} \ll \theta_{\mathrm{term}}$, such that the agent receives a reward only if the pole is between $[-\theta_{\mathrm{rew}}, \theta_{\mathrm{rew}}]$ degrees. We refer to this task as Sparse Cartpole and set $ \theta_{\mathrm{term}} = 12^{\circ} $ with $\theta_{\mathrm{rew}} = 0.01 \,  \theta_{\mathrm{term}}$. Additionally, we consider a noisy version of Sparse Cartpole to mimic distractors. In particular, we concatenate an $N_m \mathrm{dim}(\mathcal{S})$-dimensional vector sampled from an isotropic Gaussian to the state vector. The resulting task is referred to as Noisy Sparse Cartpole.
Thus, the encoder $\phi$ must learn to embed functionally similar states in close proximity, despite the distractions, and maintain a well-behaved embedding, despite reward sparsity. 
While we provide sparse reward signals at training time to make the learning problem more difficult, we evaluate the resulting models in the standard environment based on $\theta_{\mathrm{term}}$, since this provides a lower variance return signal and still allows us determine whether the task has been solved.

\subsubsection{Noisy Mountain Car Additional Details}
\label{sec:additional-results:mc}

We next tested on the Noisy Mountain Car task
\cite{moore1990efficient}, implemented as MountainCarContinuous-v0
in the OpenAI Gym \cite{brockman2016openai},
and modified to concatenate $N_m \mathrm{dim}(\mathcal{S})$ noise dimensions to the observed state to simulate distraction.
Briefly, the agent controls a car that should reach the top of a mountain, but has an engine of insufficient power to attain it directly.
It must therefore learn a sequence of actions that build enough momentum to complete the task. 
The reward signal is highly uninformative: a small negative reward is given at every step, unless the task is solved, in which case a large positive reward is given.
For $N_m > 0$, this task has both noisy distractors and high sparsity, making it rather challenging.

Note that only methods with intrinsic reward were able to solve the task (see Fig.\ \ref{fig:gym} and Fig.\ \ref{fig:gym-additional-plots}), and that
DBC without normalization was also unable to complete it.
We remark that all methods rely on the maximum policy entropy RL formulation, which is known to improve exploration \cite{haarnoja2018soft,haarnoja2018soft2,ziebart2010modeling};
nevertheless,
our results suggest that curiosity-driven exploration, 
induced by intrinsic rewards based on predictive error,
is at least complementary to such techniques.

\subsubsection{Sparse Pendulum Task Details}
\label{sec:additional-results:pd}
For the Pendulum-v0 task, we implement similar modifications to those in SparseCartpole. The standard Pendulum task starts with a pole in downright position, and provides negative rewards proportional to $|\theta_{\mathrm{pend}}|$, degrees away from upright position. Our SparsePendulum instead provides a reward of $+1$ only when the pendulum is between $[-\theta_{\mathrm{rew}}, \theta_{\mathrm{rew}}]$ degrees, where $\theta_{\mathrm{rew}}$, and does not provide a reward otherwise. NoisySparsePendulum similarly concatenates to state vectors a noise vector of $N_m$ times the original dimensionality. 
Note that we evaluate with an environment with 
$\theta_{\mathrm{rew}} = 1^{\circ}$, 
to reduce variance in the evaluation reward.

Results (see Fig.\ \ref{fig:pendulum}) show that our method performs comparably to DBC. In the presence of high distraction ($N_m=10$), 
DBC-based approaches can do much better than SAC, which is no longer able to solve the task.

\subsubsection{Plots with Additional Distraction}
\label{sec:additional-results:nm3}

In Fig.\ \ref{fig:gym-additional-plots}, we show results for Sparse Cartpole (left) and Mountain Car (right), with an even higher level of distraction ($N_m = 3$).
On Sparse Cartpole, only DBC-normed-IR and DBC-normed-IR-ID were able to solve the task, with the latter being slightly more stable, while DBC-normed performed significantly better than any methods without normalization.
On Mountain Car, all methods struggle to solve the task at this level of distraction; however, DBC-normed-IR-ID performs significantly better than the others, showing the utility of the inverse dynamics regularization.

\begin{figure}
    \centering
    \includegraphics[width=0.49\textwidth]{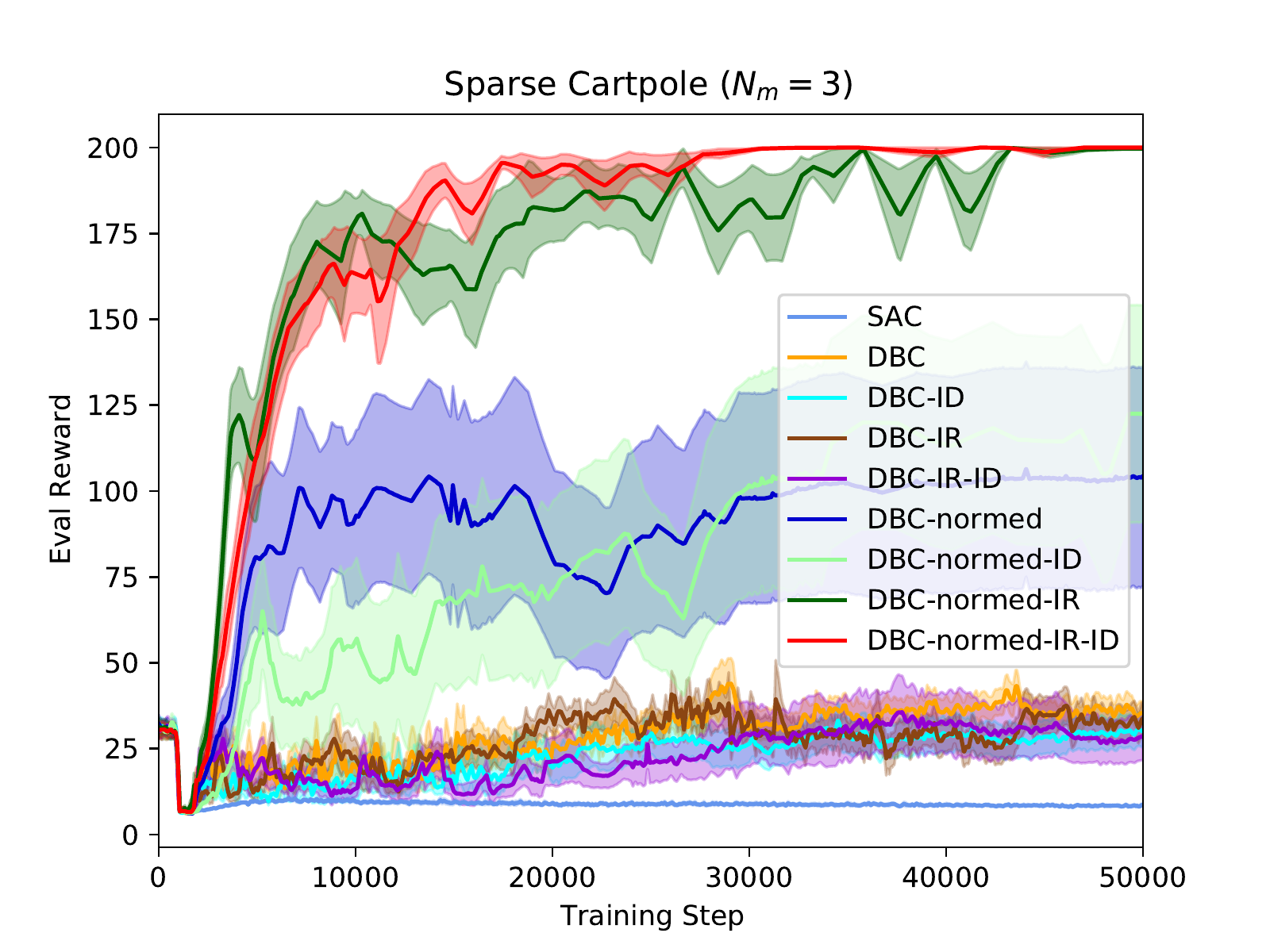}
    \hspace{\fill}
  \includegraphics[width=0.49\linewidth]{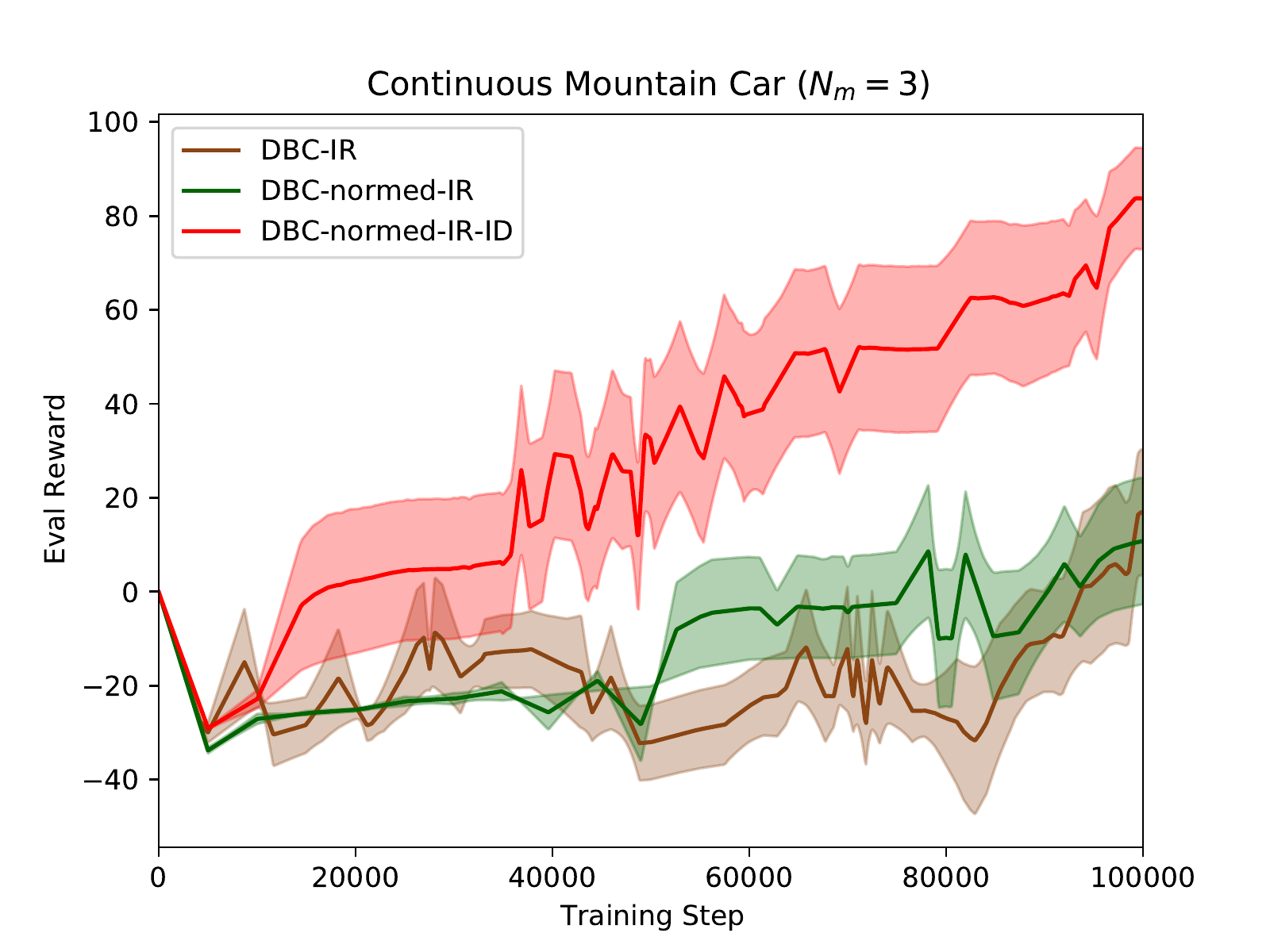} 
    \caption{Additional plots on Sparse Noisy Cartpole (left) and Mountain Car (right) at high distraction ($N_m=3$). Shaded bars are standard errors over 10 seeds.}
    \label{fig:gym-additional-plots}
\end{figure}

\begin{figure}
    \centering
     \includegraphics[width=0.49\textwidth]{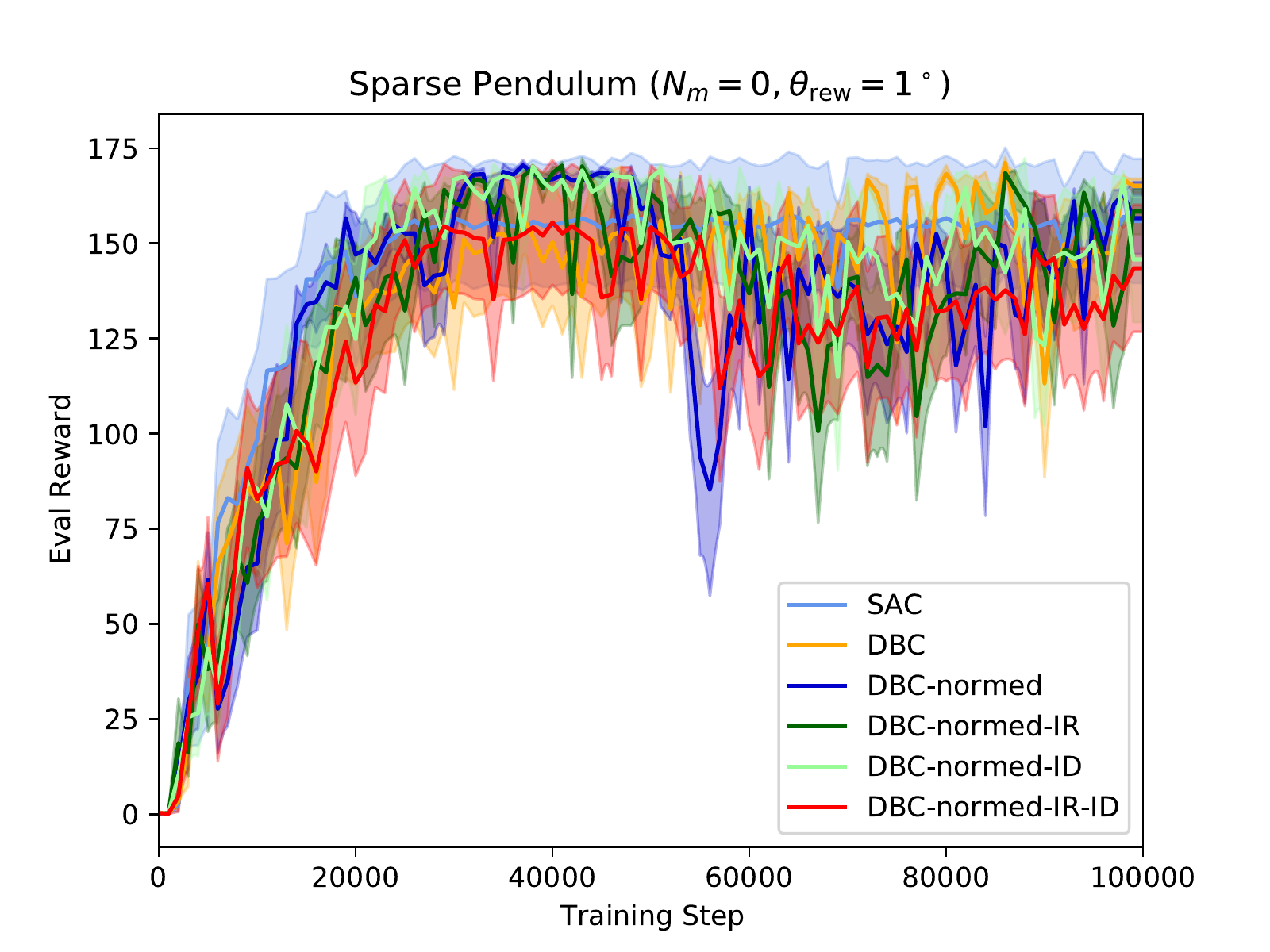}
    \hspace{\fill}
  \includegraphics[width=0.49\linewidth]{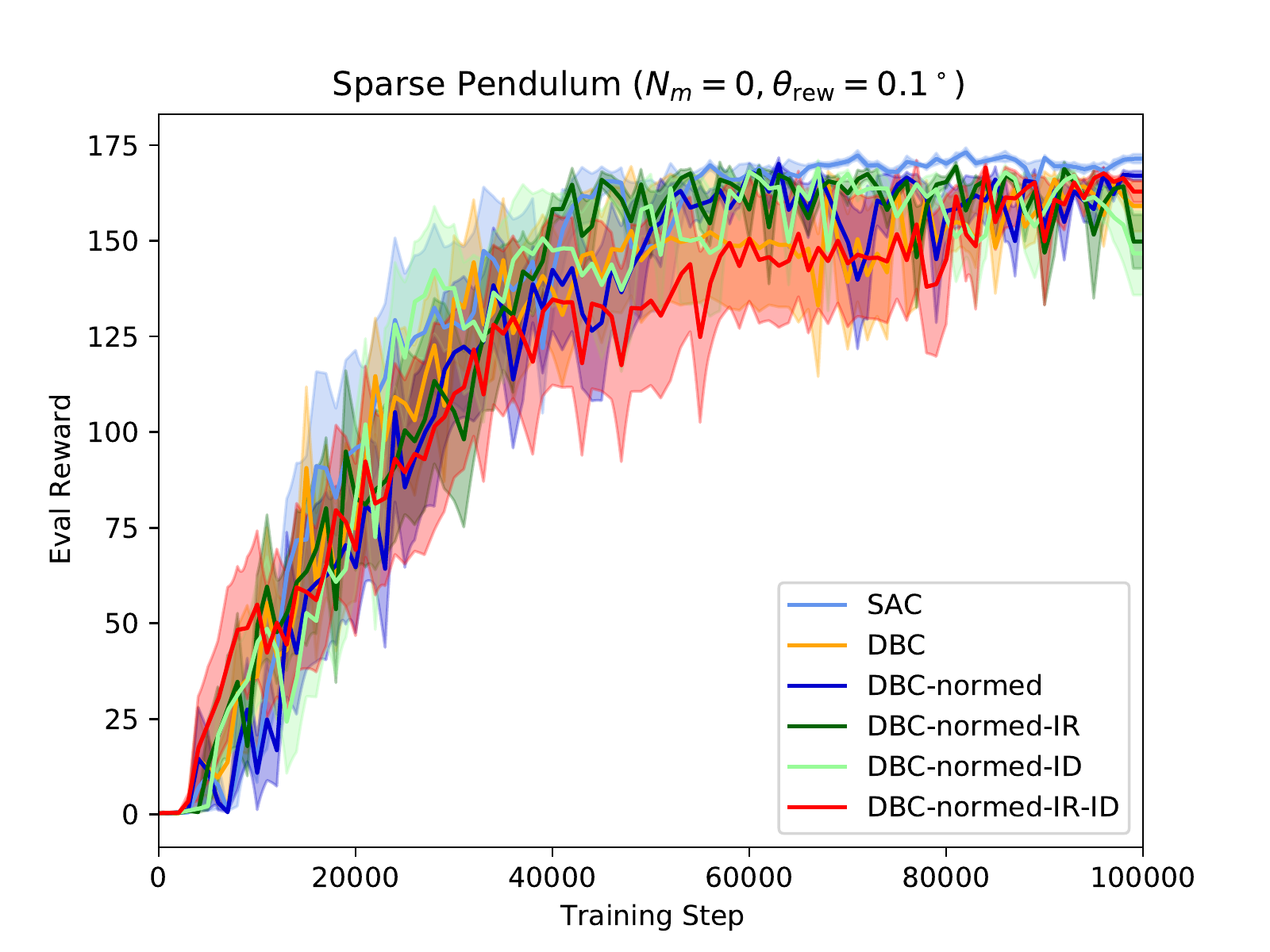} \\
  \includegraphics[width=0.49\linewidth]{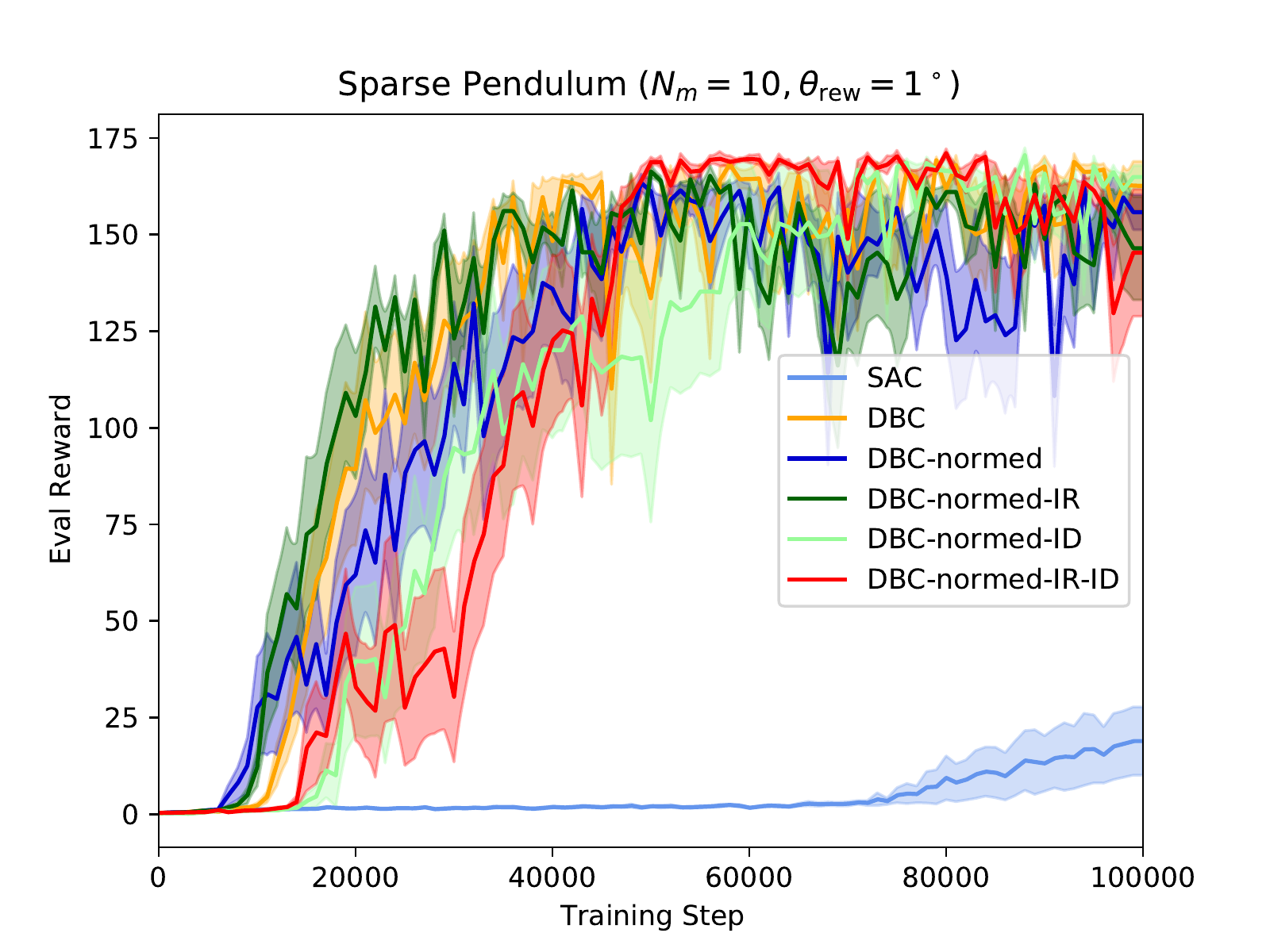}
    \hspace{\fill}
  \includegraphics[width=0.49\linewidth]{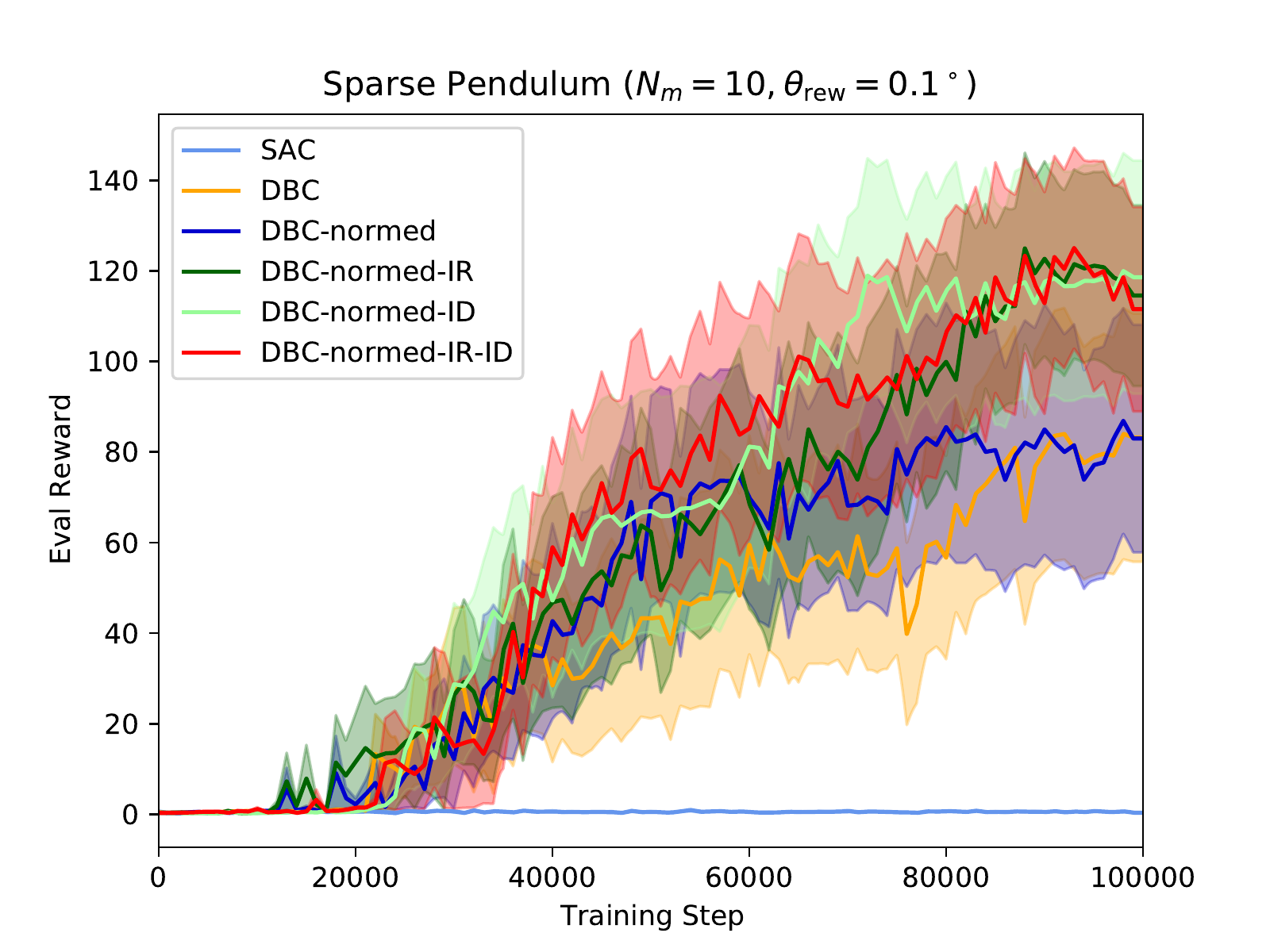}
    \caption{Results on the Sparse Pendulum task with differing levels of distraction and sparsity (shaded bars are standard errors over 10 seeds). }
    \label{fig:pendulum}
\end{figure}

\subsubsection{Hyper-parameters and Architectures for OpenAI Gym Tasks}
\label{sec:additional-results:hpa}

All models use the code generously released alongside the DBC paper \cite{zhang2021invariant} (CC-BY-NC 4.0 licensed), with default hyper-parameters and architectures according to the released code, unless otherwise specified.
DBC and our modifications of it are all built on top of the Soft Actor-Critic implementation therein.
Actor, critic, and encoder learning rates were all 0.001, using the Adam optimizer. 
A replay buffer of 50K was used, 
    with a batch size for all training steps of 512.
All encoders were implemented as multilayer perceptrons (MLPs) with four layers, except for the SparsePendulum task where two layers were used for all normalized approaches (unnormalized DBC still performed better with four). 
Encoder feature dimensionality was set to 
    $\mathrm{dim}(\phi(\mathbf{s})) = 50$. 
When inverse dynamics prediction was needed,
    we implemented $g_I$ as an MLP with two hidden layers (of size 256 and 128), with ELU activations.
Note that for the predictive approximate transition model $\widehat{\mathcal{P}}$, 
    we use a deterministic predictor.
Following the implementation of \cite{zhang2021invariant}, distances in reward and predicted encoded state, for the bisimulation metric, were computed with a Huber loss, defining the value of $q$ as a function of the distance.
However, embedding normalization was computed with the $L_2$ norm.
A discount of $\gamma = 0.99$ was always used.
In all tasks, rewards are bounded as $R \in [0, 1]$.

We remark that all evaluation rewards in plots (per seed) are computed as averages over 10 episodes.

We used a maximum intrinsic reward clamping value of $R_{\mathrm{max}, I} = 0.1$ for all Gym tasks,
and set $\eta_r$ and $\eta_d$ per task as in Table \ref{tab:hprms} below.
Hyper-parameters not left at default ($R_{\mathrm{max}, I}$, $\eta_r$, $\eta_d$, and number of encoder layers and latent dimensionality) were set by searching over a small, manually defined set of values.

\subsection{DeepMind Control Suite}
\label{sec:additional-results:dmc}
For the DeepMind Control Suite \cite{tassa2018deepmind}, our encoder model architecture is identical to the open-source code repository released by \cite{zhang2021invariant}. Namely, a 3x3 convolutional layer with stride 2 is followed by another 3x3 convolution with stride 1 (both with 32 channels), before a fully-connected layer with 50-dimensional output and layer normalization. \texttt{ReLU} activations are used between neural layers. When an inverse dynamics model is used, we use the same architecture as those used for the OpenAI experiments (described in Appendix \ref{sec:additional-results:hpa}), namely, a two-layer MLP. Differently from \cite{zhang2021invariant}, to speed up training, we run 16 environments in parallel, all of which add experience tuples to a shared replay buffer. After every 16 environment steps (i.e., each parallel step), we apply 2 gradient updates. Our hyperparameters for inverse dynamics and intrinsic motivation are given in the last column of Table \ref{tab:hprms}, and were selected by searching over a small, manually defined set of values.

\begin{table}[h]
    \begin{center}
        \begin{tabular}{c|cccc}
                ~    & Cartpole & Pendulum & Mountain Car & DMC \\\hline 
            $\eta_r$ & 2        & 0.1      & 20     & 1      \\
            $\eta_d$ & 1        & 0.1      & 20     & 10
        \end{tabular}
    \end{center}
    \caption{Hyper-parameters used per task for intrinsic reward and inverse dynamics.}
    \label{tab:hprms}
\end{table}

\subsection{Computational Resources and Timing}
\label{sec:additional-results:resources}

All training and evaluation was done on a small set of NVIDIA GPUs (GTX 1080 TI, Titan X, or RTX 2080 TI), less than 10 in total and shared with other users.

OpenAI Gym tasks were run with multiple seeds per GPU (up to the GPU memory limit) during training. 
In this parallel training context, 
which allowed us to complete a Gym task for all methods and seeds (for a single distraction level) within roughly a day on 2-4 GPUs,
we obtain the following approximate timings (in seconds per training iteration).
Cartpole: ${\sim}0.1$ for DBC-based methods and ${\sim}0.07$ for SAC.
Pendulum: 
${\sim}0.09$ for DBC-based methods 
(${\sim}0.1$ with IR+ID present) and
${\sim}0.06$ for SAC.
Mountain Car: ${\sim}0.07$ for DBC-based methods 
and ${\sim}0.04$ for SAC.

For the Deepmind Control (DMC) tasks, experiments were performed on 4 GPUs over the course of a week. Our 16-process parallelization for running MuJoCo \cite{todorov2012mujoco} simulations greatly sped up training, resulting in approximately 0.03, 0.05 and 0.07 seconds per environment step (with 2 gradient updates for every 16 environment step) for SAC, DBC and DBC+IR+ID respectively.

\end{document}